\documentclass[final,12pt]{arxiv2025}

\usepackage{algorithm}
\usepackage{algorithmic}
\usepackage{url}
\usepackage{lastpage}
\usepackage{hyperref}
\newtheorem{assumption}{Assumption}

\usepackage{enumitem,amsfonts,amssymb,mathtools,bm,pifont,bbm,listings,xcolor,threeparttable,ulem,comment}

\usepackage{natbib}
\usepackage{multirow}

\def\Rn{\mathbb{R}}  \def\zero{\mathbf{0}}
\def\fro {\mathsf{F}}

\definecolor{mydarkgreen}{RGB}{39,130,67}
\definecolor{mydarkred}{RGB}{192,25,25}

\usepackage{relsize} 

\newcommand{\beq}{\begin{eqnarray}}
\newcommand{\eeq}{\end{eqnarray}}
\newcommand{\beqq}{\begin{equation}}
\newcommand{\eeqq}{\end{equation}}
\newcommand{\bel}{\begin{align}}
\newcommand{\eel}{\end{align}}
\newcommand{\la}{\langle}
\newcommand{\ra}{\rangle}
\newcommand{\noi}{\noindent}
\newcommand{\nn}{\nonumber}
\def\noi{\noindent}
\def\nn{\nonumber}
\def\la{\langle}
\def\ra{\rangle}

\def\Diag{{\rm{Diag}}}

\newcommand{\step}[1]{\text{\ding{\numexpr#1+171\relax}}}

\def\Deltas{\boldsymbol{\Delta}}

\def\epsilons{\boldsymbol{\epsilon}}

\def\ts{\textstyle}

\def\a{\mathbf{a}}\def\b{\mathbf{b}}\def\g{\mathbf{g}}\def\m{\mathbf{m}}\def\q{\mathbf{q}}\def\s{\mathbf{s}}\def\v{\mathbf{v}}\def\x{\mathbf{x}}\def\y{\mathbf{y}}\def\z{\mathbf{z}}\def\G{\mathbf{G}}\def\P{\mathbf{P}}\def\X{\mathbf{X}}\def\v{\mathbf{v}}

\def\FF{\mathcal{F}}\def\OO{\mathcal{O}}\def\ZZ{\mathcal{Z}}

\def\EEE{\mathbb{E}}\def\GGG{\mathbb{G}}\def\MMM{\mathbb{M}}\def\SSS{\mathbb{S}}

\def\NO{\ding{56}}
\def\YES{\ding{52}}
\newcounter{optalg}

\newcommand{\xixi}[2]{\if\relax\detokenize\expandafter{\romannumeral-`\q#1}\relax#2\else \overset{\step{#1}}{#2}\fi}

\newcommand\hG{{\hat{G}}}

\usepackage{times}

\title[OptEMA]{OptEMA: Adaptive Exponential Moving Average for Stochastic Optimization with Zero-Noise Optimality}

\vspace{10pt}
\coltauthor{
 \Name{Ganzhao Yuan} \Email{yuanganzhao@foxmail.com}\\
 \addr Shenzhen University of Advanced Technology (SUAT), China
}
\begin{document}

\maketitle

\begin{abstract}

Exponential moving averages (EMAs) are a central component of widely used adaptive optimizers such as Adam. However, existing analyses of Adam-style methods often yield suboptimal guarantees in the zero-noise regime, rely on open-loop parameter schedules, or require prior knowledge of smoothness constants. Motivated by these limitations, we introduce OptEMA and analyze two complementary variants: OptEMA-M, which applies an adaptive, decreasing EMA coefficient to the first moment with a fixed second-moment decay, and OptEMA-V, which swaps these roles. At the heart of these variants is a Corrected AdaGrad-Norm coefficient schedule. This formulation renders OptEMA algorithmically closed-loop and Lipschitz-free, meaning its effective stepsizes are trajectory-dependent and require no parameterization via the Lipschitz constant. Under lower-boundedness, unbiasedness, bounded variance, average smoothness, and a bounded stochastic-gradient condition used to control the adaptive normalizers, we prove that both variants achieve the unified noise-adaptive rate $\tilde{\mathcal{O}} \left(T^{-1/2}+\sigma^{1/2}T^{-1/4}\right)$ for the averaged gradient norm. In the zero-noise regime, these bounds automatically reduce to the nearly optimal deterministic rate $\widetilde{\mathcal{O}}(T^{-1/2})$ without manual hyperparameter retuning.



\end{abstract}

\begin{keywords}%
  Stochastic Optimization, Exponential Moving Average, Nonconvex Optimization, Adaptive Gradient Method, Convergence Analysis, Adam
\end{keywords}

\section{Introduction}

\normalem
We study mini-batch stochastic optimization for the objective
\begin{equation}\label{eq:main}
\min_{\x\in\mathbb{R}^{n}} f(\x):=\mathbb{E}_{\xi\sim\mathcal{D}}[f(\x;\xi)] ,
\end{equation}
where $f(\cdot)$ is differentiable and possibly nonconvex, and $\xi$ denotes an i.i.d. sample from an unknown distribution $\mathcal{D}$. Our goal is to find an $\epsilon$-approximate stationary point $\x$ such that $\mathbb{E}[\|\nabla f(\x)\|]\le \epsilon$. Throughout the analysis, we assume lower-boundedness, unbiased stochastic gradients with bounded variance, and average smoothness of \(f\). We additionally impose a bounded stochastic-gradient condition to control the adaptive EMA normalizers and pathwise tracking terms. 

\paragraph{The Dominance of EMA.} Adaptive gradient methods, such as RMSProp \cite{hinton2012neural} and Adam \cite{KingmaBaCILR15}, have become standard tools for training deep neural networks, often showing strong empirical performance compared to vanilla Stochastic Gradient Descent (SGD). A key ingredient behind their practical success is the Exponential Moving Average (EMA) mechanism, which provides a memory-efficient way to accumulate historical gradient information. In particular, EMA is used to form moving estimates of the first moment and the second raw moment of stochastic gradients, thereby inducing momentum-like averaging and coordinate-wise rescaling. Intuitively, EMA can be viewed as a temporal low-pass filter on the gradient sequence \cite{qian1999momentum}: it suppresses rapid stochastic fluctuations while retaining more persistent directional information. By smoothing updates and damping oscillations, EMA often yields more stable optimization dynamics, which is particularly valuable in the highly nonconvex loss landscapes encountered in modern deep learning.

\paragraph{Zero-Noise Optimality.} In this paper, zero-noise optimality means that the stochastic convergence guarantee automatically recovers the deterministic first-order rate when the variance level vanishes. More precisely, for a randomly selected output \(x_{\rm out}\), a noise-adaptive optimizer is said to be zero-noise optimal if its bound reduces to $\mathbb{E}\bigl[\|\nabla f(x_{\rm out})\|\bigr] \le \widetilde{\mathcal O}(T^{-1/2})$ when $\sigma=0$. This property is stronger than merely proving convergence under stochastic gradients, because the bound must improve continuously as the noise level decreases and must not retain a residual stochastic term in the deterministic limit.

\paragraph{Limitations of Existing EMA Analyses.} Despite their empirical success, the theoretical understanding of EMA-based optimizers remains incomplete. In particular, several Adam-type analyses do not automatically recover the deterministic rate $\mathcal{O}(T^{-1/2})$ when the stochastic variance vanishes ($\sigma=0$). Instead, available guarantees for Adam-type methods often yield the slower rate $\mathcal{O}(T^{-1/4})$ even in deterministic or zero-noise regimes \cite{chen2019convergence, chen2022towards,huang2021super,zhang2022adam,liang2025convergence}. This indicates that existing bounds may not fully capture the noise-adaptive behavior expected from an optimizer in the zero-noise or lower-noise limit. Moreover, many EMA-based adaptive methods use fixed or pre-scheduled EMA coefficients and global learning-rate schedules. These open-loop choices are not directly coupled to the observed optimization trajectory, and therefore may require careful tuning when the smoothness or noise level is unknown. These limitations motivate a closed-loop EMA mechanism whose coefficients and effective stepsizes are determined by trajectory-dependent feedback.

\paragraph{Motivation and Contributions.} These theoretical limitations and practical tuning burdens motivate the search for an EMA-based optimizer that is both trajectory-adaptive and theoretically noise-adaptive. To this end, we propose OptEMA (adaptive exponential moving average with zero-noise optimality). The main novelty is structural rather than purely rate-based: compared with AdaGrad-Norm, our goal is not to improve the worst-case rate under the same bounded-gradient assumption, but to embed a corrected AdaGrad-Norm feedback mechanism into the standard Adam-style EMA architecture. Our contributions are summarized as follows.

\begin{itemize}

\item \textbf{Closed-loop EMA design.}
We redesign the standard EMA mechanism as a closed-loop feedback rule. This leads to two complementary variants: OptEMA-M adapts the first-moment EMA coefficient while keeping the second-moment coefficient fixed, whereas OptEMA-V adapts the second-moment EMA coefficient while keeping the first-moment coefficient fixed. In both cases, the EMA coefficients are coupled to the observed optimization trajectory rather than prescribed by an open-loop schedule.

\item \textbf{Corrected AdaGrad-Norm coefficient schedule.}
We introduce a data-dependent coefficient schedule that preserves the monotone and dimension-free nature of AdaGrad-Norm \cite{ward2020adagrad} while correcting its overly aggressive decay through a historical-gradient averaging factor. This correction reduces sensitivity to occasional gradient spikes and provides the key mechanism behind the zero-noise behavior of OptEMA.

\item \textbf{Noise-adaptive convergence guarantees.}
Under lower-boundedness, unbiasedness, bounded variance, average smoothness, and bounded stochastic gradients, we prove that both OptEMA-M and OptEMA-V achieve $\widetilde{\mathcal O}\bigl(T^{-1/2}+\sigma^{1/2}T^{-1/4}\bigr)$ for the averaged gradient norm. Consequently, when \(\sigma=0\), the same bounds reduce to the nearly deterministic rate \(\widetilde{\mathcal O}(T^{-1/2})\).
\end{itemize}

\paragraph{Organization.} The remainder of this paper is organized as follows. Section \ref{sect:related} reviews the related literature, and Section \ref{sect:prel} introduces the assumptions and technical preliminaries. Section \ref{sect:proposed} presents the OptEMA framework and its variants, while Section \ref{sect:rate} provides the iteration complexity analysis. Section \ref{sect:conc} concludes the paper. Detailed proofs are deferred to the appendix.

\section{Related Work} \label{sect:related}

This section reviews the literature most relevant to our work, including stochastic gradient methods, adaptive optimization, momentum-based variance reduction, and recent Lipschitz-free and noise-adaptive algorithms.

\paragraph{Stochastic Gradient Methods.} Stochastic first-order methods, including SGD and momentum-based variants, remain the foundation of large-scale learning. For smooth nonconvex objectives, their analyses typically assume unbiased stochastic gradients, bounded variance, lower-bounded objectives, and suitable smoothness conditions. The achievable convergence rate depends critically on the adopted smoothness model. Under average smoothness, the best known lower bound scales as $\Omega(T^{-1/4})$, whereas stronger individual smoothness can support faster rates such as $\Omega(T^{-1/3})$ \cite{arjevani2023lower}. These distinctions are central to understanding the gap between classical stochastic optimization theory and modern deep-learning optimizers.

\paragraph{Adaptive Gradient Methods.} Adaptive gradient methods were developed to better handle stochastic gradients with heterogeneous coordinate scales, especially in sparse-data regimes. AdaGrad \cite{duchi2011adaptive}  introduced data-dependent per-coordinate stepsizes through accumulated gradient statistics, which naturally amplify updates on infrequent coordinates and damp those on frequently active ones. RMSProp \cite{hinton2012neural} replaced monotone accumulation with an EMA of squared gradients to avoid overly aggressive decay. Adam \cite{KingmaBaCILR15} then combined first-moment momentum with second-moment EMA scaling, and has since become the dominant adaptive optimizer in practice. Later variants refined this design in different ways: AdamW \cite{loshchilovdecoupled} decoupled weight decay from adaptive updates, while AMSGrad \cite{reddi2018convergence} introduced a running maximum of second moments to address known non-convergence phenomena in vanilla Adam.

\paragraph{Momentum-based Variance Reduction.} STORM \cite{cutkosky2019momentum} was among the first methods to combine recursive gradient estimation with a momentum-style update in stochastic optimization. STORM$^+$ \cite{levy2021storm} further demonstrated that, with appropriately designed adaptive stepsize strategies, such estimators can achieve both Lipschitz-free and noise-adaptive guarantees. META-STORM \cite{liu2022meta} relaxed the bounded-objective assumption, while AdaSTORM \cite{JiangAdaStorm} further removed both the bounded-objective and bounded-gradient assumptions, although it relies on a hybrid stepsize scheme that combines a constant step size of order $\mathcal{O}(T^{-1/3})$ with a momentum-adapted one. SuperAdam \cite{huang2021super} offered a broader perspective by integrating momentum, adaptive scaling, and matrix preconditioning. MARS \cite{yuan2025mars} revisited recursive momentum by introducing an additional variance-reduction correction term to improve the quality of the gradient estimator. Together, these works help bridge modern stochastic optimization theory and practical adaptive optimization.

\paragraph{Adam-Type versus STORM-Type Methods.} Adam-type and STORM-type methods represent two complementary directions in stochastic optimization. STORM-type methods can achieve faster asymptotic rates, such as $\mathcal{O}(T^{-1/3})$, but typically rely on individual smoothness, which is stronger than the average smoothness commonly used in analyses of Adam-type methods. They also incur additional computational overhead, since maintaining the recursive variance-reduced estimator requires evaluating both $\nabla f(\x_t,\xi_t)$ and $\nabla f(\x_{t-1},\xi_t)$ on the same mini-batch at each iteration. In large-scale deep learning, this extra cost can substantially reduce their practical appeal. As a result, STORM-type methods provide important theoretical benchmarks, while Adam-type methods remain the dominant practical choice. OptEMA follows this Adam-style design: it preserves the standard one-gradient EMA update rule and operates under average smoothness. The OptEMA analysis gives the unified noise-adaptive rate $\widetilde{\mathcal O}\bigl(T^{-1/2}+\sigma^{1/2}T^{-1/4}\bigr)$ (see Theorems~\ref{the:it:EMA:M} and~\ref{the:it:EMA:V}), compared with the STORM-type dependence $\widetilde{\mathcal O}\bigl(T^{-1/2}+\sigma^{1/3}T^{-1/3}\bigr)$ in \cite{levy2021storm}. This comparison is meaningful because OptEMA is analyzed under average smoothness and uses only one stochastic gradient per iteration, whereas many STORM-type recursive estimators rely on individual smoothness and require two gradient evaluations on the same mini-batch. At the level of the stochastic term, we have $\sigma^{1/2}T^{-1/4} < \sigma^{1/3}T^{-1/3}\Longleftrightarrow
\sigma<T^{-1/2}$. Therefore, up to logarithmic factors, OptEMA enjoys a smaller stochastic term in the lower-noise regime \(0<\sigma<T^{-1/2}\), while retaining the standard one-gradient Adam-style EMA structure.

\paragraph{Lipschitz-Free and Noise-Adaptive Algorithms.} Representative methods in this regime, such as AdaGrad-Norm \cite{ward2020adagrad}, and STORM+ \cite{levy2021storm}, use the optimization trajectory as feedback to adapt the learning rate. These methods are both Lipschitz-free and noise-adaptive: they do not require prior knowledge of the Lipschitz smoothness constant and automatically adjust to the noise level $\sigma$. In particular, when $\sigma=0$, their bounds recover the deterministic convergence rate for smooth nonconvex optimization. However, these approaches remain structurally different from standard EMA-based optimizers used in deep learning. AdaGrad-Norm relies on cumulative averaging rather than exponential moving averages (EMA), while the analyses of STORM$^+$ and META-STORM are tied to variance-reduction architectures. By contrast, OptEMA preserves the standard EMA structure of Adam-type methods; both OptEMA-M and OptEMA-V achieve the unified rate $\widetilde{\mathcal{O}}(T^{-1/2} + \sigma^{1/2}T^{-1/4})$ and therefore retain zero-noise optimality.

\paragraph{Assumptions in Different Algorithms.} OptEMA is analyzed under average smoothness, a lower-bounded objective, unbiased stochastic gradients with bounded variance, and the bounded stochastic gradient condition. This assumption is used to control the adaptive EMA normalizers and the pathwise constants in the current proof. By contrast, several STORM-family analyses \cite{levy2021storm,liu2022meta} additionally impose the stronger individual smoothness condition, bounded objective values (BF), or Hessian-type boundedness (BH), which may be restrictive in large-scale learning. Similarly, under average smoothness, SuperAdam also relies on Hessian-type boundedness assumptions \cite{huang2021super}.

\paragraph{Other Recent Developments.} Recent work has also emphasized hardware efficiency and structural priors. Sign-based methods such as signSGD \cite{bernstein2018signsgd} and Lion \cite{chen2023symbolic} reduce memory and communication costs, while AdaFactor \cite{shazeer2018adafactor} compresses second-moment statistics through factorization. Matrix-structured optimizers, such as Muon \cite{chang2025convergence,MuonICLR26}, use orthogonalized gradient updates to improve conditioning. These efficiency- and structure-oriented approaches, however, are typically paired with open-loop stepsize rules and thus offer limited adaptivity to heterogeneous noise. In a different direction, methods such as D-Adaptation \cite{defazio2023learning} and DoG \cite{ivgi2023dog} use the optimization trajectory itself as feedback to adapt the learning rate. This trajectory-aware perspective is closely related to the closed-loop EMA design of OptEMA.

\paragraph{Summary.} Existing adaptive methods are frequently bottlenecked by weak zero-noise optimality, restrictive boundedness assumptions, rigid open-loop step sizes, and the need for known Lipschitz constants. The proposed OptEMA framework addresses these issues from the closed-loop EMA perspective: it is Lipschitz-free, noise-adaptive, and avoids bounded objective and Hessian-type assumptions, while the present analysis still retains bounded stochastic gradients. A comprehensive comparison of existing stochastic gradient methods is detailed in Table \ref{tab:main}.

\begin{table*}[t]
\renewcommand{\arraystretch}{1.2}
\centering
\caption{Comparison of existing mini-batch first-order stochastic gradient methods for finding an $\epsilon$-approximate solution $\mathbf{x}$ satisfying $\EEE[\|\nabla f(\mathbf{x})\|]\leq \epsilon$. The notation $\widetilde{\mathcal{O}}(\cdot)$ hides polylogarithmic factors, while $\mathcal{O}(\cdot)$ hides constants. All methods assume unbiased gradients with bounded variance, a lower-bounded objective, and smoothness; the column ``Additional Assumptions'' lists extra conditions used in the stated guarantees. }\label{tab:main}

\scalebox{0.59}{\begin{tabular}{|p{6.2cm}|p{1.8cm}|p{1.6cm}|p{2.0cm}|p{2.0cm}|p{2.2cm}|p{2.29cm}|p{3.9cm}|}
\hline
Algorithm                 & Smoothness Assum.  & Lipschitz-Free & 1st Moment Coef. ($\alpha_t$) $^a$ & 2nd Moment Coef. ($\beta_t$)$^a$ & Stepsize ($\gamma_t$) $^a$ &   Additional Assumptions $^b$    & Convergence Rate  \\
\hline\hline
STORM \cite{cutkosky2019momentum} & Individual & \NO$^c$   & Ada & --- & Ada  & BG, BF & $\widetilde{\mathcal{O}}(1/\sqrt{T} + \sigma^{1/3} T^{-1/3})$ \\
\hline
STORM$^+$ \cite{levy2021storm} & Individual & \YES & Ada& ---& Ada  & BG, BF & ${\mathcal{O}}(1/\sqrt{T} + \sigma^{1/3} T^{-1/3})$ \\
\hline
{\sf SuperADAM-I} \cite{huang2021super} & Individual &  \NO$^c$ &  Const & --- & Open &  BH & $\widetilde{\mathcal{O}}(T^{-1/3})$  \\
\hline
MARS \cite{yuan2025mars}& Individual & \YES  & Open & --- & Open   & BG & $\widetilde{\mathcal{O}}(T^{-1/3})$ \\
\hline
META-STORM \cite{liu2022meta}& Individual & \YES  & Ada & --- & Ada   & BG & $\widetilde{\mathcal{O}}(1/\sqrt{T} + \sigma^{1/3} T^{-1/3})$ \\
\hline
AdaSTORM \cite{JiangAdaStorm}  & Individual & \YES & Const & --- & Const + Ada $^d$ & --- & $\mathcal{O}(T^{-1/3})$ \\
\hline
\hline
{\sf AdaFom} \cite{chen2019convergence} & Average & \NO & Const& Open& Open &  BG & $\widetilde{\mathcal{O}}(T^{-1/4})$  \\
\hline
{\sf Practical Adam} \cite{chen2022towards} & Average & \NO & Open& Const & Open &  BG, $\alpha_t > \beta^2$ & $\widetilde{\mathcal{O}}(T^{-1/4})$  \\
\hline
{\sf SuperADAM-A} \cite{huang2021super} & Average & \NO$^b$ & Const & --- & Open &  BH & $\widetilde{\mathcal{O}}(T^{-1/4})$  \\
\hline
{\sf Vanilla Adam}  \cite{zhang2022adam} & Average & \NO & Open& Const& Open &  BG, $\alpha_t\leq \sqrt{\beta}$ & $\widetilde{\mathcal{O}}(T^{-1/4})$  \\
\hline
{\sf AdaGrad-Norm} \cite{ward2020adagrad} & Average & \YES & --- & --- & Ada $^e$  & BG & $\widetilde{\mathcal{O}}(1/\sqrt{T} + \sigma^{1/2} T^{-1/4})$  \\
\hline
OptEMA-M [ours] & Average & \YES & Ada & Const & Ada  & BG & $\widetilde{\mathcal{O}}(1/\sqrt{T} + \sigma^{1/2} T^{-1/4})$  \\
\hline
OptEMA-V [ours] & Average & \YES & Const & Ada & Ada & BG  & $\widetilde{\mathcal{O}}(1/\sqrt{T} + \sigma^{1/2} T^{-1/4})$   \\
\hline
\end{tabular}}
\vspace{5pt}
\scalebox{0.99}{
{\footnotesize
\renewcommand{\arraystretch}{0.9}
\begin{tabular}{@{}p{\linewidth}@{}}
\textbf{Note a:} Const: fixed constant. Open: open-loop stepsize. Ada: adaptive stepsize using local geometry. \\
\textbf{Note b:}  BF: bounded objective; BG: bounded stochastic gradients a.s.; BH: lower-bounded Hessian eigenvalues. \\
\textbf{Note c:} STORM and SuperADAM are not Lipschitz-free, as they depend on the Lipschitz constant for parameterization. \\
\textbf{Note d:} AdaSTORM uses a hybrid step size (constant + momentum-adapted) and is therefore not fully adaptive. \\
\textbf{Note e:} AdaGrad-Norm uses cumulative scalar normalization rather than EMA-based moment updates. \\
\end{tabular}
}
}

\end{table*}

\section{Preliminaries} \label{sect:prel}
We begin by stating the standard assumptions used throughout the paper.

\begin{assumption}[Lower-Bounded Objective]\label{ass:F}
There exists a constant $f_* > -\infty$ such that $f(\x)\ge f_*$, $\forall\,\x\in\mathbb{R}^n$.

\end{assumption}

\begin{assumption}[Unbiased Stochastic Gradient with Bounded Variance]\label{ass:sigma}
There exists a constant $\sigma \geq 0$ such that, for all $\x\in\mathbb{R}^n$, $\mathbb{E}\!\left[\nabla f(\x;\xi)\right]=\nabla f(\x)$, $\mathbb{E}\!\left[\|\nabla f(\x;\xi)-\nabla f(\x)\|_2^2\right]\le \sigma^2$.
\end{assumption}

\begin{assumption}[Smoothness of the Objective]\label{ass:f}
We assume that $f$ is (average) $L$-smooth, namely, $\|\nabla f(\x)-\nabla f(\y)\| \le L\|\x-\y\|,\,\forall\,\x,\y\in\mathbb{R}^n$.

\end{assumption}

\begin{assumption}[Bounded stochastic gradients] \label{ass:BG}
Let $\g_t:=\nabla f(\x_t;\xi_t)$ and assume
$\EEE[\g_t \mid \FF_{t-1}]=\nabla f(\x_t)$.
There exists a constant $\hat{G}>0$ such that $\|\g_t\|\leq \hG$ almost surely for all $t\geq 1$. Consequently, by Jensen's inequality, $\|\nabla f(\x_t)\| = \|\EEE[\g_t\mid \FF_{t-1}]\| \le \hG$.
\end{assumption}

\begin{remark}
\ding{182} Assumptions \ref{ass:F}--\ref{ass:f} are standard in stochastic optimization; see, e.g., \cite{ghadimi2016mini,cutkosky2019momentum}. Assumption \ref{ass:f} implies the classical descent lemma: $f(\x)\le f(\y)+\la \nabla f(\y),\,\x-\y\ra+\frac{L}{2}\|\x-\y\|_2^2,\, \forall\,\x,\y\in\mathbb{R}^n$, see, for example, Lemma 1.2.3 in \cite{nesterov2003introductory}. \ding{183} Many STORM-type methods adopt the stronger \emph{individual smoothness} condition $\|\nabla f(\x;\xi)-\nabla f(\y;\xi)\| \le L\|\x-\y\|$, $\forall\,\x,\y\in\mathbb{R}^n$, which is strictly stronger than the average smoothness condition in Assumption \ref{ass:f}. \ding{184} In this paper, \emph{Lipschitz-free} means that the algorithm does not use the smoothness constant $L$ for parameter tuning, although the analysis still assumes that $f$ is $L$-smooth. Assumption \ref{ass:BG} is the only boundedness condition retained in the current proof; removing it is an important direction for future work.  

\end{remark}
\paragraph{Notation.}
Unless otherwise specified, vector operations are understood element-wise. For any $\x,\y\in\mathbb{R}^n$, the expressions $\x+\y$, $\x-\y$, $\x\odot\y$, and $\tfrac{\x}{\y}$ denote element-wise addition, subtraction, multiplication, and division, respectively. Additional notation and auxiliary lemmas are collected in Appendix \ref{app:sect:note}.

\section{The OptEMA Algorithms}
\label{sect:proposed}

In this section, we present our unified algorithmic framework, termed {OptEMA} (Exponential Moving Average with zero-noise Optimality), summarized in Algorithm~\ref{alg:main}. The central idea of {OptEMA} is to equip the standard EMA-based update rule with a trajectory-dependent \emph{closed-loop} mechanism driven by the observed optimization path.

To disentangle the roles of the first- and second-moment estimators, we introduce two complementary variants. The first variant, {OptEMA-M}, assigns adaptivity to the first-moment EMA coefficient while keeping the second-moment decay fixed. The second variant, {OptEMA-V}, instead assigns adaptivity to the second-moment EMA coefficient while keeping the first-moment decay fixed. These two variants provide symmetric yet distinct realizations of the same design principle. Here $\alpha_t$ and $\beta_t$ denote the weights assigned to the current stochastic gradient and its squared version, respectively; equivalently, the usual EMA decay factors \cite{KingmaBaCILR15} are $1-\alpha_t$ and $1-\beta_t$.

\begin{figure}[t!]
\refstepcounter{optalg}
\centering
$$
\begin{array}{l}
\hline \mathbf{Algorithm\ \theoptalg}\ \text{OptEMA} \\
\hline
01:\ \textbf{Input: } \text{base learning rate } \theta > 0, \text{initial point } \x_1, \text{parameters } \varepsilon,\alpha_{\text{fix}},\beta_{\text{fix}}\in (0,1], \tau \in [0,1].  \\
02:\ \text{(Default:}\ \tau=1,\,\varepsilon=10^{-5},\,\alpha=0.1,\,\beta=0.001,\ \text{and}\ \theta=1)\\
03:\ \textbf{Initialize: } \x_0=\x_1, \, \m_0 = \mathbf{0}, \, \v_0 = \mathbf{0}, \, \rho_0=\gamma_0=1.\\
04:\ \textbf{for } t = 1\,\textbf{to}\,T\,\textbf{do} \\
05:\ \quad \text{Draw a mini-batch } \xi_t \text{ and compute } \g_t:=\nabla f(\x_t;\xi_t). \\
06:\ \quad \text{Update statistics: } \rho_t = \sqrt{\tfrac{1 + \tfrac{\tau}{t}\sum_{i=1}^t \| \g_i \|^2}{1 + \sum_{i=1}^t \| \g_i \|^2}}. \\
07:\ \quad \text{Set EMA weights: } (\alpha_t, \beta_t) = \begin{cases}
(\rho_t, \beta_{\text{fix}})   & \text{if OptEMA-M} \\
(\alpha_{\text{fix}},\rho_t ) & \text{if OptEMA-V}
\end{cases} \\
08:\ \quad \text{Update EMA: } \m_{t} = (1-\alpha_t) \m_{t-1} + \alpha_t \g_t, \quad \v_{t} = (1-\beta_t) \v_{t-1} + \beta_t \g_t^2. \\
09:\ \quad \text{Compute update stepsize: } \gamma_t = \begin{cases}
\min\big(\alpha_t, \sqrt{\alpha_t} \big(1 + \sum_{j=1}^t \|\m_j\|^2 \big)^{-1/2}  \big)  & \text{if OptEMA-M} \\
\big(1 + \sum_{j=1}^t \|\m_j\|^2 \big)^{-1/2} & \text{if OptEMA-V}
\end{cases} \\
10:\ \quad \text{Update parameters: } \x_{t+1} = \x_t - \theta \gamma_t \cdot \frac{\m_{t}}{\varepsilon+ \sqrt{\v_{t}}}. \\
11:\ \textbf{end for} \\
\hline
\end{array}
$$
\label{alg:main}
\end{figure}

\subsection{A Unified Closed-Loop Framework}

The goal of {OptEMA} is to construct a closed-loop adaptive stepsize mechanism while preserving the canonical exponential moving average structure used in Adam-type methods.

At iteration $t$, the algorithm computes a stochastic gradient $\g_t := \nabla f(\x_t;\xi_t)$ from one sample or mini-batch and forms the accumulated gradient energy and the feedback coefficient:
\beq
\ts \rho_t = \sqrt{\frac{1 + \frac{\tau}{t}\sum_{i=1}^t \| \g_i \|^2}{1 + \sum_{i=1}^t \| \g_i \|^2}},
\eeq
\noi where \(\tau \in [0,1]\). We refer to \(\rho_t\), with the default choice \(\tau=1\), as the \emph{Corrected AdaGrad-Norm coefficient schedule}. To see its relation to standard AdaGrad-Norm \cite{ward2020adagrad}, define $a_t := (1+G_t)^{-1/2}$, where $G_t := \sum_{i=1}^t \|g_i\|^2$. Then $\rho_t = \left(1+\frac{\tau G_t}{t}\right)^{1/2} a_t$. Thus, \(\rho_t\) can be interpreted as the standard AdaGrad-Norm coefficient multiplied by a historical-gradient averaging correction. The denominator \(1+G_t\) keeps the usual self-normalized suppression effect, while the numerator $1+\tau G_t/t$ offsets excessive decay when the accumulated gradient energy is dominated by a few large past gradients.

This dynamic schedule has three advantages. \ding{182} \textbf{Noise-adaptive suppression:} It guarantees noise-adaptive suppression in high-noise settings (where $\rho_t = \mathcal{O}(1/\sqrt{t})$), yet stabilizes to a constant near stationary points when gradients vanish, ensuring zero-noise optimality. \ding{183} \textbf{Robustness to Gradient Spikes:} In the presence of occasional gradient spikes, the numerator partially offsets the impact and reduces the immediate suppression of subsequent stepsizes, while the cumulative denominator still records historical gradients. \ding{184} \textbf{Dimensional Consistency:} The construction maintains matching dimensions between the numerator and the denominator, providing a unified foundation for subsequent noise-adaptive analyses.

These desirable properties are rigorously formalized in the following lemma, which establishes useful bounds and the monotonic non-increase of $\rho_t$.

\begin{lemma} \label{lemma:corrected} (Proof in Appendix \ref{app:lemma:corrected}) Let $\rho_t = \sqrt{ (1 + \tfrac{\tau}{t}\GGG_t) / (1 + \GGG_t)}$, where $\GGG_t:=\sum_{i=1}^t \|\g_i\|^2$ and $\tau\in[0,1]$. Define $\rho_0=1$ and $\hat{g}_t = \max_{1\le i\le t} \|\g_i\|$. For all $t\geq 1$, we have: \textbf{(a)} $\sqrt{\frac{\tau}{t}}\le \rho_{t}\le 1$. \textbf{(b)} $1\le \tfrac{\rho_{t-1}}{\rho_{t}} \leq \sqrt{ 1 + {\hat g_{t}^2}/(1 + \GGG_{t-1})}$. \textbf{(c)} $\rho_{t}^2 \le \tfrac{1 + \hat{g}_{t}^2}{1 + \GGG_t}$. \textbf{(d)} $(\rho_{t-1}-\rho_t)^2/\rho_t  \le \tfrac{\hat{g}_{t}^3}{1 + \GGG_t}$.

\end{lemma}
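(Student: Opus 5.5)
The plan is to reduce everything to elementary algebra in two scalar quantities. Write $A_t:=\GGG_t/t$ for the running average of $\|\g_i\|^2$, so that $\rho_t^2=(1+\tau A_t)/(1+\GGG_t)$, and write $x:=\|\g_t\|^2$, so that $\GGG_t=\GGG_{t-1}+x$ and $A_{t-1}-A_t=(A_{t-1}-x)/t$. Two facts will be used repeatedly: $A_{t-1},A_t\le \hat g_t^2$ (an average is at most the maximum) and $\tau\le 1$. The case $t=1$, where $\rho_0=1$ is defined separately, is checked by hand in each part. For $t=1$ the quantity in (d) reduces to a one-variable inequality in $x=\|\g_1\|^2$, using $\rho_1\ge (1+x)^{-1/2}$.

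\textbf{(a) and (c).} For the upper bound in (a), $\tau/t\le 1$ makes the numerator at most the denominator. For the lower bound, $1+\tau\GGG_t/t\ge (\tau/t)(1+\GGG_t)$ is equivalent to $1\ge \tau/t$. Part (c) follows at once from $\tau A_t\le A_t\le \hat g_t^2$.

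\textbf{(b).} I will compute the key difference
\[
D:=(1+\tau A_{t-1})(1+\GGG_t)-(1+\tau A_t)(1+\GGG_{t-1})=(1+\tau A_t)\,x+\tfrac{\tau}{t}(A_{t-1}-x)(1+\GGG_t),
\]
which gives the identity $\rho_{t-1}^2/\rho_t^2=1+D/\big((1+\tau A_t)(1+\GGG_{t-1})\big)$.
\begin{itemize}
\item For the lower bound $\rho_{t-1}/\rho_t\ge 1$: expanding the other way, $D\ge 0$ reduces to $1+\tau\GGG_{t-1}/(t-1)\ge \tau/t+\tau\GGG_{t-1}/t$, which holds since $\tau\le1$.
\item For the upper bound, it suffices to show $D\le \hat g_t^2(1+\tau A_t)$. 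If $x\ge A_{t-1}$, the second term of $D$ is nonpositive and we conclude from $x\le \hat g_t^2$.
\item Otherwise, use $A_{t-1}-x\le \hat g_t^2-x$ together with $\tau(1+\GGG_t)/t=\tau/t+\tau A_t\le 1+\tau A_t$.
\end{itemize}

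\textbf{(d).} This is the step I expect to be the main obstacle, because a naive combination of (b) and (c) leaves a stray factor. Set $r:=\rho_{t-1}/\rho_t$ and $u:=\hat g_t^2/(1+\GGG_{t-1})$, so $(\rho_{t-1}-\rho_t)^2/\rho_t=\rho_t(r-1)^2$ with $1\le r\le\sqrt{1+u}$. My first attempt will use $(\rho_{t-1}-\rho_t)^2\le(\rho_{t-1}^2-\rho_t^2)(r-1)/(r+1)$ together with the exact identity
\[
\rho_{t-1}^2-\rho_t^2=\frac{D}{(1+\GGG_t)(1+\GGG_{t-1})}.
\]
This places the required $1+\GGG_t$ in the denominator. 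The remaining task is to absorb $D\,(r-1)/\big((r+1)(1+\GGG_{t-1})\rho_t\big)$ into $\hat g_t^3$, using $r-1\le\min(\sqrt u,u/2)$ and $\rho_t\ge\sqrt{(1+\tau A_t)/(1+\GGG_t)}$.

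I would then split into two cases. When $\hat g_t\ge 1$, there is an extra factor $\hat g_t$ of slack, and the crude bound $(r-1)^2\le u$ together with $\rho_t\le 1$ should suffice after comparing $1+\GGG_{t-1}$ with $1+\GGG_t\le 1+\GGG_{t-1}+\hat g_t^2$. When $\hat g_t<1$, the relative change $u$ is small, and the sharper $(r-1)\le u/2$ should give the cubic power. If this case split does not close cleanly, the fallback is to prove the one-variable inequality obtained by substituting the extreme admissible values of $\rho_t$ and $r$, exactly as in the $t=1$ check.
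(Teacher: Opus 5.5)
Your arguments for (a)--(c) are fine. In (b), your $D$ is the same quantity the paper expands as $\frac{\tau\GGG_{t-1}(1+\GGG_t)}{t(t-1)}+\frac{(t-\tau)\|\g_t\|^2}{t}$. That is a sum of two nonnegative terms, which gives both monotonicity and $D\le\hat g_t^2(1+\tau\GGG_t/t)$ without a case split.

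The gap is in (d), in your case $\hat g_t\ge 1$. You bound $\rho_t(r-1)^2$ by $u$, using $\rho_t\le 1$ and $(r-1)^2\le u$. For this to reach the target you need $\hat g_t^2/(1+\GGG_{t-1})\le\hat g_t^3/(1+\GGG_t)$, i.e.\ $1+\GGG_t\le\hat g_t(1+\GGG_{t-1})$. This fails whenever a large gradient follows a small history: $(1+\GGG_t)/(1+\GGG_{t-1})$ can be as large as $1+\hat g_t^2$, which more than uses up the ``extra factor $\hat g_t$ of slack.''

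Concretely, take $\tau=0$, $\g_1=\cdots=\g_{t-1}=\zero$ and $\|\g_t\|=G$. Then $\rho_{t-1}=1$, $\rho_t=(1+G^2)^{-1/2}$, $u=G^2$ and $r=\sqrt{1+u}$. The left side of (d) is $(r-1)^2/r\approx G$ and the right side is $G^3/(1+G^2)\approx G$, so the inequality is essentially tight. Your bound gives $u=G^2$, which already fails at $G=2$ ($4>8/5$). The same example breaks your fallback if the extreme value of $\rho_t$ is taken to be $1$: at $G=10$, $(\sqrt{101}-1)^2\approx 82$ versus $1000/101\approx 9.9$. Your first attempt is an exact rewriting, so it cannot help as long as $D$, $r-1$ and $\rho_t$ are bounded separately.

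The missing idea is that $\rho_t$ and $r$ are coupled. Since $\rho_{t-1}\le 1$ by (a), $\rho_t=\rho_{t-1}/r\le 1/r$, so a large ratio forces a small $\rho_t$. Hence $\rho_t(r-1)^2\le (r-1)^2/r$, which is increasing for $r\ge 1$, and (b) gives
\[
\frac{(\rho_{t-1}-\rho_t)^2}{\rho_t}\le\frac{(\sqrt{1+u}-1)^2}{\sqrt{1+u}}\le\frac{u^{3/2}}{1+u}=\frac{\hat g_t^3}{\sqrt{1+\GGG_{t-1}}\,(1+\GGG_{t-1}+\hat g_t^2)}\le\frac{\hat g_t^3}{1+\GGG_t}.
\]
The second step uses $(\sqrt{1+u}-1)^2=u^2/(\sqrt{1+u}+1)^2$ and $(\sqrt{1+u}+1)^2\ge\sqrt{u(1+u)}$; the last uses $\|\g_t\|\le\hat g_t$.

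This is exactly your $t=1$ reduction, with $\rho_{t-1}\le 1$ in place of $\rho_0=1$. Equivalently: $(\rho_{t-1}-\rho_t)^2/\rho_t$ is decreasing in $\rho_t$, so you should plug in the lower bound $\rho_t\ge\rho_{t-1}/\sqrt{1+u}$ from (b), not the upper bound $\rho_t\le 1$. This is the paper's proof of (d), and it needs neither $D$ nor a case split on $\hat g_t$. Your $\hat g_t<1$ case is valid but is subsumed by it.
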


In summary, $\rho_t$ captures the cumulative gradient magnitude along the optimization path and directly drives the adaptive EMA coefficient in the two OptEMA variants. The quantity $\hat g_t$ records the largest gradient norm observed so far and is used mainly as an analytical device for bounding the variation of $\rho_t$.

The algorithm then maintains the standard first- and second-moment EMA estimates
\[
\ts \m_t = (1-\alpha_t)\m_{t-1} + \alpha_t \g_t,
\qquad
\v_t = (1-\beta_t)\v_{t-1} + \beta_t \g_t^2,
\]
which retain exactly the same structural form as in Adam-style optimizers.
The essential departure from standard Adam lies in the fact that the coefficients $(\alpha_t,\beta_t)$ and the effective stepsize $\gamma_t$ are no longer preset in an open-loop manner; instead, they are determined adaptively from the observed trajectory.

Within this framework, we study two complementary designs. One places adaptivity on the first-moment coefficient $\alpha_t$, and the other places adaptivity on the second-moment coefficient $\beta_t$.
Both variants share the same EMA backbone and the same closed-loop philosophy, but they emphasize different aspects of moment estimation and stability control.

\subsection{OptEMA-M: Adaptive First-Moment Averaging}

The first variant, {OptEMA-M}, introduces adaptivity through the first-moment coefficient $\alpha_t$. Specifically, we set
\beq
\ts \alpha_t = \rho_t, \qquad \beta_t = \beta_{\text{fix}} \in (0,1], \qquad
\gamma_t = \min \left(\alpha_t, \sqrt{\alpha_t} \left( 1 + \sum_{j=1}^t \|\m_j\|^2 \right)^{-1/2}\right).
\eeq
Under this construction, the first-moment EMA coefficient decreases automatically with the accumulated gradient magnitude, whereas the second-moment estimator remains a standard EMA with fixed decay.
As a result, the influence of newly observed gradients is gradually attenuated as the trajectory evolves, which leads to a progressively more stable momentum estimate.

From a theoretical viewpoint, adapting $\alpha_t$ creates a stronger coupling between the momentum tracking error and the stepsize dynamics. For this reason, the closed-loop stepsize $\gamma_t$ contains two complementary components: a coefficient cap $\alpha_t$ and an energy-control term $\sqrt{\alpha_t}(1+\sum_{j=1}^t\|\m_j\|^2)^{-1/2}$. This design is crucial for closing the Lyapunov-based analysis.

\subsection{OptEMA-V: Adaptive Second-Moment Averaging}

The second variant, {OptEMA-V}, instead introduces adaptivity through the second-moment coefficient. More precisely, we choose
\beq
\ts \alpha_t = \alpha_{\text{fix}} \in (0,1],\qquad \beta_t = \rho_t,\qquad\gamma_t = \left(1+\sum_{j=1}^t \|\m_j\|^2\right)^{-1/2}.
\eeq

Under this scheme, the first-moment coefficient remains fixed, while the second-moment coefficient is adapted through $\beta_t=\rho_t$, where $\rho_t$ is computed from the accumulated squared gradient norms. Consequently, {OptEMA-V} places greater emphasis on trajectory-adaptive second-moment averaging within the EMA framework.

The corresponding closed-loop stepsize $\gamma_t$ is again fully trajectory-dependent.
The factor $\big(1+\sum_{j=1}^t \|\m_j\|^2\big)^{-1/2}$ modulates the effective stepsize according to the cumulative momentum energy. Together, these two components yield a fully adaptive update rule that does not require prior knowledge of any Lipschitz constant.

\section{Convergence Guarantees}
\label{sect:rate}

In this section, we establish the convergence guarantees of the proposed {OptEMA} algorithms.

\paragraph{Notation.} Throughout this section, we define $\s_t \triangleq \m_t-\nabla f(\x_t)$, and $\hat{g}_t = \max_{i=1}^t \| \g_i \|$. We let $T\geq 1$ denote the total number of iterations. Denote $\GGG_t := \sum_{i=1}^t \|\g_i\|^2$ and $\MMM_t := \sum_{i=1}^t \|\m_i\|^2$. For brevity, we drop the subscript when $t=T$, denoting $\GGG := \GGG_T$ and $\MMM := \MMM_T$. We use $\mathbb{E}[\cdot]$ for the total expectation and
$\mathbb{E}_t[\cdot]:=\mathbb{E}[\cdot\mid \mathcal{F}_{t-1}]$
for the conditional expectation. We denote $\FF_{t-1}:=\sigma(\xi_1,\ldots,\xi_{t-1})$ as the natural filtration, and $t$ can be inferred from the context.



\subsection{Analysis for OptEMA-M}
\label{sect:it:EMA:M}

In this section, we establish the theoretical convergence rate of the OptEMA-M algorithm.

The proof proceeds in three steps. We begin by establishing basic bounds on the internal EMA sequences, then control the gradient tracking error, and finally combine these ingredients with a one-step descent inequality to derive the global convergence rate.

We start with two elementary estimates that relate the momentum sequence $\m_t$ to the stochastic gradients $\g_t$ and control the second-moment estimator together with the adaptive EMA coefficient.

\begin{lemma} \label{lemma:bound:m:using:g:1}
(Proof in Appendix \ref{app:lemma:bound:m:using:g:1}) For all $T\geq 1$, we have: $\sum_{t=1}^T \|\m_t\|^2 \le \kappa \sum_{t=1}^T \|\g_t\|^2 $, where $\kappa:=2(1 + \sqrt{\tau} \hG) $.

\end{lemma}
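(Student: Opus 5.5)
The plan is to start from the convexity of $\|\cdot\|^2$ applied to the first-moment recursion of OptEMA-M, namely $\m_t=(1-\alpha_t)\m_{t-1}+\alpha_t\g_t$ with $\alpha_t=\rho_t\in(0,1]$ (Lemma~\ref{lemma:corrected}(a)), which gives
\[
\|\m_t\|^2\le (1-\alpha_t)\|\m_{t-1}\|^2+\alpha_t\|\g_t\|^2 .
\]
Writing $A_t:=\|\m_t\|^2$ (so $A_0=0$), I will rearrange this as $A_{t-1}\le \|\g_t\|^2+(A_{t-1}-A_t)/\alpha_t$. Two easy facts will be used along the way. First, since $\m_0=\zero$, each $\m_t$ is a convex combination of $\g_1,\dots,\g_t$, so $\|\m_t\|\le \hat g_t\le \hG$. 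Second, $A_T\le \|\g_T\|^2\cdot 1+(1-\alpha_T)A_{T-1}$, which will handle the last index.

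Summing the rearranged inequality over $t=1,\dots,T$ and applying Abel summation with the time-varying weights $1/\alpha_t$ gives
\[
\sum_{t=1}^{T}A_{t-1}\le \GGG_T+\sum_{t=1}^{T-1}A_t\Big(\tfrac{1}{\rho_{t+1}}-\tfrac{1}{\rho_t}\Big)-\tfrac{A_T}{\rho_T}.
\]
The whole difficulty is then the correction term $\sum_t A_t(1/\rho_{t+1}-1/\rho_t)$, which is nonnegative because $\rho_t$ is nonincreasing (Lemma~\ref{lemma:corrected}(b)). I plan to bound it by a multiple of $\GGG_T$ whose constant carries the $\sqrt\tau\hG$ part of $\kappa$. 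Concretely, I will write
\[
\tfrac{1}{\rho_{t+1}}-\tfrac{1}{\rho_t}=\frac{\rho_t-\rho_{t+1}}{\rho_t\rho_{t+1}}
\]
and bound the increment $\rho_t-\rho_{t+1}$ via Lemma~\ref{lemma:corrected}(b),(d) by roughly $\rho_{t+1}\hat g_{t+1}^2/(1+\GGG_t)$. Combined with $A_t\le\hG\|\m_t\|$ and the lower bound $\rho_t^2\ge \tau\GGG_t/(t(1+\GGG_t))$, this should produce a summable sequence dominated by $\sqrt{\tau}\hG\sum_t\|\g_t\|^2$ (or by $\sqrt\tau\hG\sum_t A_t$ with a coefficient below $1$ that can be absorbed).

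If the increment route does not close, the fallback is to unroll the recursion as $A_t\le\sum_{i\le t}w_{t,i}\|\g_i\|^2$ with $w_{t,i}=\alpha_i\prod_{j=i+1}^{t}(1-\alpha_j)$ and to exchange the order of summation. Since each $\|\g_i\|^2$ has total weight $\sum_{t\ge i}w_{t,i}$, it then suffices to show $\sum_{t\ge i}w_{t,i}\le\kappa$. This reduces to comparing $\rho_i$ with the later $\rho_j$ and bounding the resulting geometric-type sum. The factor $2$ in $\kappa$ is meant to absorb the boundary terms $A_T/\rho_T$ and the index shift between $\sum_t A_{t-1}$ and $\sum_t A_t$; the latter costs at most $A_T\le\|\g_T\|^2+A_{T-1}$.

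The main obstacle I expect is controlling the ratio $\rho_i/\rho_t$, or equivalently the Abel correction term. Lemma~\ref{lemma:corrected}(a) only gives $\rho_T\ge\sqrt{\tau/T}$, which is far too weak. The argument has to exploit the fact that $\rho_t$ decreases only when new gradient energy $\|\g_{t+1}\|^2$ arrives, and that this energy is precisely what appears on the right-hand side. This is where Lemma~\ref{lemma:corrected}(b),(d) together with $\|\g_t\|\le\hG$ must be used carefully, and where the constant $\sqrt\tau\hG$ enters.
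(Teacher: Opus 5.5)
There is a genuine gap. The only nontrivial step, bounding the Abel correction $\sum_{s}A_s(1/\rho_{s+1}-1/\rho_s)$ (equivalently, the unrolled weights), is never carried out, and both mechanisms you sketch for it fail as stated.

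The increment route rests on the claim that $\rho_t$ decreases only when new gradient energy arrives. This is false for $\tau>0$: if $\g_{t+1}=\zero$, then $\rho_{t+1}^2=(1+\tfrac{\tau}{t+1}\GGG_t)/(1+\GGG_t)<\rho_t^2$. This $\tau/t$ drift is why Lemma~\ref{lemma:corrected}(b),(d) are stated with $\hat g_{t+1}$ rather than $\|\g_{t+1}\|$. As a result, the estimate $1/\rho_{t+1}-1/\rho_t\le \hat g_{t+1}^2/(2\rho_t(1+\GGG_t))$ does not decay in $t$: after a single gradient of norm $\hG$ followed by zeros, it tends to $\hG^2/(2\sqrt{1+\hG^2})$. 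You are then left with $\sum_t A_t$ multiplied by a coefficient of order $\hG$, which cannot be absorbed. Your alternative coefficient $\sqrt{\tau}\hG$ is not below $1$ in general either. Replacing $1/\rho_t$ via $\rho_t^2\ge\tau\GGG_t/(t(1+\GGG_t))$ only makes things worse, since the resulting bound on $1/\rho_t$ grows like $\sqrt{t/\tau}$ and is vacuous at $\tau=0$.

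The fallback reduction is also wrong: the condition $\sum_{t\ge i}w_{t,i}\le\kappa$ is false in general. Take $\tau=0$ (so $\kappa=2$), $\|\g_1\|=\delta\to0$, $\|\g_2\|=\hG$, and $\g_t=\zero$ for $t\ge3$. Then $\rho_1\approx1$ and $\rho_t=(1+\delta^2+\hG^2)^{-1/2}$ for $t\ge2$, so $\sum_{t\ge1}w_{t,1}\to\rho_1/\rho_2\approx\sqrt{1+\hG^2}$ as $T\to\infty$. The lemma survives only because this large weight multiplies the tiny $\|\g_1\|^2$.

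The missing idea is to pay the ratio $\rho_i/\rho_T$ once, globally, and cancel it against the AdaGrad-Norm sum. Your weights do satisfy $\sum_{t\ge i}w_{t,i}\le\rho_i/\rho_T$, which gives $\rho_T\sum_{t=1}^T\|\m_t\|^2\le\sum_{t=1}^T\rho_t\|\g_t\|^2$. The paper gets this in one line by summing the convexity inequality and using $\rho_{t+1}\ge\rho_T$. Then use the two-sided bounds
\begin{itemize}
\item $1/\rho_T\le\sqrt{1+\GGG_T}$, since the numerator of $\rho_T^2$ is at least $1$ (this, not Lemma~\ref{lemma:corrected}(a), is the lower bound you need);
\item $\rho_t\le\sqrt{1+\tau\hG^2}/\sqrt{1+\GGG_t}$, since $\GGG_t/t\le\hG^2$;
\end{itemize}
together with $\sum_{t}\|\g_t\|^2/\sqrt{1+\GGG_t}\le2(\sqrt{1+\GGG_T}-1)$. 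This yields $\sum_{t}\|\m_t\|^2\le2\sqrt{1+\tau\hG^2}\,\sqrt{1+\GGG_T}\,(\sqrt{1+\GGG_T}-1)\le2(1+\sqrt{\tau}\hG)\GGG_T$. So the $\sqrt{\tau}\hG$ in $\kappa$ comes from the upper bound on $\rho_t$, not from its variation. The factor $2$ comes from the AdaGrad sum, not from boundary terms.

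Your Abel identity can also be closed crudely, via $\sum_s A_s(1/\rho_{s+1}-1/\rho_s)\le\hat g_T^2(1/\rho_T-1/\rho_1)\le\hat g_T^2\sqrt{\GGG_T}\le\hG\,\GGG_T$, which gives $(1+\hG)\GGG_T$. However, that is at most $\kappa$ for every $\hG$ only when $\tau\ge1/4$.
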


\begin{lemma} \label{lemma:bound:EMA:M:v}
(Proof in Appendix \ref{app:lemma:bound:EMA:M:v}) For all $t\geq 1$, we have: $\|\v_t\| \le {\hG}^2$, and $\|\s_t\| \le 2 {\hG}$.
\end{lemma}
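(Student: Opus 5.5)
Both bounds follow by induction on $t$ from the fact that each EMA update is a convex combination of the previous iterate and a new quantity bounded by the same constant. The ingredients are Assumption~\ref{ass:BG} ($\|\g_t\|\le\hG$ a.s., hence $\|\nabla f(\x_t)\|\le\hG$), the initializations $\m_0=\zero$, $\v_0=\zero$, and the coefficient ranges $\alpha_t=\rho_t\in(0,1]$ (Lemma~\ref{lemma:corrected}(a)) and $\beta_t=\beta_{\text{fix}}\in(0,1]$.

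\textbf{Bound on $\v_t$.} All entries of $\v_t$ are nonnegative, being convex combinations of $\zero$ and squared gradient entries. The entrywise vector $\g_t^2$ satisfies $\|\g_t^2\|=\big(\sum_i g_{t,i}^4\big)^{1/2}\le\sum_i g_{t,i}^2=\|\g_t\|^2\le\hG^2$, using $\|\cdot\|_2\le\|\cdot\|_1$ on the vector of squares. Assuming $\|\v_{t-1}\|\le\hG^2$, the triangle inequality gives
\[
\|\v_t\|\le(1-\beta_t)\|\v_{t-1}\|+\beta_t\|\g_t^2\|\le\hG^2 .
\]
The base case is $\|\v_0\|=0$.

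\textbf{Bound on $\s_t$.} First I would prove $\|\m_t\|\le\hG$ by the same induction: $\|\m_t\|\le(1-\alpha_t)\|\m_{t-1}\|+\alpha_t\|\g_t\|\le\hG$, with $\|\m_0\|=0$. Then
\[
\|\s_t\|\le\|\m_t\|+\|\nabla f(\x_t)\|\le 2\hG .
\]

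There is no real obstacle here. The only points needing care are checking that $\alpha_t,\beta_t\in[0,1]$, so that each update is a genuine convex combination, and handling the norm of the entrywise square $\g_t^2$ correctly.
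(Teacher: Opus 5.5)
Your proposal is correct and matches the paper's proof: both use that each EMA update is a convex combination, so $\|\v_t\|\le\max(\|\v_{t-1}\|,\|\g_t\|^2)$ via $\|\g_t\odot\g_t\|\le\|\g_t\|^2$, and similarly $\|\m_t\|\le\hG$. Then $\|\s_t\|\le\|\m_t\|+\|\nabla f(\x_t)\|\le 2\hG$.
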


A central quantity in the analysis is the gradient tracking error $\s_t := \m_t - \nabla f(\x_t)$, which measures the deviation of the first-moment estimator from the true gradient. 

For OptEMA-M, the coefficient $\alpha_t=\rho_t$ depends on the current stochastic gradient and is therefore generally not $\mathcal F_{t-1}$-measurable. The next lemma gives a recursive bound on its expected squared norm.

\begin{lemma}\label{lemma:bound:eee:1}
(Proof in Appendix \ref{app:lemma:bound:eee:1})
Let $\lambda_1:=\frac{3\theta^2 L^2}{\varepsilon^2}(1+{\hG})$, $\lambda_2:=(1+\hG^2+10 \hG^3)(4+2 \hG^2)$. Use the conventions $\x_0=\x_1$, $\m_0=\v_0=\zero$, $\alpha_0=\gamma_0=1$, and let $\FF_0:=\sigma(\x_1)$. For any $t\ge1$ and any nonnegative integrable $\FF_{t-1}$-measurable random variable $H_{t-1}$, we have
\beq
\EEE\!\left[H_{t-1}\|\s_t\|^2\right] \le \EEE\!\left[ H_{t-1}\left(1-\tfrac{\alpha_t}{4}\right)\|\s_{t-1}\|^2 + H_{t-1}B_t \right],
\eeq
\noi where $B_t:=\lambda_1\frac{\gamma_{t-1}^2}{\alpha_{t-1}}\|\m_{t-1}\|^2+\lambda_2\frac{\|\g_t\|^2}{1+\sum_{i=1}^t\|\g_i\|^2}$.


\end{lemma}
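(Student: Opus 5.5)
We start from the recursion obtained by writing $\m_t=(1-\alpha_t)\m_{t-1}+\alpha_t\g_t$ and subtracting $\nabla f(\x_t)$:
\[
\s_t=(1-\alpha_t)\s_{t-1}+(1-\alpha_t)\big(\nabla f(\x_{t-1})-\nabla f(\x_t)\big)+\alpha_t\boldsymbol{\epsilon}_t,
\qquad \boldsymbol{\epsilon}_t:=\g_t-\nabla f(\x_t).
\]
The difficulty is that $\alpha_t=\rho_t$ depends on $\g_t$, so $\alpha_t\boldsymbol\epsilon_t$ is not conditionally mean-zero. We therefore replace $\alpha_t$ by the $\FF_{t-1}$-measurable $\alpha_{t-1}=\rho_{t-1}$ in the noise term. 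The error $(\alpha_t-\alpha_{t-1})\boldsymbol\epsilon_t$ is then absorbed into $B_t$ using Lemma~\ref{lemma:corrected}(b)--(d).

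\textbf{Step 1 (decomposition).} We write $\s_t=\mathbf{u}_t+\alpha_{t-1}\boldsymbol\epsilon_t+\mathbf{r}_t$, where
\[
\mathbf{u}_t:=(1-\alpha_t)\s_{t-1}+(1-\alpha_t)\big(\nabla f(\x_{t-1})-\nabla f(\x_t)\big),\qquad \mathbf{r}_t:=(\alpha_t-\alpha_{t-1})\boldsymbol\epsilon_t.
\]
For the deterministic-drift part we use Young's inequality: $\|\mathbf{u}_t\|^2\le(1-\alpha_t)\big(1+\tfrac{\alpha_t}{2}\big)\|\s_{t-1}\|^2+(1+\tfrac{2}{\alpha_t})\|\nabla f(\x_{t-1})-\nabla f(\x_t)\|^2$, and $(1-\alpha_t)(1+\alpha_t/2)\le 1-\alpha_t/2$. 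By $L$-smoothness and the update rule, $\|\x_t-\x_{t-1}\|\le\theta\gamma_{t-1}\|\m_{t-1}\|/\varepsilon$, since $\sqrt{\v}\ge0$. This gives the term $\frac{3\theta^2L^2}{\varepsilon^2}\frac{\gamma_{t-1}^2}{\alpha_t}\|\m_{t-1}\|^2$. To change $1/\alpha_t$ into $1/\alpha_{t-1}$ we use $\alpha_{t-1}/\alpha_t\le\sqrt{1+\hat g_t^2/(1+\GGG_{t-1})}\le 1+\hG$ from Lemma~\ref{lemma:corrected}(b). This is exactly where the factor $(1+\hG)$ in $\lambda_1$ comes from.

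\textbf{Step 2 (cross terms and the random-coefficient noise).} We expand $\|\s_t\|^2$ and multiply by $H_{t-1}$. The noise contribution we want is $\alpha_{t-1}^2\|\boldsymbol\epsilon_t\|^2$. Since $\|\boldsymbol\epsilon_t\|\le2\hG$ and, by Lemma~\ref{lemma:corrected}(b),(c), $\alpha_{t-1}^2\le(1+\hG^2)\rho_t^2\cdots$, this should be bounded by a constant times $\|\g_t\|^2/(1+\GGG_t)$. We plan to argue this directly via $\rho_{t-1}^2\le\frac{1+\hat g_{t-1}^2}{1+\GGG_{t-1}}$ and $1+\GGG_t\le(1+\hG^2)(1+\GGG_{t-1})$. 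The cross term $2\la\mathbf u_t,\alpha_{t-1}\boldsymbol\epsilon_t\ra$ is not mean-zero, because $\mathbf u_t$ contains $\alpha_t$. We therefore write $\mathbf u_t=\mathbf u_t'+(\alpha_{t-1}-\alpha_t)(\s_{t-1}+\nabla f(\x_{t-1})-\nabla f(\x_t))$, where $\mathbf u_t'$ uses $\alpha_{t-1}$ and is $\FF_{t-1}$-measurable. Conditional unbiasedness (Assumption~\ref{ass:BG}) then kills $\EEE[H_{t-1}\la\mathbf u_t',\boldsymbol\epsilon_t\ra]$. Every remaining term contains a factor $|\alpha_{t-1}-\alpha_t|$. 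We handle these with Young's inequality in the form $|\alpha_{t-1}-\alpha_t|\,ab\le\frac{\alpha_t}{8}a^2+\frac{2(\alpha_{t-1}-\alpha_t)^2}{\alpha_t}b^2$. Then Lemma~\ref{lemma:corrected}(d), $(\rho_{t-1}-\rho_t)^2/\rho_t\le\hat g_t^3/(1+\GGG_t)$, together with $\|\s_{t-1}\|\le2\hG$ and $\|\boldsymbol\epsilon_t\|\le2\hG$ from Lemma~\ref{lemma:bound:EMA:M:v}, converts them into multiples of $\hG^3/(1+\GGG_t)$. The same applies to $\|\mathbf r_t\|^2$.

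\textbf{Main obstacle.} The leftover terms are proportional to $1/(1+\GGG_t)$, but $B_t$ requires the factor $\|\g_t\|^2/(1+\GGG_t)$. We will fix this using the fact that $\alpha_{t-1}-\alpha_t$ vanishes when $\g_t=0$. More precisely, from the proof of (b) and (d) we expect $\rho_{t-1}^2-\rho_t^2\lesssim\|\g_t\|^2/(1+\GGG_t)$ up to powers of $\hG$, so $(\rho_{t-1}-\rho_t)^2/\rho_t$ is in fact bounded by about $\hG\|\g_t\|^2/(1+\GGG_t)$. The intermediate inequality we therefore aim for is
\[
|\rho_{t-1}-\rho_t|\le\frac{\rho_{t-1}^2-\rho_t^2}{\rho_t}\lesssim\frac{\|\g_t\|^2}{\rho_t(1+\GGG_t)}.
\]
All such pieces are then collected into $\lambda_2=(1+\hG^2+10\hG^3)(4+2\hG^2)$, the factor $(4+2\hG^2)$ coming from Young constants and the bound $\|\boldsymbol\epsilon_t\|^2\le 4\hG^2$. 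Finally, the $\frac{\alpha_t}{8}$ and $\frac{\alpha_t}{2}$ contraction losses combine to leave a coefficient of at most $1-\alpha_t/4$ on $\|\s_{t-1}\|^2$.
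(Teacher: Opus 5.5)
Your decomposition is the paper's. Your $\mathbf u_t'$ is the paper's predictable surrogate $\hat{\z}_t=(1-\alpha_{t-1})\bigl(\s_{t-1}+\nabla f(\x_{t-1})-\nabla f(\x_t)\bigr)$, and your leftover cross terms are exactly its $2\la\z_t-\hat{\z}_t,\alpha_{t-1}\epsilons_t\ra$ and $2\la\z_t,(\alpha_t-\alpha_{t-1})\epsilons_t\ra$.

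\textbf{The gap.} It lies in the final conversion into the $\lambda_2\|\g_t\|^2/(1+\GGG_t)$ part of $B_t$. Once you replace $\|\epsilons_t\|$ by $2\hG$, every noise term becomes a multiple of $1/(1+\GGG_t)$. This includes the main one, $\alpha_{t-1}^2\|\epsilons_t\|^2$. No pathwise argument can then recover a factor $\|\g_t\|^2$, because $\g_t=\zero$ with $\nabla f(\x_t)\neq\zero$ is allowed.

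Your proposed repair is false: $\rho_{t-1}-\rho_t$ does not vanish when $\g_t=\zero$. The proof of Lemma~\ref{lemma:corrected}(b) gives
\[
\rho_{t-1}^2-\rho_t^2=\frac{\tau\GGG_{t-1}}{t(t-1)(1+\GGG_{t-1})}+\frac{(t-\tau)\|\g_t\|^2}{t(1+\GGG_{t-1})(1+\GGG_t)}.
\]
The first term comes from the explicit $\tau/t$ correction. It is strictly positive for $\tau>0$ (including the default $\tau=1$) whenever $\GGG_{t-1}>0$, regardless of $\g_t$. So $\Delta_t^2/\alpha_t\lesssim\|\g_t\|^2/(1+\GGG_t)$ holds for $\tau=0$ but not for $\tau>0$. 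In any case that device could not touch $\alpha_{t-1}^2\|\epsilons_t\|^2$.

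\textbf{The fix.} Keep $\|\epsilons_t\|^2$ as the factor carrying every small coefficient, and convert only in conditional expectation.
First, allocate the Young weights so that all $\Delta_t^2/\alpha_t$ factors fall on $\|\epsilons_t\|^2$. Only $O(\alpha_t)$ weights should fall on $\|\s_{t-1}\|^2$ and on the gradient difference. A $\Delta_t^2/\alpha_t$ weight on $\|\s_{t-1}\|^2\le 4\hG^2$ would leave a noise-free $\hG^5/(1+\GGG_t)$ term that cannot be absorbed into $B_t$.
Next, use Lemma~\ref{lemma:corrected}(c),(d) to collect the noise as $(\alpha_t^2+10\Delta_t^2/\alpha_t)\|\epsilons_t\|^2\le\frac{1+\hG^2+10\hG^3}{1+\GGG_t}\|\epsilons_t\|^2$.
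Then apply Lemma~\ref{lemma:self:normalized:noise}. It uses $\|\nabla f(\x_t)\|^2=\|\EEE_t[\g_t]\|^2\le\EEE_t\|\g_t\|^2$ to give $\EEE_t[\|\epsilons_t\|^2/(1+\GGG_t)]\le(4+2\hG^2)\,\EEE_t[\|\g_t\|^2/(1+\GGG_t)]$. This lemma, not a Young constant or $\|\epsilons_t\|^2\le4\hG^2$, is the source of the factor $4+2\hG^2$ in $\lambda_2$.

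A variant also works. Bound $1/(1+\GGG_t)\le 1/(1+\GGG_{t-1})$, which is $\FF_{t-1}$-measurable. Then use $\EEE_t\|\epsilons_t\|^2\le\EEE_t\|\g_t\|^2$, and return via $1+\GGG_t\le(1+\hG^2)(1+\GGG_{t-1})$.

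Either way, multiplying by $H_{t-1}$ and using the tower property finishes the proof. This is why the lemma is stated against a predictable multiplier rather than pathwise.
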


We next derive the basic descent relation for a single iteration. This inequality isolates the contribution of the true gradient from the errors induced by momentum tracking and adaptive scaling.

\begin{lemma}[One-Step Descent Inequality]\label{lemma:OptEMA:M:suff:dec}
(Proof in Appendix \ref{app:lemma:OptEMA:M:suff:dec}) Define $c_0:=\tfrac{\theta}{ 2(1+{\hG})}$, $c_1:=\tfrac{\theta}{ 2\varepsilon^2} (1+{\hG})$, and $c_2:=\tfrac{L\theta^2}{2\varepsilon^2}$. For all $t\geq 1$, the following inequalities hold almost surely:
\beq
c_0 \gamma_t \|\nabla f(\x_t)\|^2  +  f(\x_{t+1}) - f(\x_{t})  &\leq&  c_1 \gamma_t \| \s_t \|^2    + c_2 \gamma_t^2 \|\m_{t}\|^2,  \label{eq:OptEMA:M:suff:dec} \\
c_0\gamma_t\|\m_t\|^2 +  f(\x_{t+1}) - f(\x_{t})  &\leq& c_1\gamma_t\|\s_t\|^2 + c_2\gamma_t^2\|\m_t\|^2.  \label{eq:OptEMA:M:suff:dec:2}
\eeq

\end{lemma}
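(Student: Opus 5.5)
The plan is to apply the descent lemma (Assumption~\ref{ass:f}) to the update $\x_{t+1}-\x_t=-\theta\gamma_t\,\m_t/(\varepsilon+\sqrt{\v_t})$. Write $\d_t:=\varepsilon+\sqrt{\v_t}$, a positive vector, and let $D_t:=\Diag(1/\d_t)$. Then
\[
f(\x_{t+1})-f(\x_t)\le -\theta\gamma_t\la \nabla f(\x_t), D_t\m_t\ra+\tfrac{L\theta^2\gamma_t^2}{2}\|D_t\m_t\|^2 .
\]
By Lemma~\ref{lemma:bound:EMA:M:v}, each coordinate satisfies $\sqrt{v_{t,i}}\le \|\v_t\|^{1/2}\le \hG$. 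Hence $\varepsilon\le d_{t,i}\le \varepsilon+\hG$, which gives $\tfrac{1}{\varepsilon+\hG}\le (D_t)_{ii}\le \tfrac1\varepsilon$. The quadratic term is therefore at most $\tfrac{L\theta^2}{2\varepsilon^2}\gamma_t^2\|\m_t\|^2=c_2\gamma_t^2\|\m_t\|^2$. This matches the right-hand side of both \eqref{eq:OptEMA:M:suff:dec} and \eqref{eq:OptEMA:M:suff:dec:2}.

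For the linear term in \eqref{eq:OptEMA:M:suff:dec}, substitute $\m_t=\nabla f(\x_t)+\s_t$ and write $\nabla f:=\nabla f(\x_t)$. This gives
\[
-\la \nabla f, D_t\m_t\ra=-\|\nabla f\|_{D_t}^2-\la \nabla f, D_t\s_t\ra\le -\tfrac12\|\nabla f\|_{D_t}^2+\tfrac12\|\s_t\|_{D_t}^2 ,
\]
using Young's inequality in the $D_t$-weighted inner product. Next, bound $\|\nabla f\|_{D_t}^2\ge \|\nabla f\|^2/(\varepsilon+\hG)$ and $\|\s_t\|_{D_t}^2\le \|\s_t\|^2/\varepsilon$.

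The constants in the statement are $c_0=\theta/(2(1+\hG))$ and $c_1=\theta(1+\hG)/(2\varepsilon^2)$. Since $\varepsilon\le1$, we have $\varepsilon+\hG\le 1+\hG$, so the first bound gives at least $c_0$. For the second, $\tfrac{\theta}{2\varepsilon}\le \tfrac{\theta(1+\hG)}{2\varepsilon^2}$ again because $\varepsilon\le 1$. The coefficients are therefore dominated as needed. Multiplying by $\theta\gamma_t$ and rearranging yields \eqref{eq:OptEMA:M:suff:dec}.

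For \eqref{eq:OptEMA:M:suff:dec:2}, instead substitute $\nabla f(\x_t)=\m_t-\s_t$:
\[
-\la \m_t-\s_t, D_t\m_t\ra=-\|\m_t\|_{D_t}^2+\la \s_t,D_t\m_t\ra\le -\tfrac12\|\m_t\|_{D_t}^2+\tfrac12\|\s_t\|_{D_t}^2 ,
\]
and conclude with the same diagonal bounds.

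The only delicate point is the diagonal bound, and in particular whether $\|\v_t\|\le\hG^2$ yields a coordinatewise bound $\sqrt{v_{t,i}}\le\hG$. It does, because $v_{t,i}\le\|\v_t\|$. The constants $c_0,c_1$ in the statement are looser than what this argument produces, so no sharper step is needed. Everything else is almost sure and deterministic, with no expectations involved.
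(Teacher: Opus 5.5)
Your proof is correct and follows essentially the same route as the paper. Both arguments apply the descent lemma, use the decomposition $\m_t=\nabla f(\x_t)+\s_t$ (reversed for the second inequality), apply Young's inequality, and use the entrywise preconditioner bounds $\varepsilon\le \varepsilon+\sqrt{v_{t,i}}\le \varepsilon+\hG\le 1+\hG$ from Lemma~\ref{lemma:bound:EMA:M:v}. The only difference is that you apply Young in the preconditioned inner product, which gives the sharper coefficients $\theta/(2(\varepsilon+\hG))$ and $\theta/(2\varepsilon)$; the paper instead uses the Euclidean Cauchy--Schwarz bound $\frac{1}{\varepsilon}\|\nabla f(\x_t)\|\|\s_t\|$ followed by a Young step weighted by $h_t$, which lands directly on the looser constants $c_0$ and $c_1$.
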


To telescope Inequality (\ref{eq:OptEMA:M:suff:dec}) or (\ref{eq:OptEMA:M:suff:dec:2}) over $t=1,\dots,T$, it remains to control the cumulative error terms on the right-hand side. We first bound the weighted tracking error, and then establish logarithmic growth bounds for the objective values and stochastic gradients.

\begin{lemma} \label{lemma:sum:tail} (Proof in Appendix \ref{app:lemma:sum:tail}) Define $C_b:=(\lambda_1+\lambda_2) \cdot (1+\ln \kappa  ) \cdot \ln(e+ \hG^2)$ and $C_c:=4 (1+\hG)\cdot \hG^2 + 4 (1+\hG) C_b$. Then, for all $T\geq1$, $\sum_{t=1}^T B_t\leq C_b \ln (e+T)$ a.s., and $\EEE\left[\sum_{t=1}^T \gamma_t \|\s_t\|^2\right] \le \EEE\left[\sum_{t=1}^T \alpha_t \|\s_t\|^2\right] \leq C_c \ln(e+T)$.

\end{lemma}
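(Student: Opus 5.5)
The proof splits into two parts: an almost-sure bound on $\sum_t B_t$, and a bound on the expected weighted tracking error obtained by telescoping Lemma~\ref{lemma:bound:eee:1}.

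\emph{Bounding $\sum_t B_t$.} We treat the two pieces of $B_t$ separately.

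The second piece is $\lambda_2\sum_t \|\g_t\|^2/(1+\GGG_t)$. By the standard log-sum inequality $\sum_t a_t/(1+\sum_{i\le t}a_i)\le \ln(1+\sum_t a_t)$, it is at most $\lambda_2\ln(1+\GGG_T)$. Since $\GGG_T\le T\hG^2$, we have $1+\GGG_T\le (1+\hG^2)(1+T)$. This gives a bound of order $\ln(e+\hG^2)\ln(e+T)$; we may use the elementary inequality $\ln(ab)\le \ln a\cdot\ln b$ for $a,b\ge e$.

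The first piece is $\lambda_1\sum_t \gamma_{t-1}^2\|\m_{t-1}\|^2/\alpha_{t-1}$. By definition of $\gamma$ in OptEMA-M, $\gamma_{t}^2/\alpha_{t}\le (1+\MMM_{t})^{-1}$. Hence this piece is at most $\lambda_1\sum_t \|\m_t\|^2/(1+\MMM_t)\le \lambda_1\ln(1+\MMM_T)$; the $t=1$ term vanishes because $\m_0=\zero$. Lemma~\ref{lemma:bound:m:using:g:1} gives $\MMM_T\le\kappa\GGG_T\le\kappa T\hG^2$, so $\ln(1+\MMM_T)\le \ln\kappa+\ln(1+\GGG_T)$, assuming $\kappa\ge1$, which holds since $\kappa\ge 2$.

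Collecting terms and absorbing constants gives $(\lambda_1+\lambda_2)(1+\ln\kappa)\ln(e+\hG^2)\ln(e+T)=C_b\ln(e+T)$.

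\emph{Bounding the tracking error.} The first inequality, $\gamma_t\le\alpha_t$, is immediate from the min in the definition of $\gamma_t$. For the second, we apply Lemma~\ref{lemma:bound:eee:1} with $H_{t-1}\equiv1$ and rearrange to get
\[
\tfrac{1}{4}\EEE[\alpha_t\|\s_{t-1}\|^2]\le \EEE[\|\s_{t-1}\|^2-\|\s_t\|^2+B_t].
\]
Summing over $t=1,\dots,T$ telescopes to
\[
\tfrac14\EEE\Big[\sum_{t=1}^T\alpha_t\|\s_{t-1}\|^2\Big]\le \EEE\|\s_0\|^2+C_b\ln(e+T).
\]
Lemma~\ref{lemma:bound:EMA:M:v} bounds $\|\s_0\|^2\le 4\hG^2$, and possibly $\s_0=-\nabla f(\x_1)$, which is also bounded by $\hG$.

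\emph{Main obstacle.} The index shift is the difficulty: the telescoped sum controls $\alpha_t\|\s_{t-1}\|^2$, whereas we need $\alpha_t\|\s_t\|^2$. Since $\rho_t$ is nonincreasing (Lemma~\ref{lemma:corrected}(b)), we have $\alpha_{t+1}\le\alpha_t$, so the inequality points the wrong way for a direct comparison. We need a factor translating $\alpha_t$ into $\alpha_{t+1}$. I would try two routes.

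The first route uses Lemma~\ref{lemma:corrected}(b): $\alpha_t/\alpha_{t+1}\le\sqrt{1+\hat g_{t+1}^2}\le 1+\hG$. This gives $\sum_t\alpha_t\|\s_t\|^2\le(1+\hG)\sum_t\alpha_{t+1}\|\s_t\|^2$, and the right-hand side is exactly what the telescoping controls, with the final term $\alpha_{T+1}\|\s_T\|^2$ handled by extending the sum one step. This route is consistent with the factor $4(1+\hG)$ appearing in $C_c$.

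The second route, if the expectation and measurability interplay is delicate because $\alpha_t$ is not $\FF_{t-1}$-measurable, is to multiply the inequality by the $\FF_{t-1}$-measurable weight $H_{t-1}=\alpha_{t-1}$ or a similar choice.

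In either case the result is $C_c=4(1+\hG)(\hG^2+C_b)$, with $\ln(e+T)\ge1$ absorbing the constant term.
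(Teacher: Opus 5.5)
Your proposal is correct and follows essentially the paper's argument. For $\sum_t B_t$ you use the same log-sum bound via $\gamma_t^2/\alpha_t\le(1+\MMM_t)^{-1}$ and $\MMM_T\le\kappa\GGG_T$. For the tracking error you use Lemma~\ref{lemma:bound:eee:1} with $H_{t-1}\equiv 1$, telescoping, the pathwise ratio bound $\alpha_t\le(1+\hG)\alpha_{t+1}$ from Lemma~\ref{lemma:corrected}(b) to repair the index shift, and $\|\s_0\|=\|\nabla f(\x_1)\|\le\hG$; the paper merely packages the shift as the potential $(1-\alpha_{t+1}/4)\|\s_t\|^2$. Your first route needs no extra measurability care, since the ratio bound is pathwise, so the second route is unnecessary.

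Two small fixes are needed. First, $\ln(ab)\le\ln a\cdot\ln b$ fails for $a=b=e$, so use $\ln(e+ab)\le\ln(e+a)\ln(e+b)$ as the paper does. Second, extending the sum to $T+1$ produces $\ln(e+T+1)$ rather than $\ln(e+T)$. Instead, keep the $-\EEE\|\s_T\|^2$ term from the telescoped sum and use it to absorb $\alpha_T\|\s_T\|^2$; this gives exactly $C_c\ln(e+T)$.
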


\begin{lemma}[Expected Logarithmic Objective Growth]\label{lemma:OptEMAM:G:bound}(Proof in Appendix \ref{app:lemma:OptEMAM:G:bound}) Define $D_T:=\sum_{t=1}^T \gamma_t^2\|\m_t\|^2$. Define $C_d:=\ln(e+\kappa \hG^2)$, and $C_f:= f(\x_1)-f_*+ c_1 C_c + c_2 C_d $. Then, for all $T\geq 1$, $D_T \leq C_d \ln(e+T)$ a.s., and $\EEE[f(\x_T)-f_*]\leq C_f\ln(e+T)$.

\end{lemma}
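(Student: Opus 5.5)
We split the statement into its two claims: the almost-sure bound on $D_T$, then the expected bound on the objective gap.

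\textbf{Bound on $D_T$.} For OptEMA-M we have $\gamma_t\le \sqrt{\alpha_t}\,(1+\MMM_t)^{-1/2}$, and $\alpha_t=\rho_t\le 1$ by Lemma~\ref{lemma:corrected}(a). Hence $\gamma_t^2\le (1+\MMM_t)^{-1}$, so
\[
D_T\le \sum_{t=1}^T \frac{\|\m_t\|^2}{1+\MMM_t}.
\]
Apply the standard telescoping inequality $\frac{a_t}{1+\sum_{i\le t}a_i}\le \ln(1+\sum_{i\le t}a_i)-\ln(1+\sum_{i<t}a_i)$ for nonnegative $a_t$. This gives $D_T\le \ln(1+\MMM_T)$. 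Lemma~\ref{lemma:bound:m:using:g:1} gives $\MMM_T\le \kappa\GGG_T$, and Assumption~\ref{ass:BG} gives $\GGG_T\le T\hG^2$. Therefore $D_T\le \ln(1+\kappa\hG^2 T)$.

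It remains to check $\ln(1+\kappa\hG^2T)\le \ln(e+\kappa\hG^2)\ln(e+T)$. Write $1+ab\le (e+a)(e+T)$ with $a=\kappa\hG^2$, $b=T$. Then $\ln(1+ab)\le \ln(e+a)+\ln(e+T)$. Both logarithms are at least $1$, and $x+y\le 2xy$ for $x,y\ge1$, so this is at most $2\ln(e+a)\ln(e+T)$. To remove the factor $2$, use instead $1+ab\le (e+a)^{\ln(e+T)}$. This holds because $(e+a)^{\ln(e+T)}=(e+T)^{\ln(e+a)}\ge (e+T)\cdot\max(1,\,\cdot)$, and a short case check (whether $a\le T$ or not) confirms it. This constant bookkeeping is the only fiddly point; if it resists, the factor $2$ can be absorbed.

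\textbf{Bound on $\EEE[f(\x_T)-f_*]$.} Drop the nonnegative term $c_0\gamma_t\|\nabla f(\x_t)\|^2$ in \eqref{eq:OptEMA:M:suff:dec} and telescope from $t=1$ to $T-1$ (or to $T$, if we shift the index so that it ends at $\x_{T}$). This gives almost surely
\[
f(\x_T)-f(\x_1)\le c_1\sum_{t=1}^{T}\gamma_t\|\s_t\|^2+c_2 D_T ,
\]
where the extra nonnegative terms only enlarge the right-hand side. Take expectations. Lemma~\ref{lemma:sum:tail} gives $\EEE\big[\sum_{t=1}^T\gamma_t\|\s_t\|^2\big]\le C_c\ln(e+T)$, and the first part gives $D_T\le C_d\ln(e+T)$. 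Since $f(\x_1)-f_*\le (f(\x_1)-f_*)\ln(e+T)$ because $\ln(e+T)\ge1$, we obtain
\[
\EEE[f(\x_T)-f_*]\le \big(f(\x_1)-f_*+c_1C_c+c_2C_d\big)\ln(e+T)=C_f\ln(e+T).
\]
Integrability follows from the bounded gradients: $f(\x_T)$ differs from $f(\x_1)$ by at most $\hG$ times a bounded path length, since each step has norm at most $\theta\hG/\varepsilon$. The main obstacle is therefore the logarithmic bookkeeping in the first part; the rest is direct.
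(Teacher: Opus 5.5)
Your proposal is correct and follows the paper's proof essentially step for step: $\gamma_t^2\le(1+\MMM_t)^{-1}$, the log-sum bound, $\MMM_T\le\kappa\GGG_T\le\kappa\hG^2T$, the scalar inequality $\ln(1+ab)\le\ln(e+a)\ln(e+b)$, and then dropping the descent term in \eqref{eq:OptEMA:M:suff:dec}, telescoping to $T-1$, and invoking Lemma~\ref{lemma:sum:tail}. The paper simply asserts that scalar inequality; your case-check sketch does not actually prove it, and the factor-$2$ fallback would change $C_d$, but it closes cleanly: with $x=\ln(e+a)\ge1$, the map $b\mapsto(e+b)^x-1-(e^x-e)b$ equals $e^x-1>0$ at $b=0$ and is nondecreasing, because its derivative is at least $xe^{x-1}-e^x+e=e^{x-1}(x-e)+e\ge e-e^{e-2}>0$.
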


Instead of dividing the descent inequality by the random stepsize and telescoping random weighted objective values, we first prove that the weighted tracking error has a logarithmic second-moment bound. The proof uses the predictable-multiplier tracking recursion together with a deterministic look-ahead telescoping lemma; the almost-sure descent inequality is then used to convert this tracking control into a momentum-energy bridge.

\begin{lemma}[Squared Weighted Tracking Error]\label{lemma:M:U2} (Proof in Appendix \ref{app:lemma:M:U2}) Define $U_T:=\sum_{t=1}^T\gamma_t\|\s_t\|^2$. Then, for all $T\ge1$, $\EEE[U_T^2]\le C_{u} \ln^2(e+T)$, where $C_u := C_c (4 \hG^2 + 8 C_b + 32  \hG^2)$.
\end{lemma}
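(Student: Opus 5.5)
The plan is to expand the square pathwise, condition each cross term on the past, and reduce everything to the first-moment bound $\EEE[U_T]\le C_c\ln(e+T)$ from Lemma~\ref{lemma:sum:tail}. Write $a_t:=\gamma_t\|\s_t\|^2\ge 0$, so that $U_T=\sum_{t=1}^T a_t$. The deterministic look-ahead identity gives
\[
U_T^2=\sum_{t=1}^T a_t^2+2\sum_{t=1}^{T} a_t R_{t+1},\qquad R_{t+1}:=\sum_{j=t+1}^T a_j .
\]
The diagonal term is immediate. Since $\gamma_t\le\alpha_t\le 1$ and $\|\s_t\|\le 2\hG$ (Lemma~\ref{lemma:bound:EMA:M:v}), we have $a_t^2\le 4\hG^2 a_t$. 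Hence $\EEE[\sum_t a_t^2]\le 4\hG^2 C_c\ln(e+T)\le 4\hG^2C_c\ln^2(e+T)$, using $\ln(e+T)\ge1$.

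For the cross term, I would fix $t$ and use the predictable-multiplier recursion of Lemma~\ref{lemma:bound:eee:1} with $H:=a_t$. This $H$ is $\FF_t$-measurable, hence $\FF_{j-1}$-measurable for every $j\ge t+1$. Summing the recursion over $j=t+1,\dots,T$ and telescoping the $\|\s_j\|^2$ terms should yield
\[
\EEE\Big[a_t\sum_{j=t+1}^T \tfrac{\alpha_j}{4}\|\s_j\|^2\Big]\;\lesssim\;\EEE\Big[a_t\Big(\|\s_t\|^2+\sum_{j=t+1}^T B_j\Big)\Big].
\]
Because $\gamma_j\le\alpha_j$, the left side dominates $\tfrac14\EEE[a_tR_{t+1}]$. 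On the right side I bound $\|\s_t\|^2\le4\hG^2$, and Lemma~\ref{lemma:sum:tail} gives $\sum_j B_j\le C_b\ln(e+T)$ almost surely. Both bounds hold pathwise, so they can be pulled out of the expectation. This gives
\[
\EEE[a_tR_{t+1}]\le 4\,\EEE[a_t]\big(4\hG^2+C_b\ln(e+T)\big).
\]
Summing over $t$ and applying $\EEE[U_T]\le C_c\ln(e+T)$ bounds the full cross term $2\sum_t\EEE[a_tR_{t+1}]$ by $C_c(32\hG^2+8C_b)\ln^2(e+T)$. Adding the diagonal part gives exactly $C_u\ln^2(e+T)$ with $C_u=C_c(4\hG^2+8C_b+32\hG^2)$.

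The main obstacle is the telescoping in the middle step. The recursion of Lemma~\ref{lemma:bound:eee:1} contracts $\|\s_{j-1}\|^2$ by $\alpha_j/4$, so summing it naturally produces $\sum_j \tfrac{\alpha_j}{4}\|\s_{j-1}\|^2$, not $\sum_j\tfrac{\alpha_j}{4}\|\s_j\|^2$. Moreover $\alpha_j=\rho_j$ is not $\FF_{j-1}$-measurable. My first attempt is to rearrange the recursion as
\[
\EEE\big[H\tfrac{\alpha_j}{4}\|\s_{j-1}\|^2\big]\le\EEE\big[H(\|\s_{j-1}\|^2-\|\s_j\|^2+B_j)\big],
\]
which telescopes cleanly to $\EEE[H(\|\s_t\|^2+\sum_j B_j)]$. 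I would then shift the index using that $\rho$ is non-increasing, so $\alpha_{j+1}\le\alpha_j$. This lets me compare $\sum_{j}\alpha_{j+1}\|\s_j\|^2$ against the telescoped quantity. The boundary term and the mismatch between $\alpha_{j+1}$ and $\alpha_j$ would be absorbed using Lemma~\ref{lemma:corrected}(b),(d), in the same way the proof of Lemma~\ref{lemma:sum:tail} passes from the recursion to $\EEE[\sum_t\alpha_t\|\s_t\|^2]$. If that comparison costs a constant factor, it only enlarges $C_u$ by an absolute constant. The stated constant suggests the author's bookkeeping reproduces the factor $4$ exactly.
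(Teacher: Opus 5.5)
Your architecture coincides with the paper's. Exchanging summation gives $\sum_t a_tR_{t+1}=\sum_j U_{j-1}a_j$, so applying Lemma~\ref{lemma:bound:eee:1} with $H=a_t$ for every $j>t$ is, by linearity, the paper's choice $H_{j-1}=U_{j-1}$. Your diagonal bound and final bookkeeping also match. The gap is the step you flag yourself, and the repair you sketch does not prove the statement. After your telescoping you control $\sum_{i=t}^{T-1}\alpha_{i+1}\|\s_i\|^2$. To dominate $R_{t+1}=\sum_{i=t+1}^{T}\gamma_i\|\s_i\|^2$ you need a \emph{lower} bound on $\alpha_{i+1}$ in terms of $\alpha_i\ge\gamma_i$, so the monotonicity $\alpha_{i+1}\le\alpha_i$ points the wrong way. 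What you would actually need is the ratio bound of Lemma~\ref{lemma:corrected}(b), $\alpha_i/\alpha_{i+1}\le\sqrt{1+\hG^2}$. As in the proof of Lemma~\ref{lemma:sum:tail}, this multiplies the $C_b$ and $\hG^2$ contributions by a factor of order $1+\hG$, which depends on the problem and is not an absolute constant. You would also still have to add the dropped $i=T$ term by hand. You would get $\EEE[U_T^2]=\OO(\ln^2(e+T))$, but not with the stated $C_u$. Part (d) plays no role here.

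The missing idea is to let the contraction act on $\|\s_j\|^2$ through the exact rewriting $B_j+(1-\frac{\alpha_j}{4})\|\s_{j-1}\|^2-\|\s_j\|^2=B_j-\frac{\alpha_j}{4}\|\s_j\|^2+(1-\frac{\alpha_j}{4})(\|\s_{j-1}\|^2-\|\s_j\|^2)$. You then handle the remainder by Abel summation instead of telescoping. The weights $1-\alpha_j/4$ are nonnegative and pathwise nondecreasing because $\rho_j$ is non-increasing. Lemma~\ref{lemma:sum_by_parts} therefore gives, on every sample path, $\sum_{j=t+1}^{T}(1-\frac{\alpha_j}{4})(\|\s_{j-1}\|^2-\|\s_j\|^2)\le\max_{i}\|\s_i\|^2\le4\hG^2$. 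Because this is a pathwise inequality used inside the expectation, the fact that $\alpha_j$ is not $\FF_{j-1}$-measurable does not matter. Combining it with $\gamma_j\le\alpha_j$ and $\sum_jB_j\le C_b\ln(e+T)$ a.s.\ gives exactly your display $\EEE[a_tR_{t+1}]\le4\,\EEE[a_t](4\hG^2+C_b\ln(e+T))$. Your arithmetic then yields $C_u$. The paper does the same thing with the aggregated nondecreasing weight $U_{j-1}(1-\alpha_j/4)$, bounding the remainder by $4\hG^2U_{T-1}$.
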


\begin{lemma}[Weighted momentum-energy bridge]\label{lemma:M:weighted:m}
(Proof in Appendix \ref{app:lemma:M:weighted:m}) Let $C_v:= 2 C_w + \tfrac{6}{c_0^2} ( (f(\x_1) - f_*)^2+ c_1^2 C_u + c_2^2 C_d^2 )$, where $C_w:=\tfrac{1}{c_0}\big( \ts f(\x_1) - f_* + c_1 C_c + c_2 C_d \big)$. Then, for all $T\geq1$, $\EEE[\alpha_T\MMM_T]\le C_{v} \ln^2(e+T)$.
\end{lemma}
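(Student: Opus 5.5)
The plan is to obtain a pathwise inequality for $\alpha_T\MMM_T$ in terms of the descent-inequality right-hand side, and then take expectations using Lemmas~\ref{lemma:sum:tail}, \ref{lemma:OptEMAM:G:bound} and \ref{lemma:M:U2}.

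\textbf{Step 1: a pathwise bound from the descent inequality.} Sum inequality (\ref{eq:OptEMA:M:suff:dec:2}) of Lemma~\ref{lemma:OptEMA:M:suff:dec} over $t=1,\dots,T$. The objective terms telescope, and $f(\x_{T+1})\ge f_*$ by Assumption~\ref{ass:F}, so almost surely
\[
c_0\sum_{t=1}^T\gamma_t\|\m_t\|^2 \le f(\x_1)-f_* + c_1U_T + c_2D_T =: c_0 W_T ,
\]
with $U_T$ as in Lemma~\ref{lemma:M:U2} and $D_T$ as in Lemma~\ref{lemma:OptEMAM:G:bound}.

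\textbf{Step 2: a lower bound on the stepsizes.} By Lemma~\ref{lemma:corrected}(b), $\alpha_t=\rho_t$ is nonincreasing, and $\MMM_t\le\MMM_T$ for $t\le T$. Since the map $(a,M)\mapsto\min(a,\sqrt{a}(1+M)^{-1/2})$ is increasing in $a$ and decreasing in $M$, we get $\gamma_t\ge \gamma_*:=\min\big(\alpha_T,\sqrt{\alpha_T}(1+\MMM_T)^{-1/2}\big)$ for every $t\le T$. Hence $\gamma_*\MMM_T\le W_T$.

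\textbf{Step 3: case analysis.} We show $\alpha_T\MMM_T\le 2W_T+2W_T^2$ in every case.
\begin{itemize}
\item If $\gamma_*=\alpha_T$, then $\alpha_T\MMM_T\le W_T$ directly.
\item If $\gamma_*=\sqrt{\alpha_T}(1+\MMM_T)^{-1/2}$ and $\MMM_T\ge1$, then $1+\MMM_T\le 2\MMM_T$. This gives $\sqrt{\alpha_T\MMM_T}\le\sqrt2\,W_T$, i.e.\ $\alpha_T\MMM_T\le 2W_T^2$.
\item If $\gamma_*=\sqrt{\alpha_T}(1+\MMM_T)^{-1/2}$ and $\MMM_T<1$, then $\sqrt{1+\MMM_T}\le\sqrt2$, so $\sqrt{\alpha_T}\MMM_T\le\sqrt2 W_T$. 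Using $\alpha_T\le1$ from Lemma~\ref{lemma:corrected}(a), $\alpha_T\MMM_T\le\sqrt{\alpha_T}\MMM_T\le 2W_T$.
\end{itemize}

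\textbf{Step 4: expectations.} For the linear term, Lemma~\ref{lemma:sum:tail} gives $\EEE[U_T]\le C_c\ln(e+T)$ and Lemma~\ref{lemma:OptEMAM:G:bound} gives $D_T\le C_d\ln(e+T)$ a.s. Since $\ln(e+T)\ge1$, this yields $\EEE[W_T]\le C_w\ln(e+T)\le C_w\ln^2(e+T)$.

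For the quadratic term, use $(a+b+c)^2\le3(a^2+b^2+c^2)$, Lemma~\ref{lemma:M:U2} for $\EEE[U_T^2]\le C_u\ln^2(e+T)$, and $D_T^2\le C_d^2\ln^2(e+T)$ a.s. This gives
\[
\EEE[W_T^2]\le \tfrac{3}{c_0^2}\big((f(\x_1)-f_*)^2+c_1^2C_u+c_2^2C_d^2\big)\ln^2(e+T).
\]
Combining with Step 3 yields exactly $C_v=2C_w+\tfrac{6}{c_0^2}(\cdots)$.

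The main obstacle is Step 2–3. The random, nonmonotone-looking stepsize $\gamma_t$ must be lower-bounded uniformly by a function of the terminal quantities $(\alpha_T,\MMM_T)$. This works because $\alpha_t$ is nonincreasing and $\MMM_t$ is nondecreasing.

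The resulting square-root branch is what forces the quadratic term $W_T^2$. That term is why the second-moment bound of Lemma~\ref{lemma:M:U2}, rather than just the first-moment bound in Lemma~\ref{lemma:sum:tail}, is needed.
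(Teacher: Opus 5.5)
Your proposal is correct and follows essentially the same route as the paper. You sum the momentum-form descent inequality \eqref{eq:OptEMA:M:suff:dec:2} and lower-bound $\sum_{t\le T}\gamma_t\|\m_t\|^2$ by $\gamma_T\MMM_T$ via monotonicity. You then convert this to $\alpha_T\MMM_T\le 2W_T+2W_T^2$ and take expectations with Lemmas~\ref{lemma:sum:tail}, \ref{lemma:OptEMAM:G:bound} and \ref{lemma:M:U2}.

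The differences are cosmetic. Your $\gamma_*$ is exactly $\gamma_T$. Your three-case analysis replaces the paper's argument, which shows $\gamma_T\MMM_T\ge q_T/\sqrt{1+q_T}$ with $q_T=\alpha_T\MMM_T$ (using $\alpha_T\le 1$) and then inverts via $w\ge q/\sqrt{1+q}\Rightarrow q\le 2w+2w^2$.
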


Using the above expected logarithmic bounds, we can further control the cumulative tracking error and the cumulative true gradient norm in terms of the path-dependent quantities $\GGG:=\sum_{t=1}^T \|\g_t\|^2$ and $\MMM:=\sum_{t=1}^T \|\m_t\|^2$.

\begin{lemma} \label{lemma:M:sum:m}
(Proof in Appendix \ref{app:lemma:M:sum:m}) Let $\ZZ_T:=\EEE\left[\sqrt{1+\GGG_T}\right]$, and $C_z:=\sqrt{2} (\sqrt{C_v} + \sqrt{C_c })$. Then, for all $T\geq1$, $\EEE[\sqrt{\sum_{t=1}^T\|\nabla f(\x_t)\|^2}] \le C_{z} \ZZ_T^{1/2} \cdot \ln(e+T)$.
\end{lemma}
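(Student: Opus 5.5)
We bound $\sqrt{\sum_t\|\nabla f(\x_t)\|^2}$ by splitting $\nabla f(\x_t)=\m_t-\s_t$, so that
\[
\ts \sqrt{\sum_{t=1}^T\|\nabla f(\x_t)\|^2}\le \sqrt{2}\Big(\sqrt{\MMM_T}+\sqrt{\sum_{t=1}^T\|\s_t\|^2}\Big).
\]
The two pieces are then controlled by the two expected logarithmic bounds already available: $\EEE[\alpha_T\MMM_T]\le C_v\ln^2(e+T)$ from Lemma~\ref{lemma:M:weighted:m}, and $\EEE[\sum_t\alpha_t\|\s_t\|^2]\le C_c\ln(e+T)$ from Lemma~\ref{lemma:sum:tail}. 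Neither bound gives the unweighted sums directly, since the weights $\alpha_t=\rho_t$ are small. The idea is to pay for removing the weight with a factor $\alpha_T^{-1/2}$, and to control that factor by $\sqrt{1+\GGG_T}$ via Cauchy--Schwarz in expectation.

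\textbf{Step 1 (weight removal).} Lemma~\ref{lemma:corrected}(a) gives $\rho_T\ge\sqrt{1/(1+\GGG_T)}$, since $1+\tfrac{\tau}{T}\GGG_T\ge1$. Hence $\alpha_T^{-1}\le\sqrt{1+\GGG_T}$. Because $\rho_t$ is non-increasing by Lemma~\ref{lemma:corrected}(b), we have $\alpha_t\ge\alpha_T$ for all $t\le T$, and therefore
\[
\ts \sum_{t=1}^T\|\s_t\|^2\le \alpha_T^{-1}\sum_{t=1}^T\alpha_t\|\s_t\|^2,\qquad \MMM_T=\alpha_T^{-1}(\alpha_T\MMM_T).
\]

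\textbf{Step 2 (Cauchy--Schwarz in expectation).} For a nonnegative $X$ with weight $\alpha_T$, write $\sqrt{X}=\sqrt{\alpha_T X}\cdot\alpha_T^{-1/2}$. Then
\[
\EEE\big[\sqrt{X}\big]\le\sqrt{\EEE[\alpha_T X]}\,\sqrt{\EEE[\alpha_T^{-1}]}\le \sqrt{\EEE[\alpha_T X]}\;\ZZ_T^{1/2},
\]
where the last inequality uses $\alpha_T^{-1}\le\sqrt{1+\GGG_T}$, so that $\EEE[\alpha_T^{-1}]\le\ZZ_T$.

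\textbf{Step 3 (apply to each piece).} Take $X=\MMM_T$ with Lemma~\ref{lemma:M:weighted:m}, which gives $\sqrt{C_v}\ln(e+T)$. Take $X=\sum_t\|\s_t\|^2$, where $\alpha_T X\le\sum_t\alpha_t\|\s_t\|^2$, with Lemma~\ref{lemma:sum:tail}, which gives $\sqrt{C_c\ln(e+T)}\le\sqrt{C_c}\ln(e+T)$ because $\ln(e+T)\ge1$. Adding the two bounds and multiplying by $\sqrt2$ yields the constant $C_z=\sqrt2(\sqrt{C_v}+\sqrt{C_c})$.

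The main point to check is the weight removal in Step 1 for the error sum. It needs the monotonicity $\alpha_t\ge\alpha_T$ pathwise, which Lemma~\ref{lemma:corrected}(b) supplies. The rest is Cauchy--Schwarz.
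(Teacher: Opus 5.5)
Your proposal is correct and follows essentially the same route as the paper: split $\nabla f(\x_t)=\m_t-\s_t$, remove the weights using monotonicity of $\alpha_t$ and $\alpha_T^{-1}\le\sqrt{1+\GGG_T}$, and apply Cauchy--Schwarz in expectation to each piece with Lemmas~\ref{lemma:sum:tail} and~\ref{lemma:M:weighted:m}. One minor correction: the bound $\alpha_T^{-1}\le\sqrt{1+\GGG_T}$ follows directly from the definition of $\rho_T$ via $1+\tfrac{\tau}{T}\GGG_T\ge1$, as you yourself note, and not from Lemma~\ref{lemma:corrected}(a), which only gives $\rho_T\ge\sqrt{\tau/T}$.
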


We are now ready to state the main convergence guarantee for {OptEMA-M}, obtained by combining the preceding bounds and taking expectation over the full trajectory.

\begin{theorem} \label{the:it:EMA:M}
(Proof in Appendix \ref{app:the:it:EMA:M})
Under Assumptions \ref{ass:F}--\ref{ass:BG}, we have
$$\ts \EEE\left[\frac{1}{T}\sum_{t=1}^T \| \nabla f(\x_{t})\|\right] \leq \OO(T^{-1/2} \ln^2(e+T) + \sqrt{\sigma} T^{-1/4} \ln(e+T)).$$
\noi Consequently, if $R \sim \mathrm{Unif}\{1,2,\dots,T\}$ is independent of the algorithmic randomness and $\x_{\rm out}:=\x_R$, then the same upper bound holds for
$\EEE[\|\nabla f(\x_{\rm out})\|]$.
\end{theorem}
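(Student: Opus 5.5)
The plan is to obtain the theorem from Lemma~\ref{lemma:M:sum:m} together with an elementary upper bound on $\ZZ_T=\EEE[\sqrt{1+\GGG_T}]$ in terms of $\sigma$ and the true gradients. This produces a self-bounding inequality, which we then solve.

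\textbf{Step 1 (reduce to the root of the sum of squares).} By Cauchy--Schwarz,
\[
\frac1T\sum_{t=1}^T\|\nabla f(\x_t)\|\le \frac{1}{\sqrt T}\Big(\sum_{t=1}^T\|\nabla f(\x_t)\|^2\Big)^{1/2}.
\]
Set $S:=\EEE[(\sum_t\|\nabla f(\x_t)\|^2)^{1/2}]$. It then suffices to show
\[
S\le \OO\big(\ln^2(e+T)+\sqrt{\sigma}\,T^{1/4}\ln(e+T)\big).
\]

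\textbf{Step 2 (bound $\ZZ_T$).} Write $\g_t=\nabla f(\x_t)+\epsilons_t$ with $\EEE_t[\epsilons_t]=\zero$, so that
\[
\|\g_t\|^2\le 2\|\nabla f(\x_t)\|^2+2\|\epsilons_t\|^2 .
\]
Using $\sqrt{a+b+c}\le\sqrt a+\sqrt b+\sqrt c$ we get
\[
\sqrt{1+\GGG_T}\le 1+\sqrt2\Big(\sum_t\|\nabla f(\x_t)\|^2\Big)^{1/2}+\sqrt2\Big(\sum_t\|\epsilons_t\|^2\Big)^{1/2}.
\]
For the noise term, Jensen and Assumption~\ref{ass:sigma}, applied conditionally at each $\x_t$ and then via the tower property, give
\[
\EEE\Big[\big(\textstyle\sum_t\|\epsilons_t\|^2\big)^{1/2}\Big]\le\sqrt{T}\sigma .
\]
Hence $\ZZ_T\le 1+\sqrt2\,S+\sqrt{2T}\,\sigma$.

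\textbf{Step 3 (solve the self-bounding inequality).} Combining Step 2 with Lemma~\ref{lemma:M:sum:m} and writing $\ell:=\ln(e+T)$,
\[
S\le C_z\ell\sqrt{1+\sqrt2 S+\sqrt{2T}\sigma}\le C_z\ell\big(1+2^{1/4}\sqrt S+(2T)^{1/4}\sqrt\sigma\big).
\]
Applying Young's inequality, $C_z\ell\,2^{1/4}\sqrt S\le \tfrac12 S+\tfrac{\sqrt2}{2}C_z^2\ell^2\cdot 2^{1/2}$ up to constants, and absorbing $\tfrac12 S$ into the left side yields
\[
S\le \OO\big(C_z\ell+C_z^2\ell^2+C_z\ell\,T^{1/4}\sqrt\sigma\big).
\]
Dividing by $\sqrt T$ gives the claimed rate $\OO(T^{-1/2}\ln^2(e+T)+\sqrt\sigma\,T^{-1/4}\ln(e+T))$. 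The absorption is legitimate because $S\le \sqrt T\hG<\infty$ by Assumption~\ref{ass:BG}.

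\textbf{Step 4 (random output).} The statement for $\x_{\rm out}$ is immediate. Since $R$ is independent of the algorithmic randomness,
\[
\EEE\|\nabla f(\x_R)\|=\EEE\Big[\frac1T\sum_t\|\nabla f(\x_t)\|\Big].
\]

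\textbf{Expected obstacle.} The only delicate point is Step 2, namely getting the noise contribution as $\sqrt{T}\sigma$ inside $\ZZ_T$ rather than something worse. The bound must be taken in expectation of a square root of a sum over a non-predictable trajectory. I would handle it by Jensen applied to the whole sum, $\EEE\sqrt{X}\le\sqrt{\EEE X}$, with $\EEE\|\epsilons_t\|^2=\EEE[\EEE_t\|\epsilons_t\|^2]\le\sigma^2$. This avoids needing independence across $t$. The constant bookkeeping in Step 3 is routine.
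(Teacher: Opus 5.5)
Your proposal is correct and follows essentially the same route as the paper: you combine Lemma~\ref{lemma:M:sum:m} with the bound $\ZZ_T\le 1+\sqrt{2}\,S+\sqrt{2T}\,\sigma$, obtained from $\|\g_t\|^2\le 2\|\nabla f(\x_t)\|^2+2\|\g_t-\nabla f(\x_t)\|^2$ and Jensen's inequality on the noise sum, then close the self-bounding inequality and finish with Cauchy--Schwarz. The only cosmetic difference is the order of substitution: you solve for $S$ via Young's inequality and absorption (justified by $S\le\sqrt{T}\hG$), while the paper solves for $\ZZ_T$ via $a+b\le 2\max(a,b)$ and substitutes back; both give the same rate up to constants.
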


\begin{remark}
\ding{182} Theorem \ref{the:it:EMA:M} shows that {OptEMA-M} attains the unified noise-adaptive rate $\widetilde{\mathcal{O}}(T^{-1/2}+\sigma^{1/2}T^{-1/4})$ for the averaged gradient norm. \ding{183} Under Assumptions \ref{ass:F}--\ref{ass:BG}, when $\sigma=0$, the variance-dependent term vanishes and the bound reduces to the nearly deterministic rate $\widetilde{\mathcal{O}}(T^{-1/2})$ for the averaged gradient norm. The analysis assumes bounded stochastic gradients in order to control the adaptive normalizers and pathwise tracking terms.

\end{remark}

\subsection{Analysis for OptEMA-V}
\label{sect:it:EMA:V}

In this subsection, we establish the theoretical convergence guarantees for the OptEMA-V algorithm.

As in the analysis of {OptEMA-M}, we first control the internal algorithmic states, then decouple the historical terms induced by the EMA recursion, and finally derive the global convergence rate. We begin with two basic bounds on the momentum and second-moment sequences.

\begin{lemma} \label{lemma:bound:m:using:g:2}
(Proof in Appendix \ref{app:lemma:bound:m:using:g:2}) Assume that $\{\alpha_t\}_{t\geq 1}\subset(0,1]$ is a deterministic and non-increasing sequence of real numbers. For any integer $T\ge 1$, it holds that $\sum_{t=1}^T \|\m_t\|_2^2 \le \tfrac{\alpha_1}{\alpha_T}\sum_{t=1}^T \|\g_t\|_2^2$.
\end{lemma}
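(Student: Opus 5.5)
We unroll the EMA recursion $\m_t=(1-\alpha_t)\m_{t-1}+\alpha_t\g_t$ with $\m_0=\zero$, which gives $\m_t=\sum_{i=1}^t w_{t,i}\g_i$ with weights $w_{t,i}:=\alpha_i\prod_{k=i+1}^t(1-\alpha_k)\ge0$. Since $\alpha_1\le 1$, we have $\sum_{i=1}^t w_{t,i}=1-\prod_{k=1}^t(1-\alpha_k)\le 1$. By convexity of $\|\cdot\|^2$ (Jensen with total mass at most one, the missing mass placed on $\zero$), we get $\|\m_t\|^2\le \sum_{i=1}^t w_{t,i}\|\g_i\|^2$. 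Equivalently, and more directly, we may use the one-step convexity bound $\|\m_t\|^2\le(1-\alpha_t)\|\m_{t-1}\|^2+\alpha_t\|\g_t\|^2$.

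We work with the one-step form and divide by $\alpha_t$. This gives
$$\tfrac{1}{\alpha_t}\|\m_t\|^2\le \tfrac{1}{\alpha_t}\|\m_{t-1}\|^2-\|\m_{t-1}\|^2+\|\g_t\|^2.$$
Because $\alpha_t$ is non-increasing, $\frac{1}{\alpha_t}\ge\frac{1}{\alpha_{t-1}}$. Rearranging,
$$\|\m_{t-1}\|^2\le \tfrac{1}{\alpha_t}\|\m_{t-1}\|^2-\tfrac{1}{\alpha_t}\|\m_t\|^2+\|\g_t\|^2.$$
This does not telescope cleanly, because the coefficient on $\|\m_{t-1}\|^2$ is $1/\alpha_t$ rather than $1/\alpha_{t-1}$. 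We therefore take the alternative route: multiply the one-step bound by a constant instead of dividing. The cleanest approach is to sum the unrolled bound,
$$\sum_{t=1}^T\|\m_t\|^2\le\sum_{i=1}^T\|\g_i\|^2\,\Big(\sum_{t=i}^T w_{t,i}\Big),$$
and then bound the column sums $S_i:=\sum_{t\ge i}\alpha_i\prod_{k=i+1}^t(1-\alpha_k)$.

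This column-sum bound is the key step. Since $\alpha_k\ge\alpha_T$ for all $k\le T$, each factor satisfies $1-\alpha_k\le 1-\alpha_T$. Hence
$$S_i\le \alpha_i\sum_{j\ge0}(1-\alpha_T)^j=\alpha_i/\alpha_T\le \alpha_1/\alpha_T,$$
where the last step again uses monotonicity. When $\alpha_T=1$ all $\alpha_k=1$, the sum reduces to the single term $S_i=1$, and the bound is consistent. Substituting $S_i\le\alpha_1/\alpha_T$ yields the claim.

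The only delicate points are the convexity step when the total weight is below one and the determinism of $\alpha_t$. The convexity step holds because $\|\cdot\|^2$ is convex and vanishes at $\zero$. Determinism is not actually needed for this pathwise inequality; only monotonicity and $\alpha_t\in(0,1]$ are used. I expect no real obstacle beyond checking the geometric-series bound carefully.
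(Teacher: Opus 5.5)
Your proof is correct, but it takes a different route from the paper.

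\textbf{The paper's route.} The paper never unrolls the recursion. It sums the one-step convexity bound $\|\m_t\|^2\le(1-\alpha_t)\|\m_{t-1}\|^2+\alpha_t\|\g_t\|^2$ directly over $t=1,\dots,T$. After shifting the index (using $\m_0=\zero$), the difference $\sum_{t=1}^T\|\m_t\|^2-\sum_{t=1}^T(1-\alpha_t)\|\m_{t-1}\|^2$ becomes $\|\m_T\|^2+\sum_{t=1}^{T-1}\alpha_{t+1}\|\m_t\|^2\ge\alpha_T\sum_{t=1}^T\|\m_t\|^2$. The right-hand side $\sum_t\alpha_t\|\g_t\|^2$ is then bounded by $\alpha_1\sum_t\|\g_t\|^2$. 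So the telescoping you abandoned does close cleanly if you do not divide by $\alpha_t$: monotonicity is applied to the coefficients $\alpha_{t+1}$ multiplying the momentum terms, not to reciprocals.

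\textbf{Your route.} You use explicit weights $w_{t,i}$, Jensen with total mass at most one, an exchange of summation, and a geometric bound on the column sums. This is exactly the argument the paper uses later for the EMA accumulation bound, Lemma \ref{lemma:alpha:b:gamma}. In effect you are applying that lemma with $A_j=\alpha_j\|\g_j\|^2$.

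\textbf{Comparison.} Both arguments pass through the same intermediate bound $\sum_t\|\m_t\|^2\le\frac{1}{\alpha_T}\sum_t\alpha_t\|\g_t\|^2$. The paper's version is shorter and needs no bookkeeping of products or of the $\alpha_T=1$ edge case. Yours makes each gradient's total contribution explicit and shows the lemma is a special case of the accumulation bound.

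You are also right that determinism plays no role. Both proofs are pathwise, and the paper itself uses the identical telescoping for the random, non-increasing $\alpha_t=\rho_t$ in Lemma \ref{lemma:bound:m:using:g:1}.

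A small wording point: nonnegativity of the weights and $\sum_i w_{t,i}\le1$ use $\alpha_k\le1$ for every $k$, not only $\alpha_1\le1$. This is part of the hypothesis, so it is not a gap.
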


\begin{lemma} \label{lemma:bound:v}
(Proof in Appendix \ref{app:lemma:bound:v}) For all $t\geq 1$, we have: $\|\v_t\| \le {\hG}^2$, $\|\m_t\| \le {\hG}$, and $\gamma_t\leq \gamma_{t-1}\leq (1+{\hG}) \gamma_t$.
\end{lemma}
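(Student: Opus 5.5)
The three claims are elementary consequences of the recursions in Algorithm~\ref{alg:main} for OptEMA-V, where $\alpha_t=\alpha_{\text{fix}}\in(0,1]$ is constant, $\beta_t=\rho_t\in(0,1]$ by Lemma~\ref{lemma:corrected}(a), and $\gamma_t=(1+\MMM_t)^{-1/2}$. I would prove them in the order stated, each by a one-line induction or direct computation, using only Assumption~\ref{ass:BG} ($\|\g_t\|\le\hG$ a.s.).

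\textbf{Bounds on $\v_t$ and $\m_t$.} Since $\v_0=\zero$ and $\beta_t\in(0,1]$, the recursion $\v_t=(1-\beta_t)\v_{t-1}+\beta_t\g_t^2$ makes $\v_t$ a convex combination (coordinatewise, with the same weights in every coordinate) of $\zero,\g_1^2,\dots,\g_t^2$. The norm is convex, so $\|\v_t\|\le (1-\beta_t)\|\v_{t-1}\|+\beta_t\|\g_t^2\|$. Moreover $\|\g_t^2\|_2=(\sum_k g_{t,k}^4)^{1/2}\le \sum_k g_{t,k}^2=\|\g_t\|^2\le\hG^2$. Induction from $\|\v_0\|=0$ then gives $\|\v_t\|\le\hG^2$. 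The same argument applied to $\m_t=(1-\alpha_t)\m_{t-1}+\alpha_t\g_t$ with $\m_0=\zero$ yields $\|\m_t\|\le\hG$.

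\textbf{Stepsize monotonicity.} Because $\MMM_t=\MMM_{t-1}+\|\m_t\|^2\ge\MMM_{t-1}$, we get $\gamma_t\le\gamma_{t-1}$ for $t\ge2$. For $t=1$ it follows from $\gamma_0=1\ge\gamma_1$. For the reverse inequality, write
\[
\frac{\gamma_{t-1}^2}{\gamma_t^2}=\frac{1+\MMM_{t-1}+\|\m_t\|^2}{1+\MMM_{t-1}}\le 1+\|\m_t\|^2\le 1+\hG^2\le(1+\hG)^2,
\]
using $1+\MMM_{t-1}\ge1$ and the bound $\|\m_t\|\le\hG$ just proved. Taking square roots gives $\gamma_{t-1}\le(1+\hG)\gamma_t$. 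The case $t=1$ uses $\MMM_0=0$ and $\gamma_0=1$ and is identical.

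\textbf{Main obstacle.} There is essentially none. The only points needing care are the elementwise-square norm inequality $\|\g^2\|\le\|\g\|^2$, and checking that the convention $\gamma_0=1$ is consistent with $\MMM_0=0$ at $t=1$.
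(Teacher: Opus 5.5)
Your proof is correct and follows essentially the same route as the paper. It uses convexity of the norm together with $\|\g\odot\g\|\le\|\g\|^2$ to bound $\v_t$ and $\m_t$, and the ratio $\gamma_{t-1}^2/\gamma_t^2=(1+\MMM_t)/(1+\MMM_{t-1})\le 1+\|\m_t\|^2\le 1+\hG^2\le(1+\hG)^2$ for the stepsize. You are in fact slightly more explicit than the paper about $\beta_t=\rho_t\in(0,1]$, the convention $\gamma_0=1$ at $t=1$, and the final step $\sqrt{1+\hG^2}\le 1+\hG$.
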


We next establish a one-step descent inequality. Unlike {OptEMA-M}, the analysis here involves a nonlocal historical term generated by the EMA structure, so the descent at iteration $t$ depends on the entire past trajectory.

\begin{lemma}[One-Step Descent Inequality]\label{lemma:OptEMA:V:suff:dec}
(Proof in Appendix \ref{app:lemma:OptEMA:V:suff:dec})
Assume that $\{\alpha_t\}_{t\geq 1}\subset(0,1]$ is a deterministic and non-increasing sequence of real numbers. Define $c_0:=\frac{\theta {\hG}^2}{\varepsilon}$, $c_1:=\frac{\theta^2 L}{\varepsilon^2}$, $c_2:=\frac{2\theta(\varepsilon+{\hG}){\hG}^2}{\varepsilon^4}$, $c_3:=\frac{L\theta^2}{2\varepsilon^2}$, $c_4:=\frac{\theta}{2(\varepsilon+{\hG})}$. For $1\leq j\leq t$, define $\Pi_{j,t}:=\prod_{k=j+1}^{t}(1-\alpha_k)$, with the convention that $\Pi_{t,t}=1$. Then, for all $t\geq 1$, we have
\beq \label{eq:OptEMA:V:suff:dec}
\EEE[ f(\x_{t+1}) - f(\x_t)]
&\leq&
\ts
\EEE\Big[
c_3\gamma_t^2\|\m_t\|^2
+
\sum_{j=1}^{t}
\Pi_{j,t}(E_j-D_j)
\nn\\
&&\ts
+
c_0(\gamma_{t-1}-\gamma_t)
+
c_0
\sum_{j=2}^{t}
\Pi_{j,t}(1-\alpha_j)(\gamma_{j-2}-\gamma_{j-1})
\Big].
\eeq
Here, $E_t := c_1\gamma_{t-1}^2 \|\m_{t-1}\|^2 + c_2 \tfrac{\rho_t}{\alpha_t}
\gamma_{t-1}\|\m_t\|^2$, and $D_t := c_4 \alpha_t \gamma_{t-1} \|\nabla f(\x_t)\|^2$.

\end{lemma}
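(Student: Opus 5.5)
We start from the descent lemma (Assumption \ref{ass:f}) applied to the update $\x_{t+1}=\x_t-\theta\gamma_t\,\m_t/(\varepsilon+\sqrt{\v_t})$:
\[
f(\x_{t+1})-f(\x_t)\le -\theta\gamma_t\Big\la \nabla f(\x_t),\tfrac{\m_t}{\varepsilon+\sqrt{\v_t}}\Big\ra + c_3\gamma_t^2\|\m_t\|^2 ,
\]
using $\|\m_t/(\varepsilon+\sqrt{\v_t})\|\le \|\m_t\|/\varepsilon$. The quadratic term is already the first term of (\ref{eq:OptEMA:V:suff:dec}). All the work is in the inner-product term
\[
I_t:=-\gamma_t\la \nabla f(\x_t),\m_t\odot \mathbf{d}_t\ra ,\qquad \mathbf{d}_t:=1/(\varepsilon+\sqrt{\v_t}).
\]
Since $\alpha_t$ is deterministic, we unroll $\m_t=\sum_{j=1}^t\Pi_{j,t}\alpha_j\g_j$, or equivalently use the recursion $\m_t=(1-\alpha_t)\m_{t-1}+\alpha_t\g_t$. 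This lets us define $\Phi_t:=-\gamma_{t-1}\la\nabla f(\x_t),\m_t\odot\mathbf{d}_{t}\ra$ or a similar quantity, and derive a recursion of the form
\[
\Phi_t\le(1-\alpha_t)\Phi_{t-1}+(E_t-D_t)+\text{(stepsize-drift terms)}.
\]
Unrolling this recursion produces exactly the weights $\Pi_{j,t}$ in the statement.

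\textbf{Recursion step.} Write
\[
\la\nabla f(\x_t),\m_t\odot\mathbf{d}_t\ra=(1-\alpha_t)\la\nabla f(\x_t),\m_{t-1}\odot\mathbf{d}_t\ra+\alpha_t\la\nabla f(\x_t),\g_t\odot\mathbf{d}_t\ra .
\]
\begin{enumerate}
\item[(i)] \emph{Current-gradient term.} Replace $\mathbf{d}_t$ by the $\FF_{t-1}$-measurable proxy $\mathbf{d}_{t-1}$. With the stepsize $\gamma_{t-1}$, which is also measurable, conditional expectation gives $-\alpha_t\gamma_{t-1}\la\nabla f(\x_t),\nabla f(\x_t)\odot\mathbf{d}_{t-1}\ra\le -2D_t$ up to constants, because $\mathbf{d}_{t-1}\ge 1/(\varepsilon+\hG)$ by Lemma \ref{lemma:bound:v}. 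The proxy error comes from $|\sqrt{\v_t}-\sqrt{\v_{t-1}}|\le \sqrt{\beta_t}\,\cdot$(gradient magnitude). Since $\beta_t=\rho_t$, this yields an $\varepsilon^{-4}$-weighted term. After Young's inequality it becomes $c_2\frac{\rho_t}{\alpha_t}\gamma_{t-1}\|\m_t\|^2$ (half of it absorbed into $D_t$); this is the second piece of $E_t$.
\item[(ii)] \emph{Past-momentum term.} Replace $\nabla f(\x_t)$ by $\nabla f(\x_{t-1})$. Smoothness gives an error bounded by $L\|\x_t-\x_{t-1}\|\le L\theta\gamma_{t-1}\|\m_{t-1}\|/\varepsilon$. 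Paired with $\|\m_{t-1}\|\mathbf{d}$, this yields $c_1\gamma_{t-1}^2\|\m_{t-1}\|^2$, the first piece of $E_t$. What remains is $(1-\alpha_t)$ times the previous-step quantity.
\end{enumerate}

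\textbf{Stepsize mismatch.} Passing from $\gamma_t$ to $\gamma_{t-1}$, and at lag one from $\gamma_{t-1}$ to $\gamma_{t-2}$, produces drift terms. Since $\gamma$ is nonincreasing and the inner products are bounded by $\hG\cdot\hG/\varepsilon$ (Lemma \ref{lemma:bound:v}), each drift is at most $c_0(\gamma_{j-2}-\gamma_{j-1})$. At the current step the drift is $c_0(\gamma_{t-1}-\gamma_t)$. Inside the recursion it carries the factor $(1-\alpha_j)$ because it multiplies the past-momentum part, which explains the last sum in (\ref{eq:OptEMA:V:suff:dec}). Unrolling from $j=t$ down to $j=1$, using $\m_0=0$ and $\gamma_0=1$, and taking total expectations completes the argument.

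\textbf{Main obstacle.} The hardest part is step (i): $\mathbf{d}_t$ depends on $\g_t$ through $\v_t$ with the adaptive, non-predictable weight $\beta_t=\rho_t$, so the conditional expectation cannot be taken directly. I would first try the decoupling $\mathbf{d}_t-\mathbf{d}_{t-1}$, bounded coordinatewise by $\rho_t(\g_t^2+\v_{t-1})/\varepsilon^2$-type estimates, and then Young's inequality splitting against $D_t$. The goal is for the leftover to carry precisely the factor $\rho_t/\alpha_t$ multiplying $\gamma_{t-1}\|\m_t\|^2$. The bookkeeping that keeps constants matching $c_2$ and $c_4$ may need adjustment.
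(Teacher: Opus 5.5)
Your architecture matches the paper's: the descent lemma, the weighted quantity $\gamma_{t-1}\la\nabla f(\x_t),\m_t\odot\mathbf{d}_t\ra$, the EMA split, the smoothness shift, the $c_0$-drift from $\gamma_{t-1}\to\gamma_{t-2}$, and unrolling with $\Pi_{j,t}$. However, the preconditioner swap is in the wrong place, and as written the recursion does not close.

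You split $\m_t=(1-\alpha_t)\m_{t-1}+\alpha_t\g_t$ first and replace $\mathbf{d}_t$ by $\mathbf{d}_{t-1}$ only in the fresh-gradient term. After the smoothness shift in (ii), the momentum part is $(1-\alpha_t)\la\nabla f(\x_{t-1}),\m_{t-1}\odot\mathbf{d}_t\ra$, while $\Phi_{t-1}$ carries $\mathbf{d}_{t-1}$. So ``what remains is $(1-\alpha_t)$ times the previous-step quantity'' is false, and the error $(1-\alpha_t)\gamma_{t-1}\la\nabla f(\x_t),\m_{t-1}\odot(\mathbf{d}_t-\mathbf{d}_{t-1})\ra$ is never accounted for. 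Moreover, the error you do treat in (i), $\alpha_t\gamma_{t-1}\la\nabla f(\x_t),\g_t\odot(\mathbf{d}_t-\mathbf{d}_{t-1})\ra$, leaves after Young's inequality against $\alpha_t\|\nabla f(\x_t)\|^2$ a term of order $\alpha_t\rho_t\gamma_{t-1}\|\g_t\|^2$. It does not leave $c_2\frac{\rho_t}{\alpha_t}\gamma_{t-1}\|\m_t\|^2$: the factor $1/\alpha_t$ together with $\|\m_t\|^2$ only appears when the perturbed vector is the whole $\m_t$. So the stated $E_t$ does not follow from the step you describe.

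The missing idea is to decouple the preconditioner on the full momentum before using the EMA recursion. Equivalently, recombine your two swap errors into a single $\m_t$-term before applying Young. The paper writes
\[
\Theta_t:=-\theta\la\nabla f(\x_t),\m_t\odot\mathbf{d}_t\ra=\theta\la\nabla f(\x_t),\m_t\odot(\mathbf{d}_{t-1}-\mathbf{d}_t)\ra-\theta\la\nabla f(\x_t),\m_t\odot\mathbf{d}_{t-1}\ra .
\]
The first piece is bounded by Young's inequality with weight $\theta\alpha_t/(4h_{t-1})$, where $h_{t-1}:=\varepsilon+\sqrt{\max(\v_{t-1})}\le\varepsilon+\hG$. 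It uses $\|\mathbf{d}_t-\mathbf{d}_{t-1}\|_\infty^2\le\|\v_t-\v_{t-1}\|_\infty/\varepsilon^4\le 2\rho_t\hG^2/\varepsilon^4$. This yields exactly $c_2\frac{\rho_t}{\alpha_t}\|\m_t\|^2$ plus a quarter of the descent.

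Only the second piece is split along the EMA recursion:
\begin{itemize}
\item Its momentum part now carries $\mathbf{d}_{t-1}$. After multiplying by $\gamma_{t-1}$, it equals $(1-\alpha_t)\gamma_{t-1}\Theta_{t-1}$ up to the smoothness error $c_1\gamma_{t-1}^2\|\m_{t-1}\|^2$.
\item Its fresh-gradient part has an $\FF_{t-1}$-measurable preconditioner. In conditional expectation it gives $-\theta\alpha_t\gamma_{t-1}\|\nabla f(\x_t)\|^2/h_{t-1}$, whose remaining three quarters dominate $D_t$.
\end{itemize}

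A smaller point: the coordinatewise bound in your last paragraph, linear in $\rho_t(\g_t^2+\v_{t-1})$, fails near $\v=\zero$. The map $a\mapsto 1/(\varepsilon+\sqrt{a})$ is not Lipschitz at $0$, so only the squared difference is linear in $|\v_t-\v_{t-1}|$. That is the form your step (i) actually uses.
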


The historical summation in \eqref{eq:OptEMA:V:suff:dec} couples the error term $E_j$ and the descent term $D_j$ across time. To telescope this inequality over all iterations, we need to decouple such temporal dependence. The next lemma provides precisely this mechanism by controlling accumulated EMA-weighted sums.

\begin{lemma}[EMA Accumulation Bound]\label{lemma:alpha:b:gamma}
\textit{(Proof in Appendix \ref{app:lemma:alpha:b:gamma})} Assume $A_j\geq 0$ for all $j \geq 1$. If $\alpha_t\in(0,1]$ is a deterministic and non-increasing sequence, we have:
\beq
\ts \sum_{t=1}^T A_t \leq \sum_{t=1}^T \sum_{j=1}^{t}  \left( \prod_{k=j+1}^t (1-\alpha_k) \right) A_j \leq \frac{1}{\alpha_T} \sum_{t=1}^T A_t. \nn
\eeq
\end{lemma}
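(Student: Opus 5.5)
The plan is to prove the two inequalities separately; both follow from simple manipulations of the double sum. Write $\Pi_{j,t}:=\prod_{k=j+1}^t(1-\alpha_k)$, so that $\Pi_{t,t}=1$ and $0\le \Pi_{j,t}\le 1$ because every $\alpha_k\in(0,1]$.

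For the lower bound, we keep only the diagonal term $j=t$ in the inner sum. This gives $\sum_{j=1}^t \Pi_{j,t}A_j \ge \Pi_{t,t}A_t = A_t$, because all the other terms are nonnegative (they are products of $\Pi_{j,t}\ge 0$ and $A_j\ge0$). Summing over $t=1,\dots,T$ yields the left inequality.

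For the upper bound, we exchange the order of summation:
\[
\sum_{t=1}^T\sum_{j=1}^t \Pi_{j,t}A_j=\sum_{j=1}^T A_j \sum_{t=j}^T \Pi_{j,t}.
\]
It then suffices to show $\sum_{t=j}^T \Pi_{j,t}\le 1/\alpha_T$ for each $j\le T$. Here we use monotonicity. Since $\alpha_k\ge \alpha_T$ for every $k\le T$, we have $1-\alpha_k\le 1-\alpha_T$, and hence $\Pi_{j,t}\le (1-\alpha_T)^{t-j}$ for $j\le t\le T$. The geometric series then gives
\[
\sum_{t=j}^T \Pi_{j,t}\le \sum_{s=0}^{\infty}(1-\alpha_T)^s = \frac{1}{\alpha_T}.
\]
This is valid since $\alpha_T\in(0,1]$; in the case $\alpha_T=1$ the sum is simply $1$. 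Multiplying by $A_j\ge0$ and summing over $j$ gives the right inequality.

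There is no real obstacle; the only point requiring care is that the comparison $1-\alpha_k\le 1-\alpha_T$ needs $k\le T$. This holds because $k$ ranges over $j+1,\dots,t\le T$, and it is exactly where the non-increasing assumption on $\alpha_t$ enters. Determinism of $\alpha_t$ is not needed for this pathwise inequality; it matters only for how the lemma is later combined with expectations.
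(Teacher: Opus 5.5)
Your proof is correct and follows the paper's argument: keep only the diagonal term $j=t$ for the lower bound, and for the upper bound exchange the order of summation and dominate each product by $(1-\alpha_T)^{t-j}$, which gives the geometric bound $1/\alpha_T$. Your side remark is also right: the inequality holds pathwise, so determinism of $\alpha_t$ is not used in the proof itself.
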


Before concluding the proof, we also need expected logarithmic bounds on the objective values and stochastic gradients.

\begin{lemma}[Expected Logarithmic Objective Growth]\label{lemma:OptEMAV:G:bound}
(Proof in Appendix \ref{app:lemma:OptEMAV:G:bound}) Assume that $\alpha_t=\alpha\in(0,1]$. Define $C_e:=\frac{c_1}{\alpha} + (1+\hG)^2 \cdot \frac{c_2}{\alpha^2}$, and $C_f:= f(\x_1) - f_* + \frac{c_0}{\alpha} + \big(c_3 + C_e \big) \cdot \ln(e+\hG^2)$. Then, for all $t\geq1$, $\sum_{t=1}^{T}\sum_{j=1}^{t}(1-\alpha)^{t-j}E_j\leq C_e \ln(e+\GGG_T)$ almost surely, $\EEE[f(\x_t)-f_*]\leq C_{f}\ln(e+t)$.
\end{lemma}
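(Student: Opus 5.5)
We prove the two claims of Lemma~\ref{lemma:OptEMAV:G:bound} in sequence: first the almost-sure bound on the EMA-weighted error sum, then the expected objective bound, obtained by telescoping the one-step descent inequality of Lemma~\ref{lemma:OptEMA:V:suff:dec}.

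\textbf{Error sum.} With $\alpha_t=\alpha$ fixed, $\Pi_{j,t}=(1-\alpha)^{t-j}$. Exchanging the order of summation gives
$\sum_{t=1}^T\sum_{j\le t}(1-\alpha)^{t-j}E_j \le \frac1\alpha\sum_{j=1}^T E_j$; this is the upper half of Lemma~\ref{lemma:alpha:b:gamma}. We then bound $\sum_j E_j$ term by term.

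\begin{itemize}
\item \emph{First term.} $\sum_j \gamma_{j-1}^2\|\m_{j-1}\|^2$. For $j\ge2$ we have $\gamma_{j-1}^2=(1+\MMM_{j-1})^{-1}$, so this is a sum of the form $\sum_k \frac{\|\m_k\|^2}{1+\MMM_k}\le \ln(1+\MMM_T)$, by the standard inequality $\frac{a_k}{1+\sum_{i\le k}a_i}\le \ln\frac{1+\sum_{i\le k}a_i}{1+\sum_{i<k}a_i}$. The $j=1$ term vanishes since $\m_0=\zero$.
\item \emph{Second term.} $\frac{\rho_j}{\alpha}\gamma_{j-1}\|\m_j\|^2$. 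Lemma~\ref{lemma:bound:v} gives $\gamma_{j-1}\le(1+\hG)\gamma_j$. Using $\rho_j\le1$ crudely would leave $\sum_j\gamma_j\|\m_j\|^2\sim\sqrt{\MMM}$, which is not logarithmic. So we must use $\rho_j$ genuinely. Here Lemma~\ref{lemma:corrected}(c) gives $\rho_j\le\sqrt{(1+\hG^2)/(1+\GGG_j)}$, i.e.\ $\rho_j\le(1+\hG)(1+\GGG_j)^{-1/2}$.
\end{itemize}

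The second term is thus bounded by a multiple of $\sum_j \frac{\|\m_j\|^2}{\sqrt{(1+\GGG_j)(1+\MMM_j)}}$. Comparing $\MMM_j$ with $\GGG_j$ is delicate because the fixed-$\alpha$ EMA gives $\MMM_j\le\GGG_j$ (Lemma~\ref{lemma:bound:m:using:g:2} with $\alpha_1/\alpha_T=1$), which is the wrong direction for bounding $(1+\GGG_j)^{-1/2}$ from above by $(1+\MMM_j)^{-1/2}$. Instead we use $\sqrt{(1+\GGG_j)(1+\MMM_j)}\ge 1+\MMM_j$, so each summand is at most $\frac{\|\m_j\|^2}{1+\MMM_j}$, and the sum is again $\le\ln(1+\MMM_T)\le\ln(1+\GGG_T)$.

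Collecting constants, $\frac{c_1}{\alpha}$ for the first piece and $(1+\hG)^2\frac{c_2}{\alpha^2}$ for the second (one factor $1+\hG$ from $\gamma_{j-1}/\gamma_j$, one from $\rho_j$, one $1/\alpha$ from the $E$-coefficient and one from the exchange of sums), yields the bound $C_e\ln(e+\GGG_T)$. This bookkeeping is the main obstacle: ensuring exactly the claimed powers of $\alpha$ and $(1+\hG)$ appear.

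\textbf{Objective bound.} Sum \eqref{eq:OptEMA:V:suff:dec} over iterations $1,\dots,t-1$ and drop the nonpositive $-D_j$ terms.

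\begin{itemize}
\item The term $c_0\sum(\gamma_{s-1}-\gamma_s)$ telescopes to at most $c_0\gamma_0=c_0$.
\item The double sum $c_0\sum_s\sum_{j\le s}(1-\alpha)^{s-j}(\gamma_{j-2}-\gamma_{j-1})$ is, by the same exchange of sums, at most $\frac{c_0}{\alpha}\sum_j(\gamma_{j-2}-\gamma_{j-1})\le\frac{c_0}{\alpha}$, using that $\gamma$ is nonincreasing (Lemma~\ref{lemma:bound:v}). Together with the previous term this gives the $\frac{c_0}{\alpha}$ contribution, up to absorbing constants.
\item The term $c_3\sum\gamma_s^2\|\m_s\|^2\le c_3\ln(1+\MMM)$.
\item The $E$-sum is at most $C_e\ln(e+\GGG_{t})$.
\end{itemize}

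Finally, $\GGG_t\le t\hG^2$ by Assumption~\ref{ass:BG}, and $\ln(e+t\hG^2)\le\ln(e+\hG^2)\ln(e+t)$, which holds because $e+ab\le(e+a)(e+b)$ and each logarithm is at least $1$. Taking expectations and subtracting $f_*$ gives $\EEE[f(\x_t)-f_*]\le C_f\ln(e+t)$.
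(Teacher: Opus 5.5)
Your argument follows the paper's proof essentially step for step: the exchange of sums giving the factor $1/\alpha$, the bounds $\gamma_{j-1}\le(1+\hG)\gamma_j$ and $\rho_j\le(1+\hG)(1+\GGG_j)^{-1/2}$, the reduction of the second piece to $\sum_j\|\m_j\|^2/(1+\MMM_j)$, and then telescoping Lemma~\ref{lemma:OptEMA:V:suff:dec} after dropping the $D_j$ terms. The almost-sure bound with constant $C_e$ comes out correctly. However, three steps as written do not deliver the statement with its explicit constants.

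\emph{First (exposition only).} Your remark that $\MMM_j\le\GGG_j$ is ``the wrong direction'' is backwards. $\MMM_j\le\GGG_j$ is exactly what gives $(1+\GGG_j)^{-1/2}\le(1+\MMM_j)^{-1/2}$, and your substitute $\sqrt{(1+\GGG_j)(1+\MMM_j)}\ge 1+\MMM_j$ is equivalent to it. The step is therefore fine, but it rests on Lemma~\ref{lemma:bound:m:using:g:2} applied at every horizon $j$, as in the paper.

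\emph{Second (the residual constant).} You dropped the factor $(1-\alpha_j)=1-\alpha$ in the last residual term of Lemma~\ref{lemma:OptEMA:V:suff:dec}. If you keep it, the exchange of sums gives $\frac{(1-\alpha)c_0}{\alpha}$, and adding the telescoped $c_0$ yields exactly $\frac{c_0}{\alpha}$, as in $C_f$. Without it you get $c_0(1+\frac1\alpha)$, for example $2c_0$ instead of $c_0$ at $\alpha=1$. Since $C_f$ is specified explicitly, ``absorbing constants'' is not an option here.

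\emph{Third (the logarithm inequality).} From $e+ab\le(e+a)(e+b)$ you only get $\ln(e+ab)\le\ln(e+a)+\ln(e+b)$. Writing $x=\ln(e+a)$ and $y=\ln(e+b)$, the needed $x+y\le xy$ fails near $x=y=1$; at $a=b=0$ it reads $2\le 1$. Knowing each logarithm is at least $1$ only gives $x+y\le 2xy$, a factor-$2$ loss. The product bound $\ln(e+ab)\le\ln(e+a)\ln(e+b)$ is true but needs its own argument. With $y:=\ln(e+b)\ge1$, Bernoulli's inequality gives $(e+a)^y\ge e^y+ye^{y-1}a\ge e+(e^y-e)a=e+ab$, using $ye^{y-1}\ge e^y-e$ for $y\ge 1$.
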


The next lemma controls a weighted true-gradient energy and then converts it to the unweighted quantity by the adaptive momentum norm.

\begin{lemma}[Weighted true-gradient energy bridge]\label{lemma:bound:sum:nabla}
(Proof in Appendix \ref{app:lemma:bound:sum:nabla}) Define $C_h:= \frac{1}{c_4\alpha}\cdot (C_e + f(\x_1) - f_* + c_3 + \tfrac{c_0}{\alpha}) \ln (e+\hG^2)$, and $C_z:=\sqrt{C_h}$. Then, for all $T\geq1$, $\EEE[\sqrt{\sum_{t=1}^T\|\nabla f(\x_t)\|^2}]
\leq C_z\ZZ_T^{1/2}\cdot\sqrt{\ln(e+T)}$.
\end{lemma}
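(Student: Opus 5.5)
We would follow the OptEMA-M template (Lemma \ref{lemma:M:sum:m}), now in a simpler form: OptEMA-V uses the fixed coefficient $\alpha_t=\alpha$, so the tracking error does not need a separate bound. The target is the weighted estimate
\[
\EEE\Big[\gamma_{T}\sum_{t=1}^T\|\nabla f(\x_t)\|^2\Big]\le C_h\ln(e+T),
\]
after which we remove the random weight $\gamma_T$ with Cauchy--Schwarz.

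\textbf{Step 1 (telescoping the descent inequality).} Sum Lemma \ref{lemma:OptEMA:V:suff:dec} over $t=1,\dots,T$, with $\Pi_{j,t}=(1-\alpha)^{t-j}$.
\begin{itemize}
\item The left side telescopes to $\EEE[f(\x_{T+1})-f(\x_1)]\ge -(f(\x_1)-f_*)$.
\item The $c_3\gamma_t^2\|\m_t\|^2$ terms are bounded almost surely by $c_3\ln(e+\hG^2)\ln(e+T)$. This uses the standard inequality $\sum_t a_t/(1+\sum_{j\le t}a_j)\le\ln(1+\sum_t a_t)$ together with $\MMM_T\le \hG^2T$ from Lemma \ref{lemma:bound:v}.
\item The $E_j$ part is bounded by Lemma \ref{lemma:OptEMAV:G:bound} as $C_e\ln(e+\GGG_T)\le C_e\ln(e+\hG^2)\ln(e+T)$.
\item The $\gamma$-difference terms telescope. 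Since $\gamma_t$ is nonincreasing and $\sum_t(1-\alpha)^{t-j}\le 1/\alpha$, they contribute at most $c_0\gamma_0+\tfrac{c_0}{\alpha}\gamma_0\le c_0(1+1/\alpha)$.
\item The negative part is lower bounded by Lemma \ref{lemma:alpha:b:gamma}: $\sum_t\sum_{j\le t}\Pi_{j,t}D_j\ge\sum_{t}D_t=c_4\alpha\sum_t\gamma_{t-1}\|\nabla f(\x_t)\|^2$.
\end{itemize}
Combining these, and using $\gamma_{t-1}\ge\gamma_T$, gives $c_4\alpha\,\EEE[\gamma_T\sum_t\|\nabla f(\x_t)\|^2]\le (C_e+f(\x_1)-f_*+c_3+c_0/\alpha)\ln(e+\hG^2)\ln(e+T)$. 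We absorb the constants using $\ln(e+\hG^2)\ge1$. This is exactly $C_h\ln(e+T)$ with $C_z^2=C_h$.

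\textbf{Step 2 (removing the weight).} Write
\[
\sqrt{\textstyle\sum_t\|\nabla f(\x_t)\|^2}=\sqrt{\gamma_T\sum_t\|\nabla f(\x_t)\|^2}\cdot\gamma_T^{-1/2}.
\]
By Cauchy--Schwarz in expectation,
\[
\EEE\Big[\sqrt{\textstyle\sum_t\|\nabla f(\x_t)\|^2}\Big]\le \sqrt{\EEE\big[\gamma_T\textstyle\sum_t\|\nabla f(\x_t)\|^2\big]}\cdot\sqrt{\EEE[\gamma_T^{-1}]}.
\]
Here $\gamma_T^{-1}=\sqrt{1+\MMM_T}$. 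Lemma \ref{lemma:bound:m:using:g:2} with constant $\alpha$ gives $\MMM_T\le\GGG_T$, hence $\EEE[\gamma_T^{-1}]\le\EEE[\sqrt{1+\GGG_T}]=\ZZ_T$. Together with Step 1, this yields $C_z\ZZ_T^{1/2}\sqrt{\ln(e+T)}$.

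\textbf{Main obstacle.} The hard part is Step 1's bookkeeping of the nonlocal EMA sums.
\begin{itemize}
\item The descent term $D_j$ carries $\gamma_{j-1}$ rather than $\gamma_T$. We rely on monotonicity of $\gamma$, and only after discarding the $\Pi$-weights via the lower bound in Lemma \ref{lemma:alpha:b:gamma}.
\item The $\gamma$-difference terms carry a $(1-\alpha_j)\Pi_{j,t}$ weight. This requires swapping the order of summation before telescoping, which is where the $c_0/\alpha$ factor comes from.
\item Each random term must be bounded pathwise, or in expectation with the tower property, so that Lemma \ref{lemma:OptEMA:V:suff:dec} can be summed in expectation without measurability issues.
\end{itemize}
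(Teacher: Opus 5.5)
Your proposal is correct and is essentially the paper's argument. You sum Lemma~\ref{lemma:OptEMA:V:suff:dec}, bound the $E_j$-part pathwise by Lemma~\ref{lemma:OptEMAV:G:bound}, and lower-bound the $D_j$-part by $\sum_t D_t\ge c_4\alpha\gamma_T\sum_t\|\nabla f(\x_t)\|^2$; the paper writes this via $H_T=\sum_t\gamma_{t-1}\|\nabla f(\x_t)\|^2\ge\gamma_T\sum_t\|\nabla f(\x_t)\|^2$. You then finish with Cauchy--Schwarz and $\gamma_T^{-1}=\sqrt{1+\MMM_T}\le\sqrt{1+\GGG_T}$, as the paper does.

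The one slip is bookkeeping. Your residual bound $c_0(1+1/\alpha)$ does not reproduce the stated $C_h$. If you keep the factor $1-\alpha_j=1-\alpha$, the residuals sum to $c_0+c_0(1-\alpha)/\alpha=c_0/\alpha$, exactly as in the paper.
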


Closing this bridge with the noise decomposition gives the following explicit convergence bound for {OptEMA-V}.

\begin{theorem} \label{the:it:EMA:V}
(Proof in Appendix \ref{app:the:it:EMA:V}) Under Assumptions \ref{ass:F}--\ref{ass:BG}, assume that $\alpha_t = \alpha \in (0,1]$.  We have
\beq
\ts \EEE\left[\frac{1}{T}\sum_{t=1}^T \| \nabla f(\x_{t})\|\right]
\leq\ts \OO(T^{-1/2} \ln(e+T) + \sqrt{\sigma} T^{-1/4} \ln^{1/2}(e+T)).
\eeq
Consequently, if $R \sim \mathrm{Unif}\{1,2,\dots,T\}$ is independent of the algorithmic randomness and
$\x_{\rm out}:=\x_R$, then the same upper bound holds for
$\EEE[\|\nabla f(\x_{\rm out})\|]$.

\end{theorem}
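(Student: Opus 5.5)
The plan is to convert Lemma~\ref{lemma:bound:sum:nabla}, which already controls $\EEE[\sqrt{\sum_{t=1}^T\|\nabla f(\x_t)\|^2}]$ by $C_z\ZZ_T^{1/2}\sqrt{\ln(e+T)}$, into an averaged-gradient bound, and then close the resulting self-referential inequality through a noise decomposition of $\ZZ_T=\EEE[\sqrt{1+\GGG_T}]$. First apply Cauchy--Schwarz pathwise: $\sum_{t=1}^T\|\nabla f(\x_t)\|\le \sqrt{T}\sqrt{\sum_{t=1}^T\|\nabla f(\x_t)\|^2}$. Taking expectations and invoking Lemma~\ref{lemma:bound:sum:nabla} gives
\[
\ts \EEE\big[\sum_{t=1}^T\|\nabla f(\x_t)\|\big]\le \sqrt{T}\,C_z\,\ZZ_T^{1/2}\sqrt{\ln(e+T)}.
\]

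Next, bound $\ZZ_T$ in terms of the quantity being estimated. Write $\g_t=\nabla f(\x_t)+\epsilons_t$ with $\EEE_t[\epsilons_t]=\zero$ and $\EEE_t\|\epsilons_t\|^2\le\sigma^2$. Then $\GGG_T\le 2\sum_t\|\nabla f(\x_t)\|^2+2\sum_t\|\epsilons_t\|^2$, so by subadditivity of the square root and Jensen's inequality,
\[
\ts \ZZ_T\le 1+\sqrt{2}\,\EEE\big[\sqrt{\sum_t\|\nabla f(\x_t)\|^2}\big]+\sqrt{2}\sqrt{T}\sigma .
\]
Set $X:=\EEE[\sqrt{\sum_t\|\nabla f(\x_t)\|^2}]$ and $\ell:=\ln(e+T)$. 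Lemma~\ref{lemma:bound:sum:nabla} then yields the quadratic-type inequality $X\le C_z\sqrt{\ell}\,(1+\sqrt2 X+\sqrt{2T}\sigma)^{1/2}$. Square it to get $X^2\le C_z^2\ell(1+\sqrt2X+\sqrt{2T}\sigma)$. Solving this quadratic in $X$, for instance by using $C_z^2\ell\sqrt2 X\le X^2/2+C_z^4\ell^2$, gives $X^2\le 2C_z^2\ell+2C_z^4\ell^2+2\sqrt2 C_z^2\ell\sqrt T\sigma$. Hence
\[
X\lesssim C_z^2\ell+C_z\sqrt{\ell}\,T^{1/4}\sqrt{\sigma}.
\]

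Finally, use $\EEE[\frac1T\sum_t\|\nabla f(\x_t)\|]\le X/\sqrt T$, which yields $\OO(T^{-1/2}\ln(e+T)+\sqrt{\sigma}T^{-1/4}\ln^{1/2}(e+T))$, matching the claim. For the random output, note that $R$ is independent of the trajectory, so $\EEE\|\nabla f(\x_R)\|=\EEE[\frac1T\sum_t\|\nabla f(\x_t)\|]$.

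The main obstacle I expect is the noise decomposition: justifying $\EEE\sqrt{\sum_t\|\epsilons_t\|^2}\le\sqrt{T}\sigma$. This requires $\EEE\|\epsilons_t\|^2\le\sigma^2$ at the random iterate $\x_t$, which follows by conditioning on $\FF_{t-1}$ and applying Assumption~\ref{ass:sigma} pointwise. Once that holds, the remaining risk is only bookkeeping of the constants, which depend solely on $\hG,L,\theta,\varepsilon,\alpha,f(\x_1)-f_*$ and not on $T$ or $\sigma$.
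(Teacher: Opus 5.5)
Your proposal is correct and follows essentially the same route as the paper. Both arguments combine Lemma~\ref{lemma:bound:sum:nabla}, the noise decomposition $\ZZ_T\le 1+\sqrt{2}X+\sqrt{2T}\,\sigma$ (from $\|\g_t\|^2\le 2\|\nabla f(\x_t)\|^2+2\|\g_t-\nabla f(\x_t)\|^2$ and Jensen), a self-bounding closure, and the pathwise Cauchy--Schwarz step; the only cosmetic difference is that you solve the inequality directly for $X$, whereas the paper first bounds $\ZZ_T\le(2+8C_z^2)\ln(e+T)+2\sqrt{2}\,\sigma\sqrt{T}$ and substitutes back, giving the same rate.
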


\begin{remark} 
\ding{182} Theorem \ref{the:it:EMA:V} shows that {OptEMA-V} also attains the noise-adaptive rate 
$\widetilde{\mathcal{O}}(T^{-1/2}+\sigma^{1/2}T^{-1/4})$ under the stated average-smoothness and bounded stochastic-gradient assumptions. 
\ding{183} Compared with {OptEMA-M}, the proof of {OptEMA-V} involves a more delicate treatment of the second-moment recursion, which is reflected in a different polylogarithmic dependence. Nevertheless, the resulting guarantee is trajectory-dependent and horizon-free.
\end{remark}

\section{Conclusion} \label{sect:conc}

We proposed OptEMA, a closed-loop family of adaptive exponential moving average methods for stochastic nonconvex optimization. The proposed framework preserves the standard Adam-style EMA update structure while replacing open-loop coefficient schedules with trajectory-dependent feedback. The key mechanism is a Corrected AdaGrad-Norm coefficient schedule, which combines self-normalized suppression with a historical-gradient averaging correction. Under lower-boundedness, unbiasedness, bounded variance, average smoothness, and bounded stochastic gradients, we proved that both OptEMA-M and OptEMA-V achieve the unified noise-adaptive rate $\widetilde{\mathcal O}\bigl(T^{-1/2}+\sigma^{1/2}T^{-1/4}\bigr)$ for the averaged gradient norm. When \(\sigma=0\), this guarantee automatically reduces to the nearly deterministic rate \(\widetilde{\mathcal O}(T^{-1/2})\), showing that EMA-based adaptive methods can recover zero-noise optimality without modifying the basic EMA update structure. The current proof still relies on an almost-sure bounded stochastic-gradient condition to control the adaptive normalizers and pathwise tracking quantities. Removing this assumption while preserving the closed-loop EMA structure is the main theoretical challenge left open by this work. Another promising direction is to extend the corrected EMA schedule to coordinate-wise, layer-wise, or matrix-valued adaptive normalizers.


\clearpage
\normalem
\bibliographystyle{plain}
\bibliography{mybib}

\clearpage
\appendix

{\huge Appendix}

\noi The appendix is structured as follows:

\noi {Appendix \ref{app:sect:note}} introduces the notation and collects several supporting lemmas.

\noi {Appendix \ref{app:sect:proposed}} presents the proofs of the results in Section \ref{sect:proposed}.

\noi {Appendix \ref{app:sect:it:EMA:M}} presents the proofs of the results in Section \ref{sect:it:EMA:M}.

\noi {Appendix \ref{app:sect:it:EMA:V}} presents the proofs of the results in Section \ref{sect:it:EMA:V}.


\section{Notations and Supporting Lemmas}\label{app:sect:note}

\subsection{Notations}

Throughout the paper, we use the following notation. Bold lowercase letters denote vectors, while uppercase letters denote real-valued matrices.

\begin{itemize}[leftmargin=12pt,itemsep=0.2ex]

\item $[n]$: The set $\{1,2,...,n\}$.

\item $\|\mathbf{x}\|$: Euclidean norm, defined as $\|\mathbf{x}\|=\|\mathbf{x}\|_2 = \sqrt{\la \mathbf{x},\mathbf{x}\ra}$.

\item $\la \mathbf{a},\mathbf{b}\ra$ : Standard inner product, given by $\la \mathbf{a},\mathbf{b}\ra =\sum_{i}{\mathbf{a}_{i}\mathbf{b}_{i}}$.



\item $\a\leq \alpha$: For $\a\in\Rn^n$ and scalar $\alpha\in\Rn$, this means $\a_i\leq \alpha$ for all $i\in[n]$.

\item $\EEE[v]$: Expectation of the random variable $v$.

\item $\{A_i\}_{i=1}^{\infty}$, $\{B_i\}_{i=1}^{\infty}$: sequences indexed by non-negative integers.

\item $\text{Diag}(\x)$: the $n \times n$ diagonal matrix with $\x \in \mathbb{R}^n$ on its main diagonal.






\end{itemize}


\subsection{Supporting Lemmas}

We collect several useful lemmas that will be used in the subsequent analysis. These results are independent of the specific context of the paper.



\begin{lemma}
\label{lemma:adaptive:important}
Let $b_1,b_2,\ldots,b_T \ge 0$, and let $p\in(0,1)$. Then the following inequalities hold:

\begin{enumerate}[label=\textbf{(\alph*)}, leftmargin=18pt, itemsep=2pt, topsep=1pt, parsep=0pt, partopsep=0pt]

\item $\sum_{t=1}^T \frac{b_t}{ 1 + \sum_{i=1}^t b_i }  \leq \ln\left(1+ \sum_{t=1}^T b_t \right)$.

\item $\sum_{t=1}^T \frac{b_t}{\left( 1 + \sum_{i=1}^{t} b_i  \right)^{p}} \leq \frac{1}{1-p} \cdot \left(1 + \sum_{t=1}^T b_t \right)^{1-p}$.

\end{enumerate}

\begin{proof}
The inequality in part (a) generalizes Lemma 3.2 of \cite{ward2020adagrad}, and the inequality in part (b) generalizes Lemma 3 of \cite{levy2021storm}. Both statements follow from a routine induction argument, and we therefore omit the proof.
\end{proof}

\end{lemma}


\begin{lemma}\label{lemma:sum_by_parts}
Let $\{A_t\}_{t=1}^{T}$ and $\{B_t\}_{t=1}^{T+1}$ be two nonnegative sequences, with $\{A_t\}_{t=1}^T$ being non-decreasing. Then we have: $\sum_{t=1}^T A_t (B_t - B_{t+1}) \leq A_T \Big(\max_{1 \leq t \leq T} B_t\Big)$.

\begin{proof}
Using summation by parts and the nonnegativity of $A_T$ and $B_{T+1}$, we have:
\beq
\ts \sum_{t=1}^T A_t (B_t - B_{t+1}) &= & \ts A_1 B_1 - A_T B_{T+1} + \sum_{t=2}^T (A_t - A_{t-1}) B_t \nn \\
&\leq & \ts A_1 B_1 + \sum_{t=2}^T (A_t - A_{t-1}) B_t \nn\\
&\leq & \ts A_1 B_1 + \big( \max_{t=2}^T B_t \big) \big( \sum_{t=2}^T A_t - A_{t-1}\big) \nn\\
& = & \ts A_1 B_1 + \big( \max_{t=2}^T B_t \big) \big( A_T - A_{1} \big) \nn\\
& \leq & \ts \big( \max_{t=1}^T B_t \big) A_T.  \nn
\eeq

\end{proof}
\end{lemma}

\begin{lemma}
 \label{lemma:self:normalized:noise}
[Self-normalized Noise Domination] Let $\g_t$ be a stochastic gradient satisfying $\EEE[\g_t\mid \FF_{t-1}]=\nabla f(\x_t)$ and $\|\g_t\|\le {\hG}$ almost surely. Define $\GGG_t:=\sum_{i=1}^t\|\g_i\|^2$. Then the following conditional inequality holds:
\[
\ts \EEE \left[
\frac{\|\g_t-\nabla f(\x_t)\|^2}{1+\GGG_t}
\mathrel{\Big|}\FF_{t-1}
\right]
\le
(4+2 \hG^2)
\EEE \left[
\frac{\|\g_t\|^2}{1+\GGG_t}
\mathrel{\Big|}\FF_{t-1}
\right].
\]
Consequently, for any nonnegative integrable $\FF_{t-1}$-measurable random variable $H_{t-1}$,
\[
\ts \EEE \left[H_{t-1}
\frac{\|\g_t-\nabla f(\x_t)\|^2}{1+\GGG_t}
\right]
\le
(4+2 \hG^2)
\EEE \left[H_{t-1}
\frac{\|\g_t\|^2}{1+\GGG_t}
\right].
\]

\begin{proof}
Define $A_t:=1+\sum_{i=1}^{t-1}\|\g_i\|^2$. Conditioning on $\FF_{t-1}$, both $A_t$ and $\nabla f(\x_t)$ are fixed. For brevity, denote $\EEE_t[\cdot]:=\EEE[\cdot\mid \FF_{t-1}]$.

Therefore, we derive the following inequalities:
\beq
\ts \EEE_t \left[
\frac{\|\g_t-\nabla f(\x_t)\|^2}{A_t+\|\g_t\|^2}
\right] &\overset{\step{1}}{\leq} & \ts 2\EEE_t \left[
\frac{\|\g_t\|^2}{A_t+\|\g_t\|^2}
\right] + 2\|\nabla f(\x_t)\|^2
\EEE_t \left[
\frac{1}{A_t+\|\g_t\|^2}
\right] \nn\\
&\overset{\step{2}}{\leq} & \ts 2\EEE_t \left[
\frac{\|\g_t\|^2}{A_t+\|\g_t\|^2}
\right] + 2 \EEE_t[\|\g_t\|^2] \cdot
\frac{1}{A_t}  \nn\\
&\overset{\step{3}}{\leq} & \ts 2\EEE_t \left[
\frac{\|\g_t\|^2}{A_t+\|\g_t\|^2}
\right] + 2 \EEE_t \left[ (1+\tfrac{{\hG}^2}{A_t})\cdot
\tfrac{\|\g_t\|^2}{A_t+\|\g_t\|^2} \right]   \nn\\
&\overset{\step{4}}{\leq} & \ts 2\EEE_t \left[
\frac{\|\g_t\|^2}{A_t+\|\g_t\|^2}
\right] + 2 \EEE_t \left[ (1+ \hG^2 )\cdot
\tfrac{\|\g_t\|^2}{A_t+\|\g_t\|^2} \right]   \nn\\
&\overset{}{=} &  \ts (4 + 2 \hG^2) \EEE_t \left[
\frac{\|\g_t\|^2}{A_t+\|\g_t\|^2}
\right],  \nn
\eeq
\noi where step \step{1} uses the Young's inequality that $\|\g_t-\nabla f(\x_t)\|^2
\le 2\|\g_t\|^2+2\|\nabla f(\x_t)\|^2$; step \step{2} uses $\|\nabla f(\x_t)\|^2 =\|\EEE_t[\g_t]\|^2 = \|\EEE_t[\g_t]\|^2
\le
\EEE_t[\|\g_t\|^2]$, and $\EEE_t \left[
\frac{1}{A_t+\|\g_t\|^2}
\right]
\le
\frac{1}{A_t}$; step \step{3} uses $\|\g_t\|\le {\hG}$, leading to the pointwise bound $\frac{\|\g_t\|^2}{A_t}
\le
\left(1+\frac{{\hG}^2}{A_t}\right)
\frac{\|\g_t\|^2}{A_t+\|\g_t\|^2}$; step \step{4} uses $A_t\geq 1$.

\end{proof}

\end{lemma}

\section{Proofs for Section \ref{sect:proposed}}
\label{app:sect:proposed}

\subsection{Proof of Lemma \ref{lemma:corrected}}
\label{app:lemma:corrected}

\begin{proof} Define $\GGG_t=\sum_{i=1}^t \|\g_i\|^2$ and $\hat{g}_t = \max_{1\le i\le t} \|\g_i\|$.

\noi Define $\rho_t := \sqrt{\frac{1 + \frac{\tau}{t}\sum_{i=1}^t \| \g_i \|^2}{1 + \sum_{i=1}^t \| \g_i \|^2}}$, where $\tau \in [0,1]$.

\noi For any $x\geq 0$, we let $\phi_t(x):=\frac{1+\frac{\tau}{t}x}{1+x}$.

\paragraph{Part (a).} For all $t\geq 1$, we derive:
\beq
\ts \rho_t^2 - \frac{\tau}{t} \,= \,\frac{1 + \frac{\tau}{t} \GGG_t }{1 + \GGG_t } - \frac{\tau}{t} \,=\, \frac{1-\tau/t}{1+\GGG_t} \,\ge\, 0.\nn
\eeq
\noi where the inequality uses $t \ge 1$ and $\tau \in [0,1]$. This leads to $\rho_t \ge \sqrt{\frac{\tau}{t}}$. Additionally, we trivially have $1+\frac{\tau}{t}\GGG_t \le 1+\GGG_t$, which yields $\rho_t \le 1$. 

\paragraph{Part (b-i).} We now show $\rho_t \le \rho_{t-1}$ for $t \ge 2$. We obtain
\beq
\rho_t^2 ~\overset{\step{1}}{=} \, \phi_t(\GGG_t) \,\overset{\step{2}}{\le} \,\phi_t(\GGG_{t-1}) \,\overset{\step{3}}{\le}\, \phi_{t-1}(\GGG_{t-1}) \,\overset{\step{4}}{=}\, \rho_{t-1}^2,\nn
\eeq
\noi where step \step{1} uses the definition of $\phi_t(x)$; step \step{2} uses $\GGG_t \ge \GGG_{t-1}$, and the fact that $\phi_t(\cdot)$ is a non-increasing function since $\phi_t'(x) = \frac{\tau/t - 1}{(1+x)^2} \le 0$; step \step{3} uses $\frac{\tau}{t} \le \frac{\tau}{t-1}$. Additionally, for the case where $t=1$, we have $\rho_1-\rho_0=\sqrt{\frac{1+\tau \|\g_1\|^2}{1+\|\g_1\|^2}}-1\leq \max(1,\tau)-1\leq 0$. Thus, $\rho_t \le \rho_{t-1}$ for all $t\geq 1$.

\paragraph{Part (b-ii).} For all $t \ge 2$, we derive:
\beq
\ts \rho_{t-1}^2 - \rho_t^2
&\overset{\step{1}}{=}& \ts \frac{1+\frac{\tau}{t-1}\GGG_{t-1}}{1+\GGG_{t-1}} - \frac{1+\frac{\tau}{t}\GGG_t}{1+\GGG_t} ~~\overset{}{=}~~ \ts \big( \frac{1+\frac{\tau}{t-1}\GGG_{t-1}}{1+\GGG_{t-1}} - \frac{1+\frac{\tau}{t}\GGG_{t-1}}{1+\GGG_{t-1}} \big) + \big( \frac{1+\frac{\tau}{t}\GGG_{t-1}}{1+\GGG_{t-1}} - \frac{1+\frac{\tau}{t}\GGG_t}{1+\GGG_t} \big) \nn\\
&\overset{\step{2}}{=}& \ts \frac{\tau \GGG_{t-1}}{t(t-1)(1+\GGG_{t-1})} + \frac{(t-\tau)\|\g_t\|^2}{t(1+\GGG_{t-1})(1+\GGG_t)}~~\overset{\step{3}}{\le}~~
\ts \frac{\tau \hat{g}_t^2}{t(1+\GGG_{t-1})} + \frac{(t-\tau)\hat{g}_t^2}{t(1+\GGG_{t-1})(1+\GGG_t)} \nn \\
&\overset{}{=}& \ts \frac{\hat{g}_t^2}{1+\GGG_{t-1}} \left( \frac{\tau}{t} + \frac{t-\tau}{t(1+\GGG_t)} \right)
~~\overset{}{=} ~~\ts \frac{\hat{g}_t^2}{1+\GGG_{t-1}} \left( \frac{\tau(1+\GGG_t) + t-\tau}{t(1+\GGG_t)} \right)  ~~\overset{\step{4}}{=}~~ \ts \frac{\hat{g}_t^2}{1+\GGG_{t-1}} \rho_t^2, \nn
\eeq
\noi where step \step{1} uses the definition of $\rho_t^2$; step \step{2} uses $\frac{\tau}{t-1} - \frac{\tau}{t} = \frac{\tau}{t(t-1)}$, and $\GGG_t = \GGG_{t-1} + \|\g_t\|^2$; step \step{3} uses the average gradient bound $\frac{\GGG_{t-1}}{t-1} \le \hat{g}_{t-1}^2 \le \hat{g}_t^2$, and the current gradient bound $\|\g_t\|^2 \le \hat{g}_t^2$; step \step{4} uses the definition of $\rho_t$. Dividing both sides by $\rho_t^2$ and rearranging the terms yields $\tfrac{\rho_{t-1}^2}{\rho_t^2} \le 1 + \tfrac{\hat{g}_t^2}{1+\GGG_{t-1}}$. Taking the square root gives
\beq
\ts \tfrac{\rho_{t-1}}{\rho_t} \le \sqrt{1 + \tfrac{\hat{g}_t^2}{1+\GGG_{t-1}}}. \label{eq:bound:ratio}
\eeq
\noi Notably, Inequality (\ref{eq:bound:ratio}) holds trivially for $t=1$ since $\tfrac{\rho_0}{\rho_1}   = \sqrt{ (1+\|\g_1\|^2) / (1+\tau \|\g_1\|^2) } \leq \sqrt{1 + \|\g_1\|^2 } \leq \sqrt{1 + \hat{g}_1^2 } \leq \sqrt{1 + \hat{g}_t^2 / (1+\GGG_{0})}$.

\paragraph{Part (c).} We now prove $\rho_t^2 \le \frac{1+\hat g_t^2}{1+\GGG_t}$. For all $t\ge 2$, we derive
\beq
\ts \rho_t^2 ~~\overset{\step{1}}{=}~~ \tfrac{1+\frac{\tau}{t}\GGG_t}{1+\GGG_t} ~~\overset{\step{2}}{\le}~~ \tfrac{1+\frac{1}{t}\GGG_t}{1+\GGG_t}
~~\overset{\step{3}}{\le}~~ \tfrac{1+\hat g_t^2}{1+\GGG_t},\nn
\eeq
\noi where step \step{1} uses the definition of $\rho_t$; step \step{2} uses $\tau\in[0,1]$; step \step{3} uses $\frac{\GGG_t}{t}=\frac{1}{t}\sum_{i=1}^t \|g_i\|^2 \le \hat g_t^2$.

\paragraph{Part (d).} We now bound the variation term $\frac{(\rho_{t-1}-\rho_t)^2}{\rho_t}$ using the following inequalities:
\beq \label{eq:3322}
\ts \frac{(\rho_{t-1}-\rho_t)^2}{\rho_t} &\overset{}{=}& \ts \rho_t ( \frac{\rho_{t-1}}{\rho_t} - 1 )^2 ~~\overset{\step{1}}{\le}~~
\ts \frac{ ( \frac{\rho_{t-1}}{\rho_t} - 1 )^2}{ \rho_{t-1}/\rho_t} ~~\overset{\step{2}}{\le}~~\frac{ ( \sqrt{1 + {\hat{g}_t^2}/{(1+\GGG_{t-1})}} - 1 )^2}{ \sqrt{1 + {\hat{g}_t^2}/{(1+\GGG_{t-1})}} } \nn\\
&\overset{\step{3}}{\leq}& \ts \frac{ ( {\hat{g}_t^2} / ({1+\GGG_{t-1}} ))^{3/2}}{1 + {\hat{g}_t^2}/(1+\GGG_{t-1})} ~~\overset{}{=}~~
\ts \frac{\hat{g}_t^3}{\sqrt{1+\GGG_{t-1}} \left( 1+\GGG_{t-1}+\hat{g}_t^2 \right)}
~~\overset{\step{4}}{\le}~~
\ts \frac{\hat{g}_t^3}{1+\GGG_{t-1}+\|\g_t\|^2} ~~\overset{\step{5}}{=}~~
\ts \frac{\hat{g}_t^3}{1+\GGG_t},  
\eeq
\noi where step \step{1} uses $\rho_{t-1} \le 1$; step \step{2} uses the fact that the function $h(x)=\tfrac{(x-1)^2}{x}$ is an increasing function for all $x\geq 1$ with $x=\tfrac{\rho_{t-1}}{\rho_t}$, and the upper bound of $\tfrac{\rho_{t-1}}{\rho_t}$ in Inequality (\ref{eq:bound:ratio}); step \step{3} uses $\tfrac{(\sqrt{y+1}-1)^2}{\sqrt{1+y}} \leq \tfrac{y^2}{\sqrt{1+y} (\sqrt{1+y}+1)^2} \leq \tfrac{y^2}{\sqrt{1+y} \sqrt{ y(1+y) }} \leq \tfrac{y^{3/2}}{1+y}$ for all $y=\frac{\hat{g}_t^2}{1+\GGG_{t-1}}$; step \step{4} uses $\hat{g}_t^2 \ge \|\g_t\|^2$; step \step{5} uses $\GGG_t = \GGG_{t-1} + \|\g_t\|^2$. Additionally, Inequality (\ref{eq:3322}) holds for $t=1$ since $\frac{(\rho_0-\rho_1)^2}{\rho_1}
\le
\frac{\left(1-(1+u^2)^{-1/2}\right)^2}{(1+u^2)^{-1/2}}
\le
\frac{u^3}{1+u^2}
=
\frac{\hat g_1^3}{1+\|\g_1\|^2}$, where $u=\|\g_1\|$.

\end{proof}

\section{Proofs for Section \ref{sect:it:EMA:M}}
\label{app:sect:it:EMA:M}

\subsection{Proof of Lemma \ref{lemma:bound:m:using:g:1}}
\label{app:lemma:bound:m:using:g:1}

\begin{proof} Let $\GGG_t := \sum_{i=1}^t \|\g_i\|^2$.

\paragraph{Bounding the term $\sum_{t=1}^T \|\m_t\|^2$.} By the convexity of the squared norm $\|\cdot\|^2$, the EMA update $\m_t = (1-\alpha_t) \m_{t-1} + \alpha_t \g_t$ implies $\|\m_t\|^2 \leq (1-\alpha_t) \|\m_{t-1}\|^2 + \alpha_t \|\g_t\|^2$. Summing this recursion over $t=1,\dots,T$ with $\m_0 = \mathbf{0}$ yields
\beq
&& \ts \sum_{t=1}^T \alpha_t \|\g_t\|^2 ~~\geq ~~ \ts \big(\sum_{t=1}^T \|\m_t\|^2\big) - \big( \sum_{t=1}^{T} (1-\alpha_{t}) \|\m_{t-1}\|^2 \big) \nn\\
&\overset{\step{1}}{=}&\ts \|\m_T\|^2 +  \sum_{t=1}^{T-1} \alpha_{t+1} \|\m_{t}\|^2 ~~\overset{\step{2}}{\geq} ~~\ts \|\m_T\|^2 +  \alpha_T \sum_{t=1}^{T-1} \|\m_{t}\|^2 ~~\geq ~~ \alpha_T \sum_{t=1}^T \|\m_t\|^2,\nn
\eeq
where step \step{1} uses $\m_0=\zero$; step \step{2} uses the monotonic non-increasing property of $\alpha_t$ (i.e., $\alpha_{t+1} \le \alpha_t$) and $\alpha_T \in (0,1]$. This leads to: $\sum_{t=1}^T \|\m_t\|^2 \le \frac{1}{\alpha_T} \sum_{t=1}^T \alpha_t \|\g_t\|^2$.

\paragraph{Bounding the term $\sum_{t=1}^T \alpha_t \|\g_t\|^2$.} We further derive:
\beq
&&\ts \sum_{t=1}^T \alpha_t \|\g_t\|^2 ~~\overset{\step{1}}{=}~~ \ts \sum_{t=1}^T \sqrt{\frac{1+\frac{\tau}{t}\GGG_t}{1+\GGG_t}} \|\g_t\|^2 ~~\overset{\step{2}}{\leq}~~\sqrt{1+\tau {\hG}^2} \sum_{t=1}^T \frac{\|\g_t\|^2}{\sqrt{1+\GGG_t}} \nn\\
&\overset{\step{3}}{\leq}& \ts \sqrt{1+\tau {\hG}^2} \sum_{t=1}^T 2(\sqrt{1+\GGG_t}-\sqrt{1+\GGG_{t-1}}) ~~= ~~2\sqrt{1+\tau {\hG}^2} (\sqrt{1+\GGG_T}-1).\nn
\eeq
\noi where step \step{1} uses the definition of $\alpha_t$; step \step{2} uses $1+\frac{\tau}{t}\GGG_t \le 1+\tau {\hG}^2$; step \step{3} uses the standard integral bound $\frac{\|\g_t\|^2}{\sqrt{1+\GGG_t}} \le \int_{\GGG_{t-1}}^{\GGG_t} \frac{dx}{\sqrt{1+x}} = 2(\sqrt{1+\GGG_t}-\sqrt{1+\GGG_{t-1}})$.

\paragraph{Final Result.} Combining the above bounds, we obtain
\beq
&& \ts \sum_{t=1}^T \|\m_t\|^2 ~~\le~~ \frac{1}{\alpha_T} \sum_{t=1}^T \alpha_t \|\g_t\|^2 ~~\overset{\step{1}}{\leq}~~ 2\sqrt{1+\tau {\hG}^2} \sqrt{1+\GGG_T}(\sqrt{1+\GGG_T}-1) \nn\\
&=& 2\sqrt{1+\tau {\hG}^2} (1+\GGG_T - \sqrt{1+\GGG_T}) ~~\overset{\step{2}}{\leq}~~ 2(1+\sqrt{\tau}{\hG}) \GGG_T, \nn
\eeq
where step \step{1} uses $\frac{1}{\alpha_T} = \sqrt{\frac{1+\GGG_T}{1+\frac{\tau}{T}\GGG_T}} \le \sqrt{1+\GGG_T}$ since $1+\frac{\tau}{T}\GGG_T \ge 1$; step \step{2} uses $\sqrt{1+\GGG_T} \ge 1$ and $\sqrt{1+\tau {\hG}^2}  \le 1 + \sqrt{\tau}{\hG}$.

\end{proof}

 \subsection{Proof of Lemma \ref{lemma:bound:EMA:M:v}}
\label{app:lemma:bound:EMA:M:v}
\begin{proof} Since $\mathbf{v}_{t} = (1-\beta_t) \mathbf{v}_{t-1} + \beta_t \mathbf{g}_t \odot \mathbf{g}_t$, and assuming $\beta_t \in [0, 1]$, using the convexity of $\lVert \cdot \rVert$ and the inequality $\lVert \mathbf{x} \odot \mathbf{x} \rVert \leq \lVert \mathbf{x} \rVert^2$, we have:
    \begin{align*}
        \lVert \mathbf{v}_t \rVert ~\leq~ (1-\beta_t) \lVert \mathbf{v}_{t-1} \rVert + \beta_t \lVert \mathbf{g}_t \odot \mathbf{g}_t \rVert ~\leq~ (1-\beta_t) \lVert \mathbf{v}_{t-1} \rVert + \beta_t \lVert \mathbf{g}_t \rVert^2 ~\leq~\max \left( \lVert \mathbf{v}_{t-1} \rVert, \lVert \mathbf{g}_t \rVert^2 \right).
    \end{align*}

\paragraph{Bounding the term $\|\v_t\|$.} Using this inequality recursively, and assuming $\mathbf{v}_0 = \mathbf{0}$, we obtain:
    \begin{align*}
        \lVert \mathbf{v}_t \rVert ~\leq~ \max \left( \lVert \mathbf{v}_0 \rVert, \max_{i=1}^t \lVert \mathbf{g}_i \rVert^2 \right) ~=~ \max_{i=1}^t \lVert \mathbf{g}_i \rVert^2 ~\leq~ {\hG}^2.
    \end{align*}


\paragraph{Bounding the term $\|\s_t\|$.} Since $\m_t$ is a convex combination of $\m_{t-1}$ and $\g_t$, Assumption \ref{ass:BG} gives $\|\m_t\|\le {\hG}$. Using the fact that $\|\nabla f(\x_t)\|\le {\hG}$, we have almost surely:
\[
\|\s_t\| =\|\m_t-\nabla f(\x_t)\| \le 2 {\hG}.
\]

\end{proof}

\subsection{Proof of Lemma \ref{lemma:bound:eee:1}}
\label{app:lemma:bound:eee:1}

\begin{proof}
We prove the result for OptEMA-M, where $\beta_t=\beta$ is fixed and
$\alpha_t=\rho_t$. By Lemma \ref{lemma:corrected}, the sequence
$\{\alpha_t\}_{t\geq1}$ is non-increasing. Throughout the proof, we use the
conventions $\alpha_0=\gamma_0=1$.

\paragraph{Definitions and measurability.}
Define $\s_t:=\m_t-\nabla f(\x_t)$, $\epsilons_t:=\nabla f(\x_t;\xi_t)-\nabla f(\x_t)$.

\noi Let $A_t:=\s_{t-1}+\nabla f(\x_{t-1})-\nabla f(\x_t)$, $\z_t:=(1-\alpha_t)A_t$, $\hat{\z}_t:=(1-\alpha_{t-1})A_t$.
 
\noi Set $\Delta_t:=\alpha_{t-1}-\alpha_t\geq0$ and $\lambda:=\frac{\theta^2L^2}{\varepsilon^2}$.

\noi We use the natural filtration $\FF_t:=\sigma(\xi_1,\ldots,\xi_t)$. Then $\x_t,\m_{t-1},\v_{t-1},\alpha_{t-1},\gamma_{t-1}$, $A_t$,
$\hat{\z}_t$, and $\s_{t-1}$ are $\FF_{t-1}$-measurable, while
$\g_t$, $\alpha_t=\rho_t$, $\z_t$, $\m_t$, and $\s_t$ are
$\FF_t$-measurable. In particular, $\alpha_t$ is generally not
$\FF_{t-1}$-measurable. This is the only measurability issue that must be
handled carefully below. Throughout the proof, we keep every factor involving
$\alpha_t$ inside the conditional or unconditional expectation; in particular,
we never use an identity of the form
$\EEE[\alpha_t Z_t\mid \FF_{t-1}]
=\alpha_t\EEE[Z_t\mid \FF_{t-1}]$.

\paragraph{Bounding the gradient-difference term.}
Recall that, for all $t\ge1$, $\x_t=\x_{t-1}
-\theta\gamma_{t-1}\frac{\m_{t-1}}{\varepsilon+\sqrt{\v_{t-1}}}$, where the case $t=1$ follows from the initialization
$\x_0=\x_1$, $\m_0=\zero$, and $\v_0=\zero$. By the $L$-smoothness of $f$,
we have
\beq \label{eq:bound:gradient:diff}
\ts \|\nabla f(\x_t)-\nabla f(\x_{t-1})\|^2 &\overset{\step{1}}{\leq}& \ts L^2\|\x_t-\x_{t-1}\|^2 \,\,\overset{\step{2}}{=}\,\, L^2\theta^2\gamma_{t-1}^2
\left\| \frac{\m_{t-1}}{\varepsilon+\sqrt{\v_{t-1}}}
\right\|^2 \nonumber\\
&\overset{\step{3}}{\leq}& \ts \frac{L^2\theta^2}{\varepsilon^2}
\gamma_{t-1}^2\|\m_{t-1}\|^2\,\,=\,\, \lambda\gamma_{t-1}^2\|\m_{t-1}\|^2,
\eeq
where step \step{1} uses the $L$-Lipschitz continuity of $\nabla f$; step \step{2} uses the update identity above; and step \step{3} uses
$\varepsilon+\sqrt{\v_{t-1}}\geq\varepsilon$ componentwise.

\paragraph{Recursive representation of $\s_t$.}
Using the momentum update $\m_t=(1-\alpha_t)\m_{t-1}+\alpha_t\nabla f(\x_t;\xi_t)$, we obtain
\beq \label{eq:bound:s}
\s_t
&\overset{\step{1}}{=}& \ts (1-\alpha_t)\m_{t-1}
+\alpha_t\nabla f(\x_t;\xi_t) -\nabla f(\x_t) \nonumber\\
&=& \ts (1-\alpha_t)\big(\m_{t-1}-\nabla f(\x_{t-1})\big)
+(1-\alpha_t)\big(\nabla f(\x_{t-1})-\nabla f(\x_t)\big) + \alpha_t  \epsilons_t \nonumber\\
&\overset{\step{2}}{=}& (1-\alpha_t) \big(\s_{t-1}+\nabla f(\x_{t-1})-\nabla f(\x_t)\big)+\alpha_t\epsilons_t \nonumber\\
&=& \ts \z_t+\alpha_t\epsilons_t,
\eeq
\noi where step \step{1} uses the definition of $\s_t$, and step \step{2} uses $\s_{t-1}=\m_{t-1}-\nabla f(\x_{t-1})$.

\paragraph{Bounding $\|\z_t\|^2$.}
For every sample path, by Young's inequality, we have
\beq \label{eq:z:squared:new}
\|\z_t\|^2
&=&
(1-\alpha_t)^2
\left\|
\s_{t-1}+\nabla f(\x_{t-1})-\nabla f(\x_t)
\right\|^2
\nonumber\\
&\overset{\step{1}}{\leq}&
(1+\alpha_t)(1-\alpha_t)^2\|\s_{t-1}\|^2
+
\left(1+\frac1{\alpha_t}\right)(1-\alpha_t)^2
\|\nabla f(\x_{t-1})-\nabla f(\x_t)\|^2
\nonumber\\
&\overset{\step{2}}{\leq}&
(1-\alpha_t)\|\s_{t-1}\|^2
+
\frac1{\alpha_t}
\|\nabla f(\x_{t-1})-\nabla f(\x_t)\|^2
\nonumber\\
&\overset{\step{3}}{\leq}&
(1-\alpha_t)\|\s_{t-1}\|^2
+
\frac{\lambda}{\alpha_t}
\gamma_{t-1}^2\|\m_{t-1}\|^2 .
\eeq
Here step \step{1} uses
$\|\a+\b\|^2\leq(1+\eta)\|\a\|^2+(1+\eta^{-1})\|\b\|^2$
with $\eta=\alpha_t$; step \step{2} uses
$(1+\alpha_t)(1-\alpha_t)\leq1$, $\left(1+\frac1{\alpha_t}\right)(1-\alpha_t)^2 \leq
\frac1{\alpha_t}$; and step \step{3} uses \eqref{eq:bound:gradient:diff}.

\paragraph{Bounding $\|\z_t-\hat{\z}_t\|^2$.}
Since $\z_t-\hat{\z}_t = (\alpha_{t-1}-\alpha_t)A_t = \Delta_t A_t$, we have
\beq \label{eq:lemma:bound:eee:1:surrogate}
\|\z_t-\hat{\z}_t\|^2
&=&
\Delta_t^2
\left\|
\s_{t-1}+\nabla f(\x_{t-1})-\nabla f(\x_t)
\right\|^2
\nonumber\\
&\overset{\step{1}}{\leq}&
2\Delta_t^2\|\s_{t-1}\|^2
+
2\Delta_t^2
\|\nabla f(\x_{t-1})-\nabla f(\x_t)\|^2
\nonumber\\
&\overset{\step{2}}{\leq}&
2\Delta_t^2\|\s_{t-1}\|^2
+
2\Delta_t^2\lambda
\gamma_{t-1}^2\|\m_{t-1}\|^2 .
\eeq
Here step \step{1} uses $\|\a+\b\|^2\leq2\|\a\|^2+2\|\b\|^2$, and
step \step{2} uses \eqref{eq:bound:gradient:diff}.

\paragraph{Handling the non-predictable cross term.}
From \eqref{eq:bound:s}, we have
\[
\|\s_t\|^2
=
\|\z_t\|^2
+
2\langle \z_t,\alpha_t\epsilons_t\rangle
+
\alpha_t^2\|\epsilons_t\|^2 .
\]
Since $\alpha_t$ is not $\FF_{t-1}$-measurable, we cannot use
$\EEE[\alpha_t\epsilons_t\mid\FF_{t-1}]=0$. Instead, we split the cross term as
\beq \label{eq:cross:split:new}
2\langle \z_t,\alpha_t\epsilons_t\rangle
&=&
2\langle \z_t-\hat{\z}_t,\alpha_{t-1}\epsilons_t\rangle
+
2\langle \z_t,(\alpha_t-\alpha_{t-1})\epsilons_t\rangle
+ 2\langle \hat{\z}_t,\alpha_{t-1}\epsilons_t\rangle .
\eeq
The last term is the only term handled by conditional unbiasedness. Since
$\hat{\z}_t$ and $\alpha_{t-1}$ are $\FF_{t-1}$-measurable, we have
\[
\EEE\!\left[
\left.
\langle \hat{\z}_t,\alpha_{t-1}\epsilons_t\rangle
\right|
\FF_{t-1}
\right]
=
\left\langle
\hat{\z}_t,\alpha_{t-1}
\EEE[\epsilons_t\mid\FF_{t-1}]
\right\rangle
=0.
\]

\paragraph{Pathwise control of the remaining cross terms.}
We first bound
$2\langle \z_t-\hat{\z}_t,\alpha_{t-1}\epsilons_t\rangle$.
If $\Delta_t>0$, define
\[
\eta_t:=\frac{8\Delta_t^2}{\alpha_t}>0 .
\]
By Young's inequality and \eqref{eq:lemma:bound:eee:1:surrogate},
\beq \label{eq:young:first:positive:new}
2\langle \z_t-\hat{\z}_t,\alpha_{t-1}\epsilons_t\rangle
&\overset{\step{1}}{\leq}&
\frac1{\eta_t}\|\z_t-\hat{\z}_t\|^2
+
\eta_t\alpha_{t-1}^2\|\epsilons_t\|^2
\nonumber\\
&\overset{\step{2}}{\leq}&
\frac{2\Delta_t^2}{\eta_t}\|\s_{t-1}\|^2
+
\frac{2\Delta_t^2\lambda}{\eta_t}
\gamma_{t-1}^2\|\m_{t-1}\|^2
+
\eta_t\alpha_{t-1}^2\|\epsilons_t\|^2
\nonumber\\
&=&
\frac{\alpha_t}{4}\|\s_{t-1}\|^2
+
\frac{\alpha_t\lambda}{4}
\gamma_{t-1}^2\|\m_{t-1}\|^2
+
\frac{8\Delta_t^2}{\alpha_t}\alpha_{t-1}^2
\|\epsilons_t\|^2 .
\eeq
If $\Delta_t=0$, then $\z_t-\hat{\z}_t=0$, and the left-hand side of
\eqref{eq:young:first:positive:new} is exactly zero. In this case, the
right-hand side of \eqref{eq:young:first:positive:new} is nonnegative and all
terms containing $\Delta_t^2/\alpha_t$ vanish. Therefore
\eqref{eq:young:first:positive:new} holds for all sample paths, without ever
dividing by zero.

Similarly, using $\alpha_t-\alpha_{t-1}=-\Delta_t$, we have
\beq \label{eq:young:second:new}
2\langle \z_t,(\alpha_t-\alpha_{t-1})\epsilons_t\rangle
&\leq&
2\Delta_t\|\z_t\|\|\epsilons_t\|
\nonumber\\
&\overset{\step{1}}{\leq}&
\frac{\alpha_t}{2}\|\z_t\|^2
+
\frac{2\Delta_t^2}{\alpha_t}\|\epsilons_t\|^2 .
\eeq

\paragraph{Conditional recursion before using the self-normalized bound.}
Combining \eqref{eq:cross:split:new}, \eqref{eq:young:first:positive:new},
and \eqref{eq:young:second:new}, and then taking conditional expectation with
respect to $\FF_{t-1}$, gives
\beq \label{eq:lemma:bound:eee:1:expand:new}
\EEE[\|\s_t\|^2\mid\FF_{t-1}]
&\leq&
\EEE\!\left[
\left.
\left(1+\frac{\alpha_t}{2}\right)\|\z_t\|^2
+
\frac{\alpha_t}{4}\|\s_{t-1}\|^2
+
\frac{\alpha_t\lambda}{4}
\gamma_{t-1}^2\|\m_{t-1}\|^2
\right|
\FF_{t-1}
\right]
\nonumber\\
&&
+
\EEE\!\left[
\left.
\left(
\alpha_t^2
+
\frac{8\Delta_t^2}{\alpha_t}\alpha_{t-1}^2
+
\frac{2\Delta_t^2}{\alpha_t}
\right)
\|\epsilons_t\|^2
\right|
\FF_{t-1}
\right]
\nonumber\\
&\overset{\step{1}}{\leq}&
\EEE\!\left[
\left.
\left(1+\frac{\alpha_t}{2}\right)\|\z_t\|^2
+
\frac{\alpha_t}{4}\|\s_{t-1}\|^2
+
\frac{\alpha_t\lambda}{4}
\gamma_{t-1}^2\|\m_{t-1}\|^2
\right|
\FF_{t-1}
\right]
\nonumber\\
&&
+
\EEE\!\left[
\left.
\left(
\alpha_t^2
+
\frac{10\Delta_t^2}{\alpha_t}
\right)
\|\epsilons_t\|^2
\right|
\FF_{t-1}
\right],
\eeq
where step \step{1} uses $\alpha_{t-1}\leq1$.

Next, using \eqref{eq:z:squared:new}, we obtain pathwise
\beq \label{eq:z-plus-s:new}
\left(1+\frac{\alpha_t}{2}\right)\|\z_t\|^2
+
\frac{\alpha_t}{4}\|\s_{t-1}\|^2
&\overset{\step{1}}{\leq}&
\left(1+\frac{\alpha_t}{2}\right)(1-\alpha_t)\|\s_{t-1}\|^2
+
\frac{\alpha_t}{4}\|\s_{t-1}\|^2
\nonumber\\
&&
+
\left(1+\frac{\alpha_t}{2}\right)
\frac{\lambda}{\alpha_t}
\gamma_{t-1}^2\|\m_{t-1}\|^2
\nonumber\\
&\overset{\step{2}}{\leq}&
\left(1-\frac{\alpha_t}{4}\right)\|\s_{t-1}\|^2
+
\frac{3\lambda}{2\alpha_t}
\gamma_{t-1}^2\|\m_{t-1}\|^2 .
\eeq
Moreover,
\[
\frac{\alpha_t\lambda}{4}
\gamma_{t-1}^2\|\m_{t-1}\|^2
\leq
\frac{\lambda}{4\alpha_t}
\gamma_{t-1}^2\|\m_{t-1}\|^2 .
\]
Substituting these estimates into
\eqref{eq:lemma:bound:eee:1:expand:new} yields
\beq \label{eq:before:rho:bounds:new}
\EEE[\|\s_t\|^2\mid\FF_{t-1}]
&\leq&
\EEE\!\left[
\left.
\left(1-\frac{\alpha_t}{4}\right)\|\s_{t-1}\|^2
+
\frac{3\lambda}{\alpha_t}
\gamma_{t-1}^2\|\m_{t-1}\|^2
\right|
\FF_{t-1}
\right]
\nonumber\\
&&
+
\EEE\!\left[
\left.
\left(
\alpha_t^2
+
\frac{10\Delta_t^2}{\alpha_t}
\right)
\|\epsilons_t\|^2
\right|
\FF_{t-1}
\right].
\eeq

\paragraph{Using the corrected AdaGrad-Norm bounds.}
Since $\alpha_t=\rho_t$, Lemma \ref{lemma:corrected} gives the
desired bounds for all $t\ge1$. By Assumption \ref{ass:BG}, $\hat g_t\leq {\hG}$ almost surely. Hence
\beq \label{eq:alpha-delta-bound:new}
\alpha_t^2+\frac{10\Delta_t^2}{\alpha_t}
&\leq&
\frac{1+{\hG}^2+10 {\hG}^3}{1+\sum_{i=1}^t\|\g_i\|^2}.
\eeq
Moreover, the ratio bound implies
\beq \label{eq:alpha-ratio-bound:new}
\frac1{\alpha_t}
&\leq&
\frac{1+{\hG}}{\alpha_{t-1}} .
\eeq
Substituting \eqref{eq:alpha-delta-bound:new} and
\eqref{eq:alpha-ratio-bound:new} into \eqref{eq:before:rho:bounds:new}, we get
\beq \label{eq:before:self:normalized:new}
\EEE[\|\s_t\|^2\mid\FF_{t-1}]
&\leq&
\EEE\!\left[
\left.
\left(1-\frac{\alpha_t}{4}\right)\|\s_{t-1}\|^2
+
\frac{3(1+{\hG})\lambda}{\alpha_{t-1}}
\gamma_{t-1}^2\|\m_{t-1}\|^2
\right|
\FF_{t-1}
\right]
\nonumber\\
&&
+
(1+{\hG}^2+10 {\hG}^3)
\EEE\!\left[
\left.
\frac{\|\epsilons_t\|^2}
{1+\sum_{i=1}^t\|\g_i\|^2}
\right|
\FF_{t-1}
\right].
\eeq

\paragraph{Self-normalized noise domination.}
By Lemma \ref{lemma:self:normalized:noise}, we have
\[
\EEE\!\left[
\left.
\frac{\|\epsilons_t\|^2}
{1+\sum_{i=1}^t\|\g_i\|^2}
\right|
\FF_{t-1}
\right]
\leq
(4+2 {\hG}^2)
\EEE\!\left[
\left.
\frac{\|\g_t\|^2}
{1+\sum_{i=1}^t\|\g_i\|^2}
\right|
\FF_{t-1}
\right].
\]
Recall that
\[
\lambda_1:=3(1+{\hG})\lambda
=
\frac{3\theta^2L^2}{\varepsilon^2}(1+{\hG}),
\qquad
\lambda_2:=(1+{\hG}^2+10 {\hG}^3)(4+2 {\hG}^2).
\]
Combining the last two displays gives
\beq \label{eq:conditional:tracking:new}
\EEE[\|\s_t\|^2\mid\FF_{t-1}]
&\leq&
\EEE\!\left[
\left.
\left(1-\frac{\alpha_t}{4}\right)\|\s_{t-1}\|^2
+
B_t
\right|
\FF_{t-1}
\right],
\eeq
where
\[
B_t:=
\lambda_1
\frac{\gamma_{t-1}^2}{\alpha_{t-1}}
\|\m_{t-1}\|^2
+
\lambda_2
\frac{\|\g_t\|^2}
{1+\sum_{i=1}^t\|\g_i\|^2}.
\]

\paragraph{Predictable multiplier.}
Let $H_{t-1}\geq0$ be any integrable $\FF_{t-1}$-measurable random variable.
Multiplying \eqref{eq:conditional:tracking:new} by $H_{t-1}$ and then taking
total expectation gives
\beq
\EEE\!\left[H_{t-1}\|\s_t\|^2\right]
&=&
\EEE\!\left[
H_{t-1}\EEE[\|\s_t\|^2\mid\FF_{t-1}]
\right]
\nonumber\\
&\overset{\step{1}}{\leq}&
\EEE\!\left[
H_{t-1}
\EEE\!\left[
\left.
\left(1-\frac{\alpha_t}{4}\right)\|\s_{t-1}\|^2+B_t
\right|
\FF_{t-1}
\right]
\right]
\nonumber\\
&\overset{\step{2}}{=}&
\EEE\!\left[
H_{t-1}
\left(
\left(1-\frac{\alpha_t}{4}\right)\|\s_{t-1}\|^2+B_t
\right)
\right].
\eeq
Here step \step{1} uses \eqref{eq:conditional:tracking:new}, and step
\step{2} uses the $\FF_{t-1}$-measurability of $H_{t-1}$ together with the
tower property. 
\end{proof}

\subsection{Proof of Lemma \ref{lemma:OptEMA:M:suff:dec}}
\label{app:lemma:OptEMA:M:suff:dec}

\begin{proof} Define $\mathbf{s}_t := \mathbf{m}_{t} - \nabla f(\mathbf{x}_t)$, and $h_t := \varepsilon + \sqrt{\|\mathbf{v}_t\|_{\infty}}$.

\noi Define $c_0:=\tfrac{\theta}{ 2(1+{\hG})}$, $c_1:=\tfrac{\theta}{ 2\varepsilon^2} (1+{\hG})$, and $c_2:=\tfrac{L\theta^2}{2\varepsilon^2}$. 

\noi By the \(L\)-smoothness of $f(\cdot)$, we have, almost surely,
\beq \label{eq:L:smooth}
&&\ts  f(\x_{t+1}) - f(\x_t)  \nn\\
&\leq & \ts \tfrac{L}{2}\|\x_{t+1}-\x_t\|_2^2 + \la\nabla f(\x_t),\x_{t+1}-\x_t\ra   \nn \\
&\overset{\step{1}}{=}&   \tfrac{L}{2}\theta^2\gamma_t^2\|\tfrac{\m_{t}}{\varepsilon+ \sqrt{\v_{t}}}\|_2^2 + \theta \gamma_t \left( - \la\nabla f(\x_t), \tfrac{\nabla f(\x_t)}{\varepsilon+ \sqrt{ \v_{t}}}\ra + \la\nabla f(\x_t),  \tfrac{\nabla f(\x_t) - \m_{t} }{\varepsilon+ \sqrt{\v_{t}}} \ra \right)   \nn \\
&\overset{\step{2}}{\leq}&   \tfrac{L\theta^2\gamma_t^2}{2 \varepsilon^2} \|\m_{t}\|_2^2 + \theta \gamma_t \left( -\tfrac{1}{ h_t }\|\nabla f(\x_t)\|_2^2 + \tfrac{1}{\varepsilon}\| \nabla f(\x_t)\| \cdot \| \m_t - \nabla f(\x_t) \|\right)   \nn \\
&\overset{}{=}&  \tfrac{L \theta^2\gamma_t^2}{2 \varepsilon^2} \|\m_{t}\|_2^2 + \theta \gamma_t \left( -  \tfrac{ 1 }{ 2 h_t }\|\nabla f(\x_t)\|_2^2 -  \tfrac{ 1 }{ 2 h_t }\|\nabla f(\x_t)\|_2^2 + \tfrac{1}{\varepsilon} \|\nabla f(\x_t)\|\cdot \|\s_t\| \right)    \nn \\
&\overset{\step{3}}{\leq}&  \tfrac{L\theta^2\gamma_t^2}{2\varepsilon^2} \|\m_{t}\|_2^2 + \theta \gamma_t \left( -  \tfrac{1 }{ 2 h_t }\|\nabla f(\x_t)\|_2^2 +  \tfrac{ h_t }{ 2\varepsilon^2} \| \s_t \|^2   \right)    \nn \\
&\overset{\step{4}}{\leq}&  \tfrac{L\theta^2\gamma_t^2}{2\varepsilon^2} \|\m_{t}\|_2^2 + \theta \gamma_t \left( -  \tfrac{ 1 }{ 2 (1+{\hG}) }\|\nabla f(\x_t)\|_2^2 +  \tfrac{ 1+ {\hG}  }{ 2 \varepsilon^2} \| \s_t \|^2   \right)   , \nn
\eeq
\noi where step \step{1} uses $\x_{t+1} = \x_t - \theta \gamma_t \cdot \tfrac{\m_{t}}{ \varepsilon + \sqrt{ \v_{t}}}$; step \step{2} uses Cauchy-Schwarz inequality, $\mathbf{v}_t \geq 0$, and the definition of $h_t$; step \step{3} uses the inequality $-\tau x^2 + ax \leq \tfrac{a^2}{4\tau}$ with $\tau=\tfrac{1}{2h_t}$, $a=\tfrac{1}{\varepsilon}\|\s_t\|$ and $x=\|\nabla f(\x_t)\|$; step \step{4} uses the fact that $h_t:=\varepsilon + \sqrt{\max(\v_t)} \leq \varepsilon + \sqrt{ \|\v_t\|} \leq 1+{\hG}$, which is implied by $\varepsilon\in(0,1)$ and Lemma \ref{lemma:bound:EMA:M:v}.

Additionally, the following inequalities hold almost surely,
\beq
&& \ts f(\x_{t+1}) - f(\x_{t}) \nn\\
&\leq & \ts \tfrac{L}{2}\|\x_{t+1}-\x_t\|_2^2 + \la\nabla f(\x_t),\x_{t+1}-\x_t\ra   \nn \\
&\le& \tfrac{L\theta^2\gamma_t^2}{2\varepsilon^2}\|\m_t\|^2 - \theta\gamma_t \la \nabla f(\x_t),
\frac{\m_t}{\varepsilon+\sqrt{\v_t}}\ra    \nn   \\
&=& \frac{L\theta^2}{2\varepsilon^2}\gamma_t^2\|\m_t\|^2   - \theta\gamma_t \la \m_t,\frac{\m_t}{\varepsilon+\sqrt{\v_t}}\ra + \theta\gamma_t\la\s_t,\frac{\m_t}{\varepsilon+\sqrt{\v_t}}\ra    \nn \\
&\le& \frac{L\theta^2}{2\varepsilon^2}\gamma_t^2\|\m_t\|^2 -\frac{\theta}{1+{\hG}}\gamma_t\|\m_t\|^2
+\frac{\theta}{\varepsilon}\gamma_t\|\s_t\|\|\m_t\| \nn       \\
&\le& c_2\gamma_t^2\|\m_t\|^2 - c_0\gamma_t\|\m_t\|^2 + c_1\gamma_t\|\s_t\|^2.\nn
\eeq

\end{proof}

\subsection{Proof of Lemma \ref{lemma:sum:tail}}
\label{app:lemma:sum:tail}

\begin{proof} Define $\hat{g}_t = \max_{1\leq i\leq t} \|\g_i\|$ and $\alpha_t=  \sqrt{\frac{1 + \frac{\tau}{t}\sum_{i=1}^t \| \g_i \|^2}{1 + \sum_{i=1}^t \| \g_i \|^2}}$, where $\tau\in[0,1]$.

\noi Define $\gamma_t = \min \left(
\alpha_t,\,
\sqrt{\alpha_t}\left( 1 + \sum_{j=1}^t \|\m_j\|^2 \right)^{-1/2}
\right)$.

\noi Define $B_t:=\lambda_1 \tfrac{\gamma_{t-1}^2}{\alpha_{t-1}}  \| \m_{t-1}\|^2    +  \lambda_2 \tfrac{ \|\g_t\|^2 }{ 1 + \sum_{i=1}^t \|\g_i\|^2 }$.

\noi Define $C_b:=(\lambda_1+\lambda_2) \cdot (1+\ln \kappa  ) \cdot \ln(e+ \hG^2)$, and $C_c:=4 (1+{\hG})\cdot {\hG}^2 + 4 (1+{\hG}) C_b$.

\paragraph{Bounding the term $\ln (e + \GGG_T )$.} We derive:
\beq \label{eq:log:GT}
\ln(e+\GGG_T) \,\,\xixi{1}{\leq}\,\,
\ln(e+{\hG}^2T)
\,\,\xixi{2}{\leq}\,\,
\ln(e+{\hG}^2)\ln(e+T),
\eeq
\noi where step \step{1} uses Assumption \ref{ass:BG}, leading to $\GGG_T\le {\hG}^2T$ almost surely; step \step{2} uses $\ln(e+ab) \leq \ln(e+a)\cdot \ln(e+b)$.

\paragraph{Bounding the term $\sum_{t=1}^T  B_t$.} We derive:
\beq \label{eq:sum:B}
\ts \sum_{t=1}^T  B_t &\overset{\step{1}}{=}& \ts  \left( \lambda_1 \sum_{t=1}^T \tfrac{\gamma_{t-1}^2}{\alpha_{t-1}}  \| \m_{t-1}\|^2  \right)  +  \left(\lambda_2 \sum_{t=1}^T \tfrac{ \|\g_t\|^2 }{ 1 + \sum_{i=1}^t \|\g_i\|^2 } \right) \nn\\
&\overset{\step{2}}{\leq}& \ts \left(  \lambda_1  \sum_{t=1}^T \tfrac{\gamma_{t}^2}{\alpha_{t}}  \| \m_{t}\|^2  \right)  +  \left( \lambda_2  \sum_{t=1}^T \tfrac{ \|\g_t\|^2 }{ 1 + \sum_{i=1}^t \|\g_i\|^2 } \right) \nn\\
&\overset{\step{3}}{\leq}& \ts \left(  \lambda_1  \sum_{t=1}^T  \tfrac{\| \m_{t}\|^2}{ 1 + \sum_{i=1}^t \|\m_i\|^2 }   \right)  +  \left( \lambda_2  \sum_{t=1}^T \tfrac{ \|\g_t\|^2 }{ 1 + \sum_{i=1}^t \|\g_i\|^2 } \right) \nn\\
&\overset{\step{4}}{\leq}& \ts   \lambda_1  \ln \left( 1 + \sum_{i=1}^T \|\m_i\|^2   \right)  +   \lambda_2 \ln \left( 1 + \sum_{i=1}^T \|\g_i\|^2   \right) \nn\\
&\overset{\step{5}}{\leq}& \ts   \lambda_1  \ln \left( 1 + \kappa \sum_{i=1}^T \|\g_i\|^2   \right)  +   \lambda_2 \ln \left( 1 + \sum_{i=1}^T \|\g_i\|^2   \right) \nn\\
&\overset{\step{6}}{\leq}& \ts   (\lambda_1+\lambda_2)  \ln \left( 1 + \kappa \sum_{i=1}^T \|\g_i\|^2   \right)   \nn\\
&\overset{\step{7}}{\leq}& \ts \ts (\lambda_1+\lambda_2) \cdot (1+\ln \kappa  ) \ln \left( e + \sum_{i=1}^T \|\g_i\|^2   \right)  \nn\\
&\leq & \ts \underbrace{\ts (\lambda_1+\lambda_2) \cdot (1+\ln \kappa  ) \cdot \ln(e+ \hG^2) }_{:=C_b} \cdot \ln (e+T),
\eeq
\noi where step \step{1} uses the definition of $B_t$; step \step{2} uses the results of shifting the summation since $\|\m_0\|^2=0$; step \step{3} uses the definition of $\gamma$, leading to $\gamma_t^2\leq \alpha_t \cdot \tfrac{1}{1+\sum_{i=1}^t \|\m_i\|^2}$; step \step{4} uses the standard sum-to-integral inequality $\sum_{t=1}^T \frac{\x_t}{1 + \sum_{i=1}^t \x_i} \leq \ln(1 + \sum_{t=1}^T \x_t)$ with $\x_i=\|\m_i\|^2$ or $\x_i=\|\g_i\|^2$; step \step{5} uses $\sum_{t=1}^T \|\m_t\|^2\leq \kappa \sum_{t=1}^T \|\g_t\|^2$; step \step{6} uses $\kappa\geq 1$; step \step{7} uses $\ln(1+\kappa a) \leq (1+\ln \kappa ) \cdot \ln(e + a)$ for all $\kappa\geq 1$ and $a\geq 0$. 
 
\paragraph{Bounding the term $\EEE [\sum_{t=1}^T \gamma_t \|\s_t\|^2]$.} Using Lemma \ref{lemma:bound:eee:1}, we derive the following inequalities for all $t\geq 1$:
\beq
0 &\le& \EEE [B_t + \big(1-\tfrac{\alpha_t}{4}\big)\|\s_{t-1}\|^2  - \|\s_t\|^2 ] \nn \\
& =  & \EEE [B_t + (1 - \tfrac{\alpha_t}{4} )\|\s_{t-1}\|^2 - (1-\tfrac{\alpha_{t+1}}{4} )\|\s_t\|^2 - \tfrac{\alpha_{t+1}}{4} \|\s_{t}\|^2   ] \nn \\
&\overset{\step{1}}{\leq}& \EEE [   B_t + (1 - \tfrac{\alpha_t}{4} )\|\s_{t-1}\|^2 - (1-\tfrac{\alpha_{t+1}}{4} )\|\s_t\|^2    - \tfrac{\alpha_{t}}{4 (1 +{\hG})} \|\s_{t}\|^2 ], \nn 
\eeq
\noi step \step{1} uses $\tfrac{\alpha_t}{\alpha_{t+1}}\leq 1 + \hG$.

Multiplying both sided by $4 (1+{\hG})$ and summing the resulting inequality over $t$ from $1$ to $T$ allows the sequence to telescope:
\beq \label{eq:to:log}
\ts \EEE [\sum_{t=1}^T \alpha_t \|\s_t\|^2] &\leq&  \ts  4 (1+{\hG}) \cdot   \sum_{t=1}^T \EEE \big[    (1-\tfrac{\alpha_t}{4}) \|\s_{t-1}\|^2 -  (1-\tfrac{\alpha_{t+1}}{4}) \|\s_{t}\|^2    +   B_t  \big] \nn\\
&= &  \ts  4 (1+{\hG}) \cdot  \EEE \big[  (1-\tfrac{\alpha_0}{4}) \|\s_{0}\|^2    +  \sum_{t=1}^T B_t  \big] \nn\\
&\overset{\step{1}}{\leq}&  \ts  4 (1+{\hG})\cdot {\hG}^2 + 4 (1+{\hG}) C_b\ln(e+T)\nn\\
&\overset{}{\leq}& \ts \underbrace{\ts4 (1+{\hG})\cdot ({\hG}^2 + C_b) }_{:=C_c} \cdot \ln(e+T),\nn
\eeq
\noi where step \step{1} uses $\alpha_0=\rho_0=1$ and $\|\s_0\|=\|\m_0 - \nabla f(\x_0)\|\leq {\hG}$.

\end{proof}

\subsection{Proof of Lemma \ref{lemma:OptEMAM:G:bound}}
\label{app:lemma:OptEMAM:G:bound}

\begin{proof} Define $F_t:=f(\x_t)-f_*$, $\GGG_t:=\sum_{i=1}^t\|\g_i\|^2$, $\MMM_t:=\sum_{i=1}^t\|\m_i\|^2$.

\noi Define $\gamma_t = \min \left(
\alpha_t,\,
\sqrt{\alpha_t}\left( 1 + \sum_{j=1}^t \|\m_j\|^2 \right)^{-1/2}
\right)$.

\paragraph{Bounding the term $D_T:=\sum_{t=1}^T \gamma_t^2\|\m_t\|^2$.} We have almost surely:
\beq \label{eq:DT}
\ts D_T \leq \sum_{t=1}^T\frac{\|\m_t\|^2}{1+\MMM_t}\,\, \xixi{1}{\le}\,\, \ln(1+\MMM_T)
\,\,\xixi{2}{\le}\,\, \ln(1+\kappa \hG^2 T)\,\,\xixi{3}{\le}\,\, \underbrace{\ts \ln(e+\kappa \hG^2 )}_{:=C_d} \ln(e+T),
\eeq
\noi where step \step{1} uses $\gamma_t^2\leq \alpha_t(1+\MMM_t)^{-1}\leq (1+\MMM_t)^{-1}$; step \step{2} uses $\MMM_T\leq \kappa \GGG_T\leq \kappa \hG^2 T$; step \step{3} uses $\ln(1+ab)\leq \ln(e+a)\ln(e+b)$ for all $a,b\geq 0$.

\paragraph{Bounding the term $\EEE[F_T]$.} By Lemma \ref{lemma:OptEMA:M:suff:dec}, after dropping the nonnegative descent term, we have
\beq
F_{t+1}-F_t \leq c_1\gamma_t\|\s_t\|^2+c_2\gamma_t^2\|\m_t\|^2 . \nn
\eeq
Taking expectations and summing from $t=1$ to $T-1$ gives
\beq\label{eq:OptEMA:M:objective:expectation:first}
\EEE[F_T] &\leq& \ts F_1 + c_1 \EEE [\sum_{t=1}^{T-1}\gamma_t\|\s_t\|^2] + c_2\EEE [\sum_{t=1}^{T-1}\gamma_t^2\|\m_t\|^2] \nn\\
& \xixi{1}{\leq} & \ts F_1 + c_1 C_c \ln(e+T) + c_2 C_d \ln(e+T) \nn\\
& \xixi{2}{\leq} & \ts \underbrace{\ts (F_1 + c_1 C_c + c_2 C_d )}_{C_f} \cdot \ln(e+T), \nn
\eeq
\noi where step \step{1} uses Lemma \ref{lemma:sum:tail} that $\EEE\left[\sum_{t=1}^T \alpha_t \|\s_t\|^2\right] \le C_c \ln(e+T)$, and Inequality (\ref{eq:DT}); step \step{2} uses $\ln(e+T)\ge 1$ to absorb the constant
$F_1$.

\end{proof}

\subsection{Proof of Lemma \ref{lemma:M:U2}}
\label{app:lemma:M:U2}

\begin{proof}
Let $U_t:=\sum_{i=1}^t \gamma_i \|\s_i\|^2$, $U_0:=0$.

\paragraph{Bounding the term $\EEE[\sum_{t=1}^T U_{t-1} \gamma_t \|\s_t\|^2]$.} Note that $U_{t-1}\ge0$ is an integrable $\FF_{t-1}$-measurable random variable. Applying Lemma \ref{lemma:bound:eee:1} with $H_{t-1}=U_{t-1}$ yields:
 \beq
 0 &\xixi{}{\le}& \EEE\!\left[U_{t-1}\left(B_t+\big(1-\frac{\alpha_t}{4}\big)\|\s_{t-1}\|^2-\|\s_t\|^2\right)\right] \nn\\
&\xixi{}{=}& \EEE\!\left[U_{t-1}\left(- \frac{\alpha_t}{4}\|\s_t\|^2  + B_t + \big(1-\frac{\alpha_t}{4}\big) (\|\s_{t-1}\|^2 -\|\s_t\|^2 )\right)\right] \nn\\
&\xixi{1}{\leq}& \EEE\!\left[U_{t-1}\left(- \frac{\gamma_t}{4}\|\s_t\|^2  + B_t + \big(1-\frac{\alpha_t}{4}\big) (\|\s_{t-1}\|^2 -\|\s_t\|^2 )\right)\right]. \nn
\eeq
Multiplying the preceding inequality by $4$, and summing over
$t=1,\ldots,T$, we obtain
\beq \label{eq:to:be:com:0}
\ts \EEE \left[\sum_{t=1}^T U_{t-1}\gamma_t\|\s_t\|^2\right] \le
\underbrace{\ts 4\EEE \left[\sum_{t=1}^T U_{t-1}B_t\right]}_{\text{Term I}} + \underbrace{\ts 4\EEE \left[\sum_{t=1}^T
U_{t-1}\left(1-\frac{\alpha_t}{4}\right)
(\|\s_{t-1}\|^2-\|\s_t\|^2)\right]}_{\text{Term II}}.
\eeq

\paragraph{Bounding \text{Term I}.} For \text{Term I}, Lemma \ref{lemma:sum:tail} and the monotonicity of $U_t$ give, almost surely,
\beq
\ts \sum_{t=1}^T U_{t-1}B_t \le U_{T-1}\sum_{t=1}^T B_t \le C_b\ln(e+T)U_{T-1}.\nn
\eeq
Since $\gamma_t\le \alpha_t$, Lemma \ref{lemma:sum:tail} further gives $\EEE[U_{T-1}] \le \EEE[U_T] \le\EEE \left[\sum_{t=1}^T\alpha_t\|\s_t\|^2\right]
\le C_c\ln(e+T)$. This further leads to:
\beq \label{eq:to:be:com:1}
\text{Term I} \leq 4 C_bC_c\ln^2(e+T).
\eeq

\paragraph{Bounding \text{Term II}.} For \text{Term II}, set $A_t:=U_{t-1}\left(1-\frac{\alpha_t}{4}\right)$, $r_t:=\|\s_t\|^2$. Since $U_{t-1}$ is nondecreasing and $\alpha_t$ is nonincreasing, $A_t$ is nondecreasing. Thus,
by Lemma \ref{lemma:sum_by_parts},
\beq
\ts \sum_{t=1}^T A_t(r_{t-1}-r_t) \le A_T\max_{0\le i\le T}r_i \le 4 \hG^2 U_{T-1},\nn
\eeq
where we used Lemma \ref{lemma:bound:EMA:M:v} and $\|\s_0\|=\|\nabla f(\x_1)\|\le \hG$. Therefore,
\beq \label{eq:to:be:com:2}
\text{Term II} \le
16 \hG^2 C_c\ln(e+T).
\eeq
Combining Inequalities (\ref{eq:to:be:com:0}), (\ref{eq:to:be:com:1}), (\ref{eq:to:be:com:2}) and using $\ln(e+T)\le \ln^2(e+T)$ gives
\beq \label{eq:to:be:com:res}
\ts \EEE \left[\sum_{t=1}^T U_{t-1}\gamma_t\|\s_t\|^2\right]
\le
(4C_bC_c+16\hG^2C_c)\ln^2(e+T).
\eeq

\paragraph{Bounding the term $\EEE[U_T^2]$.} We derive:
\beq
\ts U_T^2 \,\,= \,\, \ts \left( \sum_{t=1}^T \gamma_t \|\s_t\|^2 \right)^2 \,\, =\,\, \left( \sum_{t=1}^T \gamma_t^2 \|\s_t\|^4\right) + \left(2\sum_{t=1}^T U_{t-1} \gamma_t \|\s_t\|^2\right). \nn
\eeq
\noi Taking the expectation of both sides yields:
\beq
\ts \EEE[U_T^2] & \leq & \ts  \sum_{t=1}^T \EEE[\gamma_t^2 \|\s_t\|^4 ] + \EEE[2\sum_{t=1}^T U_{t-1} \gamma_t \|\s_t\|^2 ] \nn\\
& \xixi{1}{\leq} & \ts  4 \hG^2 \sum_{t=1}^T \EEE[ \gamma_t \|\s_t\|^2 ] + \EEE[2\sum_{t=1}^T U_{t-1} \gamma_t \|\s_t\|^2 ] \nn\\
& \xixi{2}{\leq} & \ts  4 \hG^2 \cdot C_c \ln(e+T) + 2 \cdot (4 C_b C_c + 16 C_c \hG^2) \ln^2(e+T) \nn\\
& = & \ts  C_u \ln^2(e+T), \nn
\eeq
\noi where step \step{1} uses $\gamma_t\leq 1$ and $\|\s_t\|^2\leq 4 \hG^2$; step \step{2} uses $\EEE\left[\sum_{t=1}^T \gamma_t \|\s_t\|^2\right] \leq \EEE\left[\sum_{t=1}^T \alpha_t \|\s_t\|^2\right] \le C_c \ln(e+T)$ (see Lemma \ref{lemma:sum:tail}), and Inequality (\ref{eq:to:be:com:res}).

\end{proof}

\subsection{Proof of Lemma \ref{lemma:M:weighted:m}}
\label{app:lemma:M:weighted:m}

\begin{proof} Define $c_0:=\tfrac{\theta}{ 2(1+{\hG})}$, $c_1:=\tfrac{\theta}{ 2\varepsilon^2} (1+{\hG})$, and $c_2:=\tfrac{L\theta^2}{2\varepsilon^2}$. 

\noi Let $D_T:=\sum_{t=1}^T\gamma_t^2\|\m_t\|^2$, $U_T:=\sum_{t=1}^T\gamma_t\|\s_t\|^2$, $W_T:=\sum_{t=1}^T\gamma_t\|\m_t\|^2$, and $F_t:=f(\x_t)-f_*$.

\paragraph{Bounding the term $\EEE[W_T]= \EEE[\sum_{t=1}^T\gamma_t\|\m_t\|^2]$.} Summing Inequality (\ref{eq:OptEMA:M:suff:dec:2}) from $t=1$ to $T$ gives the pathwise estimate
\beq
0 &\le& \ts f(\x_1) - f(\x_{T+1}) - \big(c_0 \sum_{t=1}^T\gamma_t\|\m_t\|^2 \big)  + c_1 \big(\sum_{t=1}^T\gamma_t\|\s_t\|^2\big) + c_2 \big(\sum_{t=1}^T\gamma_t^2\|\m_t\|^2\big) \nn\\
&\xixi{1}{\leq}& \ts f(\x_1) - f_* - c_0 W_T + c_1 U_T + c_2 D_T. \label{eq:before:square}
\eeq
\noi where step \step{1} uses $F_{T+1}\ge0$. Dividing both sides by $c_0$ and taking expectations yields
\beq
\EEE[W_T] \,\,\leq\,\,  \tfrac{1}{c_0}\EEE[ F_1 +  c_1 U_T +  c_2 D_T] \,\,\xixi{1}{\leq} \,\, \underbrace{ \tfrac{1}{c_0}\big( \ts F_1 + c_1 C_c + c_2 C_d \big) }_{C_w}  \cdot \ln(e+T) , \nn
\eeq
\noi where step \step{1} uses Lemma \ref{lemma:sum:tail} and Lemma \ref{lemma:OptEMAM:G:bound}.

\paragraph{Bounding the term $\EEE[W_T^2]$.}  Squaring the pathwise estimate in Inequality (\ref{eq:before:square}) gives
\beq
W_T^2 \,\,\leq\,\, \tfrac{3}{c_0^2} \left( F_1^2 + c_1^2 U_T^2 + c_2^2D_T^2\right) \,\,\xixi{1}{\leq}\,\, \tfrac{3}{c_0^2} \left( F_1^2 + c_1^2 U_T^2 + c_2^2 C_d^2 \ln^2(e+T) \right), \nn
\eeq
\noi where step \step{1} uses Lemma \ref{lemma:OptEMAM:G:bound} that $D_T\leq C_d \ln(e+T)$ almost surely. Taking expectations yields:
\beq
\EEE[W_T^2 ] &\leq&  \tfrac{3}{c_0^2} \left(F_1^2+ c_1^2 \EEE[U_T^2] + c_2^2 C_d^2 \ln^2(e+T) \right) \nn\\
&\xixi{1}{\leq} &  \tfrac{3}{c_0^2} \left(F_1^2+ c_1^2 C_u \ln^2(e+T) + c_2^2 C_d^2 \ln^2(e+T) \right) \nn\\
&\xixi{}{\leq} &  \tfrac{3}{c_0^2} \left(F_1^2+ c_1^2 C_u + c_2^2 C_d^2 \right) \cdot \ln^2(e+T), \nn
\eeq
\noi where step \step{1} uses Lemma \ref{lemma:M:U2} that $\EEE[U_T^2]\le C_{u} \ln^2(e+T)$.

\paragraph{Bounding the term \(\mathbb{E}[\alpha_T \MMM_T]\).} Define $q_T:=\alpha_T \MMM_T$. Using the definition of $\gamma_T$, we obtain
\beq
W_T &\xixi{1}{\ge}& \ts  \gamma_T \MMM_T \,\,=\,\, \min\left\{\MMM_T\alpha_T,\MMM_T\frac{\sqrt{\alpha_T}}{\sqrt{1+\MMM_T}}\right\} \,\,\xixi{2}{=}\,\,\ts  \min\left\{q_T, \sqrt{q_T}\sqrt{\frac{\MMM_T}{1+\MMM_T}}\right\}\nn\\
&\ge& \ts \min\left\{q_T,\sqrt{q_T}\sqrt{\frac{q_T}{1+q_T}}\right\} \,\, = \,\, \frac{q_T}{\sqrt{1+q_T}},\nn
\eeq
\noi where step \step{1} uses the fact that the sequence $\gamma_t$ is non-increasing; step \step{2} uses the definition of $\gamma_t$. For any $q\geq 0$, if
\(w\geq q/\sqrt{1+q}\), then $q\leq 2w+2w^2$. This further leads to:
\beq
q_T \leq 2 W_T  + 2 W_T^2.\nn
\eeq
\noi Taking expectations yields:
\beq
\EEE[\alpha_T \MMM_T] &\leq & 2 \EEE[ W_T + W_T^2 ] \nn\\
&\leq & 2C_w \ln(e+T) + 2  \cdot \tfrac{3}{c_0^2} \left(F_1^2+ c_1^2 C_u + c_2^2 C_d^2 \right) \cdot \ln^2(e+T) \nn\\
&\leq & \underbrace{ \ts \big(  2 C_w + \tfrac{6}{c_0^2} (F_1^2+ c_1^2 C_u + c_2^2 C_d^2 ) \big) }_{:=C_v} \cdot \ln^2(e+T). \nn
\eeq

\end{proof}

\subsection{Proof of Lemma \ref{lemma:M:sum:m}}
\label{app:lemma:M:sum:m}

\begin{proof} Let $\SSS_T:=\sum_{t=1}^T\|\s_t\|^2$, $\GGG_T:=\sum_{t=1}^T\|\g_t\|^2$, $\MMM_T:=\sum_{t=1}^T\|\m_t\|^2$, $\ZZ_T:=\EEE[\sqrt{1+\GGG_T}]$.

\paragraph{Bounding the term $\EEE\!\left[ \sqrt{\SSS_T}\right]$.} We derive the following inequalities:
\beq
\ts \EEE\!\left[\sqrt{\sum_{t=1}^T \|\s_t\|^2}\right] &\xixi{1}{\le}& \ts \EEE\!\left[\sqrt{\frac{1}{\alpha_T}}
\sqrt{\sum_{t=1}^T\alpha_t \|\s_t\|^2} \right] \nn\\
&\xixi{2}{\le}& \ts \sqrt{\EEE\!\left[\frac{1}{\alpha_T}\right]} \cdot \sqrt{\EEE\!\left[\sum_{t=1}^T\alpha_t \|\s_t\|^2 \right]} \nn\\
&\xixi{3}{\le}& \ts  \sqrt{C_c \ln(e+T)} \cdot \sqrt{\ZZ_T} , \label{eq:STST}
\eeq
\noi where step \step{1} uses the fact that $\alpha_t$ is non-increasing, leading to $\sum_{t=1}^T \|\s_t\|^2  \le \frac{1}{\alpha_T}\sum_{t=1}^T\alpha_t \|\s_t\|^2$; step \step{2} uses $\EEE[\sqrt{X}\sqrt{Y}] \leq \sqrt{\EEE[X]}\cdot \sqrt{\EEE[Y]}$, which is implied by Cauchy's inequality; step \step{3} uses $\EEE\!\left[\frac{1}{\alpha_T}\right] \le \EEE\!\left[\sqrt{1+\GGG_T}\right] = \ZZ_T$ (which is implied by $\frac{1}{\alpha_T} = \sqrt{
\frac{1+\GGG_T} {1+\frac{\tau}{T}\GGG_T}} \le \sqrt{1+\GGG_T}$), and Lemma \ref{lemma:sum:tail}.

\paragraph{Bounding the term $\EEE[\sqrt{\MMM_T}]$.} We have:
\beq \label{eq:MTMT}
\EEE[\sqrt{\MMM_T}] \,\,\le\,\,\sqrt{\EEE[\alpha_T\MMM_T]} \sqrt{\EEE \big[\frac1{\alpha_T}\big]}
\,\,\xixi{1}{\le}\,\, \sqrt{C_v} \ln(e+T) \cdot \sqrt{\ZZ_T},
\eeq
\noi where step \step{1} uses Lemma \ref{lemma:M:weighted:m}, and the fact that $\EEE[\frac1{\alpha_T}]
=\EEE[\sqrt{\frac{1+\GGG_T}{1+\frac{\tau}{T}\GGG_T}}] \le \EEE[\sqrt{1+\GGG_T}]:=\ZZ_T$.

We have:
\beq
\ts \EEE [\sqrt{\sum_{t=1}^T\|\nabla f(\x_t)\|^2} ] &\xixi{1}{\le}& \EEE[ \sqrt{2}\sqrt{\MMM_T}+\sqrt{2}\sqrt{\SSS_T} ] \nn\\
&\xixi{2}{\le}& \sqrt{2} \sqrt{C_v} \ln(e+T) \sqrt{\ZZ_T} + \sqrt{2} \sqrt{C_c \ln (e+T) } \cdot \sqrt{\ZZ_T} \nn\\
&\leq & \underbrace{\ts \big(\sqrt{2} \sqrt{C_v} + \sqrt{2} \sqrt{C_c } \big)}_{:=C_z} \cdot \ln(e+T) \sqrt{\ZZ_T}, \nn
\eeq
\noi where step \step{1} uses $\nabla f(\x_t)=\m_t-\s_t$; step \step{2} uses Inequalities (\ref{eq:STST}) and (\ref{eq:MTMT}).

\end{proof}

\subsection{Proof of Theorem \ref{the:it:EMA:M}}
\label{app:the:it:EMA:M}

\begin{proof}
Let $\ZZ_T:=\EEE[\sqrt{1+\GGG_T}]$.

\paragraph{Bounding the term $\ZZ_T$ and $\EEE[\sqrt{\sum_{t=1}^T\|\nabla f(\x_t)\|^2}]$.} By Lemma \ref{lemma:M:sum:m},
\beq \label{eq:EGGGG}
\ts \EEE[\sqrt{\sum_{t=1}^T\|\nabla f(\x_t)\|^2}] \le C_{z} \ZZ_T^{1/2} \cdot \ln(e+T).
\eeq
We further derive the following inequalities:
\beq
\ZZ_T &\xixi{1}{\le} & \ts 1+\sqrt{2} \EEE[\sqrt{\sum_{t=1}^T\|\nabla f(\x_t)\|^2}] +\sqrt{2} \EEE\left[\sqrt{\sum_{t=1}^T\|\nabla f(\x_t) - \nabla f(\x_t;\xi_t)\|^2}\right] \nn \\
&\xixi{2}{\le}& \ts 1 + \sqrt{2} C_z \sqrt{\ZZ_T} \ln(e+T) + \sqrt{2}\sigma\sqrt{T} \nn\\
&\xixi{3}{\le}& \ts 2\max(\sqrt{2} C_{z} \sqrt{\ZZ_T} \ln(e+T) , 1 + \sqrt{2}\sigma\sqrt{T}) \nn\\
&\le& \max( (2\sqrt{2} C_{z} \ln(e+T))^2, 2 + 2 \sqrt{2}\sigma\sqrt{T}) \nn\\
&\le&  (2+8 C_{z}^2) \ln^2(e+T) + 2 \sqrt{2}\sigma\sqrt{T}, \nn
\eeq 
\noi where step \step{1} uses $\|\g_t\|^2\le 2\|\nabla f(\x_t)\|^2+2\|\g_t-\nabla f(\x_t)\|^2$; step \step{2} uses Lemma \ref{lemma:M:sum:m}; step \step{3} uses $a+b\leq 2\max(a,b)$ for all $a,b\geq 0$. Putting this inequality back to Inequality (\ref{eq:EGGGG}) yields:
\beq \label{eq:EGGGG:2}
\ts \EEE[\sqrt{\sum_{t=1}^T\|\nabla f(\x_t)\|^2}] &\le& C_{z} \sqrt{(2+8 C_{z}^2) \ln^2(e+T) + 2 \sqrt{2}\sigma\sqrt{T}} \cdot \ln(e+T) \nn\\
&\leq & C_{z} \left( 2 + 3 C_z \right) \cdot \ln^2(e+T) + 2 C_{z} \sigma^{1/2} T^{1/4} \cdot \ln(e+T). \nn
\eeq

\paragraph{Bounding the term $\EEE[\frac1T\sum_{t=1}^T\|\nabla f(\x_t)\|]$.} We derive:
\beq
\ts \EEE[\tfrac{1}{T}\sum_{t=1}^T\|\nabla f(\x_t)\|] &\le& \ts \tfrac{1}{\sqrt{T}} \cdot \EEE[\sqrt{\sum_{t=1}^T\|\nabla f(\x_t)\|^2} ] \nn\\
&\leq & \ts  \tfrac{1}{\sqrt{T}}\cdot C_{z} \left( (2 + 3 C_z)\cdot \ln^2(e+T) + 2 \sigma^{1/2} T^{1/4} \cdot \ln(e+T)\right)  \nn\\
& = & \ts  \OO(T^{-1/2} \ln^2(e+T) + \sqrt{\sigma} T^{-1/4} \ln(e+T)). \nn
\eeq

\noi If $R$ is uniform over $\{1,\ldots,T\}$ and independent of the algorithmic randomness, then
\[
\EEE\|\nabla f(\x_R)\|
=\EEE\left[\frac1T\sum_{t=1}^T\|\nabla f(\x_t)\|\right].\nn
\]

\end{proof}

\section{Proofs for OptEMA-V}
\label{app:sect:it:EMA:V}

\subsection{Proof of Lemma \ref{lemma:bound:m:using:g:2}}
\label{app:lemma:bound:m:using:g:2}

\begin{proof}
By definition, the update rule is $\m_t = (1-\alpha_t) \m_{t-1} + \alpha_t \g_t$. Using the standard initialization $\m_0 = \mathbf{0}$, we apply the convexity of the squared $L_2$ norm to obtain:
\beq
\ts \|\m_t\|_2^2 \,\, = \,\, \ts \|(1 - \alpha_t)\m_{t-1} + \alpha_t \g_t\|_2^2 \,\,\leq\,\,\ts (1 - \alpha_t)\|\m_{t-1}\|_2^2 + \alpha_t \|\g_t\|_2^2.
\eeq
\noi Summing this inequality over $t = 1, \dots, T$ and using $\|\m\|_0^2 = 0$, we obtain:
\beq
0 &\leq&  \ts \big( \sum_{t=1}^T (1 - \alpha_t) \|\m_{t-1}\|^2 \big) + \big(\sum_{t=1}^T \alpha_t \|\g_{t}\|^2\big)  -\big(\sum_{t=1}^T \|\m_t\|^2\big) \nn\\
&= & \ts   - \|\m_{T}\|^2  - \big( \sum_{t=1}^{T-1} \alpha_{t+1} \|\m_{t}\|^2 \big)+ \big( \sum_{t=1}^T \alpha_t \|\g_{t}\|^2 \big) \nn\\
&\overset{\step{1}}{\leq}&  \ts  - \alpha_T \|\m_{T}\|^2  - \big( \alpha_T \sum_{t=1}^{T-1} \|\m_{t}\|^2 \big) + \big(\sum_{t=1}^T \alpha_1 \|\g_{t}\|^2 \big) \nn\\
&\overset{\step{2}}{=}& \ts  - \alpha_T \big( \sum_{t=1}^{T} \|\m_{t}\|^2 \big) + \big( \sum_{t=1}^T \alpha_1 \|\g_{t}\|^2\big), \nn
\eeq
\noi where step \step{1} uses the fact that $1>\alpha_{t+1} \geq \alpha_T $ for all $t < T$. This further leads to $\sum_{t=1}^T \|\m_{t}\|^2 \leq \tfrac{\alpha_1}{\alpha_T} \sum_{t=1}^T\|\g_{t}\|^2$.

\end{proof}
\subsection{Proof of Lemma \ref{lemma:bound:v}}
\label{app:lemma:bound:v}

\begin{proof} 

\noindent Since $\mathbf{v}_{t} = (1-\beta_t) \mathbf{v}_{t-1} + \beta_t \mathbf{g}_t \odot \mathbf{g}_t$, and assuming $\beta_t \in [0, 1]$, using the convexity of $\lVert \cdot \rVert$ and the inequality $\lVert \mathbf{x} \odot \mathbf{x} \rVert \leq \lVert \mathbf{x} \rVert^2$, we have:
    \begin{align*}
        \lVert \mathbf{v}_t \rVert ~\leq~ (1-\beta_t) \lVert \mathbf{v}_{t-1} \rVert + \beta_t \lVert \mathbf{g}_t \odot \mathbf{g}_t \rVert ~\leq ~\max \left( \lVert \mathbf{v}_{t-1} \rVert, \lVert \mathbf{g}_t \rVert^2 \right).
    \end{align*}

\noindent Using this inequality recursively, and assuming $\mathbf{v}_0 = \mathbf{0}$, we obtain:
\begin{align*}
    \ts \lVert \mathbf{v}_t \rVert ~\leq~ \max \left( \lVert \mathbf{v}_0 \rVert, \max_{i=1}^t \lVert \mathbf{g}_i \rVert^2 \right) ~= ~\max_{i=1}^t \lVert \mathbf{g}_i \rVert^2 ~\leq~ \hG^2.
\end{align*}

\paragraph{Bounding the term $\|\m_t\|$.} We derive:
\beq
\|\m_t\| \,\,\leq\,\, (1-\alpha_t) \|\m_{t-1}\| + \alpha_t \|\g_t\| \,\,\leq \,\, \max(\|\m_{t-1}\| , \|\g_t\|) \,\,\leq \,\, \max_{i=0}^t \|\g_i\|\,\,\leq \,\, \hG. \nn
\eeq
\noi Furthermore, we have:
\beq
\tfrac{\gamma_{t-1}^2}{\gamma_t^2}\, \, = \,\, \tfrac{1 + \MMM_t}{1 + \sum_{i=1}^{t-1} \|\m_i\|^2} \,\,\leq\,\, 1 + \|\m_t\|^2 \,\, \leq\,\, 1 + \hG^2. \nn
\eeq
\noi This leads to $\tfrac{\gamma_{t-1}}{\gamma_t} \leq 1+\hG$.

\end{proof}

\subsection{Proof of Lemma \ref{lemma:OptEMA:V:suff:dec}}
\label{app:lemma:OptEMA:V:suff:dec}

\begin{proof}
Let $\FF_t:=\sigma(\xi_1,\ldots,\xi_t)$. For OptEMA-V, we have
$\beta_t=\rho_t$ and
\[
\m_t=(1-\alpha_t)\m_{t-1}+\alpha_t\g_t,
\qquad
\v_t=(1-\rho_t)\v_{t-1}+\rho_t\g_t^2 .
\]
Here $\alpha_t\in(0,1]$ is a non-increasing real sequence and is independent of the current stochastic gradient $\g_t$.

\noi Define
\[
\Theta_t:=
-\left\langle
\nabla f(\x_t),
\frac{\theta\m_t}{\varepsilon+\sqrt{\v_t}}
\right\rangle,
\qquad
\Deltas_t:=
\frac{\theta\m_t}{\varepsilon+\sqrt{\v_{t-1}}}
-
\frac{\theta\m_t}{\varepsilon+\sqrt{\v_t}}.
\]
Let
\[
h_{t-1}:=\varepsilon+\sqrt{\max(\v_{t-1})},
\qquad
\varpi_t:=\frac{\theta\alpha_t}{4h_{t-1}}>0.
\]
We also set $\gamma_0:=1$ and $\Theta_0:=0$.

We rewrite $\Theta_t$ as
\beq \label{eq:ema:V:Theta:varalpha}
\Theta_t
&=&
\underbrace{
\left\langle
\nabla f(\x_t),
\Deltas_t
\right\rangle
}_{:=\Gamma_1}
+
\underbrace{
\left\langle
\nabla f(\x_t),
\frac{-\theta\m_t}{\varepsilon+\sqrt{\v_{t-1}}}
\right\rangle
}_{:=\Gamma_2}.
\eeq

\paragraph{Bounding the term $\Gamma_1$.}
We derive
\beq \label{eq:ema:V:Gamma:1:varalpha}
\Gamma_1
&\leq&
\|\nabla f(\x_t)\|\cdot\|\Deltas_t\|
\nn\\
&\overset{\step{1}}{\leq}&
\varpi_t\|\nabla f(\x_t)\|^2
+
\frac{1}{4\varpi_t}\|\Deltas_t\|^2
\nn\\
&\overset{\step{2}}{\leq}&
\varpi_t\|\nabla f(\x_t)\|^2
+
\frac{\theta^2}{4\varpi_t}
\|\m_t\|^2
\left\|
\frac{1}{\varepsilon+\sqrt{\v_t}}
-
\frac{1}{\varepsilon+\sqrt{\v_{t-1}}}
\right\|_\infty^2
\nn\\
&\overset{\step{3}}{\leq}&
\varpi_t\|\nabla f(\x_t)\|^2
+
\frac{\theta^2}{4\varpi_t\varepsilon^4}
\|\m_t\|^2
\|\v_t-\v_{t-1}\|_\infty
\nn\\
&\overset{\step{4}}{=}&
\varpi_t\|\nabla f(\x_t)\|^2
+
\frac{\theta^2}{4\varpi_t\varepsilon^4}
\|\m_t\|^2
\rho_t
\|\g_t^2-\v_{t-1}\|_\infty
\nn\\
&\overset{\step{5}}{\leq}&
\varpi_t\|\nabla f(\x_t)\|^2
+
\frac{\theta^2}{4\varpi_t\varepsilon^4}
\|\m_t\|^2
\cdot 2\rho_t \hG^2
\nn\\
&\overset{\step{6}}{\leq}&
\frac{\theta\alpha_t}{4h_{t-1}}
\|\nabla f(\x_t)\|^2
+
\frac{2\theta(\varepsilon+\hG) \hG^2}{\alpha_t\varepsilon^4}
\rho_t\|\m_t\|^2 .
\eeq
Here, step \step{1} uses Young's inequality; step \step{2} follows from the element-wise division bound; step \step{3} uses
\[
\left(
\frac{1}{\varepsilon+\sqrt a}
-
\frac{1}{\varepsilon+\sqrt b}
\right)^2
\leq
\frac{|a-b|}{\varepsilon^4},
\qquad a,b\geq 0;
\]
step \step{4} uses $\v_t-\v_{t-1}=\rho_t(\g_t^2-\v_{t-1})$; step \step{5} uses
$\|\g_t^2-\v_{t-1}\|_\infty\leq 2G^2$; and step \step{6} uses the choice of $\varpi_t$ and $\varepsilon\leq h_{t-1}\leq \varepsilon+\hG$.

\paragraph{Bounding the term $\Gamma_2$.}
Let $\P_{t-1}:=
\Diag\left(\frac{1}{\varepsilon+\sqrt{\v_{t-1}}}\right)$. Using the update rule of $\m_t$, we derive
\beq \label{eq:ema:V:Gamma:2:identity:varalpha}
\Gamma_2
&=&
-\theta\left\langle
\nabla f(\x_t),
\P_{t-1}\m_t
\right\rangle
\nn\\
&=&
-\theta(1-\alpha_t)
\left\langle
\nabla f(\x_t),
\P_{t-1}\m_{t-1}
\right\rangle
-
\theta\alpha_t
\left\langle
\nabla f(\x_t),
\P_{t-1}\g_t
\right\rangle
\nn\\
&=&
(1-\alpha_t)\Theta_{t-1}
+
\theta(1-\alpha_t)
\left\langle
\nabla f(\x_{t-1})-\nabla f(\x_t),
\P_{t-1}\m_{t-1}
\right\rangle
\nn\\
&&
-
\theta\alpha_t
\left\langle
\nabla f(\x_t),
\P_{t-1}\g_t
\right\rangle .
\eeq
Multiplying both sides by $\gamma_{t-1}$ and taking expectations, we get
\beq \label{eq:ema:V:Gamma:2:varalpha}
\EEE[\gamma_{t-1}\Gamma_2]
&=&
\EEE[(1-\alpha_t)\gamma_{t-1}\Theta_{t-1}]
\nn\\
&&
+
\EEE\left[
\theta(1-\alpha_t)\gamma_{t-1}
\left\langle
\nabla f(\x_{t-1})-\nabla f(\x_t),
\P_{t-1}\m_{t-1}
\right\rangle
\right]
\nn\\
&&
-
\EEE\left[
\theta\alpha_t\gamma_{t-1}
\left\langle
\nabla f(\x_t),
\P_{t-1}\g_t
\right\rangle
\right].
\eeq

For the second term, by the $L$-smoothness of $f$ and $\x_t-\x_{t-1}
= -\theta\gamma_{t-1} \frac{\m_{t-1}}{\varepsilon+\sqrt{\v_{t-1}}}$, we have
\beq \label{eq:ema:V:smooth:term:varalpha}
&& \ts \theta(1-\alpha_t)\gamma_{t-1}
\left\langle \nabla f(\x_{t-1})-\nabla f(\x_t),
\P_{t-1}\m_{t-1}
\right\rangle
\nn\\
&\leq& \ts \theta(1-\alpha_t)\gamma_{t-1}
\|\nabla f(\x_{t-1})-\nabla f(\x_t)\|\cdot\|\P_{t-1}\m_{t-1}\|
\nn\\
&\leq& \ts \theta(1-\alpha_t)\gamma_{t-1} L\|\x_t-\x_{t-1}\|
\cdot \frac{\|\m_{t-1}\|}{\varepsilon} \nn\\
&\leq& \ts \frac{\theta^2L}{\varepsilon^2}\gamma_{t-1}^2\|\m_{t-1}\|^2.
\eeq

For the third term, since $\alpha_t$ is independent of the current stochastic gradient $\g_t$, and $\gamma_{t-1}$, $\P_{t-1}$, $\x_t$ and $\nabla f(\x_t)$ are independent of $\xi_t$, Assumption \ref{ass:sigma} gives
\beq \label{eq:ema:V:unbiased:varalpha}
&&
\EEE\left[
\theta\alpha_t\gamma_{t-1}
\left\langle
\nabla f(\x_t),
\P_{t-1}\g_t
\right\rangle
\middle|\FF_{t-1}
\right]
\nn\\
&=&
\theta\alpha_t\gamma_{t-1}
\left\langle
\nabla f(\x_t),
\P_{t-1}\nabla f(\x_t)
\right\rangle
\nn\\
&\geq&
\frac{\theta\alpha_t\gamma_{t-1}}{h_{t-1}}
\|\nabla f(\x_t)\|^2 .
\eeq
Combining \eqref{eq:ema:V:Gamma:2:varalpha}, \eqref{eq:ema:V:smooth:term:varalpha}, and \eqref{eq:ema:V:unbiased:varalpha}, we obtain
\beq \label{eq:ema:V:Gamma:2:final:varalpha}
\ts \EEE[\gamma_{t-1}\Gamma_2]
\,\,\leq\,\, \EEE[(1-\alpha_t)\gamma_{t-1}\Theta_{t-1}]
+ \EEE\left[
\frac{\theta^2L}{\varepsilon^2}
\gamma_{t-1}^2\|\m_{t-1}\|^2
\right]  - \EEE\left[
\frac{\theta\alpha_t\gamma_{t-1}}{h_{t-1}}
\|\nabla f(\x_t)\|^2
\right].
\eeq

\paragraph{Bounding the weighted term $\Theta_t$.}
Multiplying \eqref{eq:ema:V:Theta:varalpha} by $\gamma_{t-1}$, taking expectations, and using
\eqref{eq:ema:V:Gamma:1:varalpha} and \eqref{eq:ema:V:Gamma:2:final:varalpha}, we have
\beq \label{eq:ema:V:Theta:weighted:varalpha}
\EEE[\gamma_{t-1}\Theta_t]
&\leq&
\EEE[(1-\alpha_t)\gamma_{t-1}\Theta_{t-1}]
\nn\\
&&
-
\EEE\left[
\frac{3\theta\alpha_t\gamma_{t-1}}{4h_{t-1}}
\|\nabla f(\x_t)\|^2
\right]
\nn\\
&&
+
\EEE\left[
\frac{\theta^2L}{\varepsilon^2}
\gamma_{t-1}^2\|\m_{t-1}\|^2
+
\frac{2\theta(\varepsilon+\hG)\hG^2}{\alpha_t\varepsilon^4}
\gamma_{t-1}\rho_t\|\m_t\|^2
\right]
\nn\\
&\overset{\step{1}}{\leq}&
\EEE[(1-\alpha_t)\gamma_{t-1}\Theta_{t-1}]
+
\EEE[ E_t-D_t ].
\eeq
Here, step \step{1} uses $h_{t-1}\leq \varepsilon+\hG$ and the definitions of $E_t$ and $D_t$.

\paragraph{Unrolling the weighted recursion.}
Define
\[
c_0:=\frac{\theta \hG^2}{\varepsilon}.
\]
Since $\|\nabla f(\x_t)\|\leq \hG$, $\|\m_t\|\leq \hG$, and
$\varepsilon+\sqrt{\v_t}\geq \varepsilon$, we have
\[
|\Theta_t|\leq c_0.
\]
Moreover, $\gamma_t=(1+\sum_{j=1}^t\|\m_j\|^2)^{-1/2}$ is decreasing. Hence, for $t\geq2$,
\beq \label{eq:shift:gamma:theta:varalpha}
\gamma_{t-1}\Theta_{t-1}
&\leq&
\gamma_{t-2}\Theta_{t-1}
+
c_0(\gamma_{t-2}-\gamma_{t-1}).
\eeq
Combining \eqref{eq:ema:V:Theta:weighted:varalpha} and \eqref{eq:shift:gamma:theta:varalpha} yields, for $t\geq2$,
\beq \label{eq:Phi:recursion:varalpha}
\EEE[\gamma_{t-1}\Theta_t]
&\leq&
(1-\alpha_t)\EEE[\gamma_{t-2}\Theta_{t-1}]
+
\EEE[E_t-D_t]
\nn\\
&&
+
(1-\alpha_t)c_0\EEE[\gamma_{t-2}-\gamma_{t-1}].
\eeq
For $t=1$, the same inequality holds without the last residual term because $\Theta_0=0$.

Since $\alpha_t$ is a real sequence, the coefficients $\{1-\alpha_t\}$ can be unrolled directly. Using $\ts \Pi_{j,t}:=\prod_{k=j+1}^{t}(1-\alpha_k)$, with $\Pi_{t,t}=1$, we obtain
\beq \label{eq:LemTheta_weighted:varalpha}
\ts \EEE[\gamma_{t-1}\Theta_t] &\leq& \ts \EEE\left[
\sum_{j=1}^{t} \Pi_{j,t} (E_j-D_j) \right]  \ts + c_0\EEE\left[\sum_{j=2}^{t} \Pi_{j,t}(1-\alpha_j)(\gamma_{j-2}-\gamma_{j-1})\right].
\eeq

\paragraph{Bounding the decrease.}
By the $L$-smoothness of $f$, we have
\beq \label{eq:dec:replace:gamma:varalpha}
\EEE[f(\x_{t+1})-f(\x_t)] &\leq& \ts \EEE\left[
\frac{L}{2}\|\x_{t+1}-\x_t\|^2 + \left\langle
\nabla f(\x_t),\x_{t+1}-\x_t \right\rangle\right] \nn\\
&=& \ts \EEE\left[\frac{L\theta^2}{2}\gamma_t^2
\left\|\frac{\m_t}{\varepsilon+\sqrt{\v_t}}
\right\|^2+\gamma_t\Theta_t \right] \nn\\
&\leq& \ts \EEE\left[\frac{L\theta^2}{2\varepsilon^2}
\gamma_t^2\|\m_t\|^2 + \gamma_{t-1}\Theta_t + c_0(\gamma_{t-1}-\gamma_t) \right],
\eeq
where the last inequality uses $\gamma_t\leq\gamma_{t-1}$ and $|\Theta_t|\leq c_0$.

Combining \eqref{eq:dec:replace:gamma:varalpha} and \eqref{eq:LemTheta_weighted:varalpha}, we get
\beq
\ts \EEE[f(\x_{t+1})-f(\x_t)] &\leq& \ts \EEE\left[\frac{L\theta^2}{2\varepsilon^2}\gamma_t^2\|\m_t\|^2 \right] +
\EEE\left[
\sum_{j=1}^{t}
\Pi_{j,t} (E_j-D_j)
\right]
\nn\\
&& \ts + c_0\EEE[\gamma_{t-1}-\gamma_t] + c_0\EEE\left[
\sum_{j=2}^{t} \Pi_{j,t}(1-\alpha_j)(\gamma_{j-2}-\gamma_{j-1})
\right].
\eeq
This proves the desired one-step descent inequality.
\end{proof}

\subsection{Proof of Lemma \ref{lemma:alpha:b:gamma}}
\label{app:lemma:alpha:b:gamma}

\begin{proof} Let $S$ denote the middle term of the inequality: $S := \sum_{t=1}^T  \sum_{j=1}^{t}  \left( \prod_{k=j+1}^t (1-\alpha_k) \right) A_j$.

\paragraph{Lower Bound.} By standard mathematical convention, when $j=t$, the empty product is defined as $\prod_{k=t+1}^t (1-\alpha_k) = 1$. Since $\alpha_k \in (0,1]$, it holds that $(1-\alpha_k) \geq 0$. Given that $A_j \geq 0$, all terms in the inner summation are strictly non-negative. By retaining only the final term of the inner sum (i.e., where $j=t$), we establish the lower bound:
\beq
\ts S \geq \sum_{t=1}^T \left( \prod_{k=t+1}^t (1-\alpha_k) \right) A_t =  \sum_{t=1}^T A_t.
\eeq

\paragraph{Upper Bound.} To establish the upper bound, we first exchange the order of summation in $S$:
\beq \label{eq:S_swapped}
\ts S = \sum_{j=1}^T A_j \sum_{t=j}^T \prod_{k=j+1}^t (1-\alpha_k).
\eeq
Let us define the inner summation as $C_j := \sum_{t=j}^T \prod_{k=j+1}^t (1-\alpha_k)$. Since $\alpha_k$ is a decreasing sequence, we have $\alpha_k \geq \alpha_T$ for all $k \leq T$, which implies $1-\alpha_k \leq 1-\alpha_T < 1$.

Using these monotonic properties, we can upper bound $C_j$ by replacing the product with a geometric progression:
\beq
\ts C_j \leq  \sum_{t=j}^T \prod_{k=j+1}^t (1-\alpha_T) =  \sum_{t=j}^T (1-\alpha_T)^{t-j}. \nn
\eeq
By applying a change of variables $i = t-j$, we can upper bound this finite sum by an infinite geometric series:
\beq
\ts C_j \leq \sum_{i=0}^{T-j} (1-\alpha_T)^i \leq \sum_{i=0}^{\infty} (1-\alpha_T)^i = \frac{1}{1 - (1-\alpha_T)} = \frac{1}{\alpha_T}. \nn
\eeq
Finally, substituting this upper bound $C_j \leq \frac{1}{\alpha_T}$ back into Equation (\ref{eq:S_swapped}), we obtain:
$$ \ts S = \sum_{j=1}^T A_j C_j \leq \sum_{j=1}^T A_j \left(\frac{1}{\alpha_T}\right) = \frac{1}{\alpha_T} \sum_{t=1}^T  A_t. $$
This completes the proof.
\end{proof}

\subsection{Proof of Lemma \ref{lemma:OptEMAV:G:bound}}
\label{app:lemma:OptEMAV:G:bound}

\begin{proof}
Let $F_t:=f(\x_t)-f_*$, $\GGG_t:=\sum_{i=1}^t\|\g_i\|^2$, $\MMM_t:=\sum_{i=1}^t\|\m_i\|^2$.

\noi Define $E_t := c_1\gamma_{t-1}^2 \|\m_{t-1}\|^2 + c_2 \tfrac{\rho_t}{\alpha_t}
\gamma_{t-1}\|\m_t\|^2$, and $D_t := c_4 \alpha_t \gamma_{t-1} \|\nabla f(\x_t)\|^2$.

\noi For OptEMA-V, $\alpha_t=\alpha$, $\beta_t=\rho_t$, and $\gamma_t=(1+\MMM_t)^{-1/2}$.  

\noi Define $C_e:=\frac{c_1}{\alpha} + (1+\hG)^2 \cdot \frac{c_2}{\alpha^2}$

\paragraph{Bound the term $\sum_{t=1}^T\gamma_t^2\|\m_t\|^2$.} We have:
\beq
\ts \sum_{t=1}^T\gamma_t^2\|\m_t\|^2 &\xixi{1}{=}& \ts \sum_{t=1}^T\frac{\|\m_t\|^2}{1+\MMM_t}
\,\,\leq \,\,\ln(1+\MMM_T)\,\, \xixi{2}{\le}\,\, \ln(e+\GGG_T) , \nn
\eeq
\noi where step \step{1} uses $\gamma_t=(1+\MMM_t)^{-1/2}$; step \step{2} uses $\MMM_t\le \GGG_t$, which is implied by Lemma \ref{lemma:bound:m:using:g:2} since $\alpha_t$ is constant.

\paragraph{Bounding the term $\sum_{t=1}^{T}\sum_{j=1}^{t}(1-\alpha)^{t-j}E_j$.} We have:
\beq
\ts \sum_{t=1}^{T}\sum_{j=1}^{t}(1-\alpha)^{t-j}E_j &\xixi{1}{\le}& \ts  \frac1\alpha\sum_{t=1}^TE_t \nn\\
&\xixi{}{=}& \ts  \big( \frac{c_1}{\alpha} \sum_{t=1}^T  \gamma_{t-1}^2 \|\m_{t-1}\|^2 \big) + \big( \frac{c_2}{\alpha^2}  \sum_{t=1}^T \rho_t \gamma_{t-1}\|\m_t\|^2 \big) \nn\\
&\xixi{2}{\leq}& \ts  \big( \frac{c_1}{\alpha} \sum_{t=1}^T  \gamma_{t}^2 \|\m_{t}\|^2 \big) + \big( \frac{c_2}{\alpha^2} \sum_{t=1}^T \tfrac{\sqrt{1 + \tfrac{\tau}{t} \GGG_t}}{\sqrt{1 + \GGG_t}} \gamma_{t-1}\|\m_t\|^2 \big) \nn\\
&\xixi{3}{\leq}& \ts  \big( \frac{c_1}{\alpha} \sum_{t=1}^T  \tfrac{\|\m_{t}\|^2}{1 + \MMM_t}  \big) + (1 + \hG)^2 \cdot \big( \frac{c_2}{\alpha^2}  \sum_{t=1}^T \tfrac{\gamma_{t}\|\m_t\|^2}{\sqrt{1 + \GGG_t}}  \big) \nn\\
&\xixi{}{\leq}& \ts \underbrace{\ts ( \frac{c_1}{\alpha} + (1+\hG)^2 \cdot \frac{c_2}{\alpha^2})}_{:=C_e}\cdot \ln(e+\GGG_T), \nn
\eeq
\noi where step \step{1} uses the geometric summation induced by the fixed EMA coefficient; step \step{2} uses $\sum_{t=1}^T\gamma_{t-1}^2\|\m_{t-1}\|^2
\le \sum_{t=1}^T\gamma_t^2\|\m_t\|^2$, $\rho_t \le \frac{1+\sqrt{\tau}\hG}{\sqrt{1+\GGG_t}}\le \frac{1+\hG}{\sqrt{1+\GGG_t}}$, and $\gamma_{t-1}\le(1+\hG)\gamma_t$ (see Lemma \ref{lemma:bound:v}).

\paragraph{Bounding the residual terms.} The residual terms in Lemma \ref{lemma:OptEMA:V:suff:dec} satisfy
\beq
&& \ts c_0\sum_{t=1}^{T}(\gamma_{t-1}-\gamma_t)+(1-\alpha)c_0\sum_{t=1}^{T}\sum_{j=2}^{t}(1-\alpha)^{t-j}(\gamma_{j-2}-\gamma_{j-1}) \nn \\
&\le& \ts c_0+\frac{(1-\alpha)c_0}{\alpha} \,\, \le\,\, \frac{c_0}{\alpha},\nn
\eeq
where we used the monotonicity of $\gamma_t$ and $\gamma_0=1$.

\paragraph{Expected objective growth.}
Dropping the non-positive term involving $D_j$ in Lemma \ref{lemma:OptEMA:V:suff:dec}, summing from $t=1$ to $T-1$, and using the preceding bounds gives
\beq
\EEE[F_T] &\le & \ts \EEE[F_1+\frac{c_0}{\alpha} +  \big(c_3 + C_e \big) \cdot \ln(e+\GGG_T) ]\nn\\
&\xixi{1}{\leq}& \underbrace{ \ts ( F_1+\frac{c_0}{\alpha} + \big(c_3 + C_e \big) \cdot \ln(e+\hG^2) )}_{:=C_f} \cdot \ln(e+T), \nn
\eeq
\noi where step \step{1} uses $\ln(e + \GGG_T) \leq \ln (e+ \hG^2T)\leq \ln (e+ \hG^2)\ln(e+T)$, which can be implied by Assumption \ref{ass:BG}.

\end{proof}

\subsection{Proof of Lemma \ref{lemma:bound:sum:nabla}}
\label{app:lemma:bound:sum:nabla}

\begin{proof}
Let $A_T:=\sum_{t=1}^T\|\nabla f(\x_t)\|^2$, and $\MMM_t:=\sum_{j=1}^t\|\m_j\|^2$.

\noi Define $H_T:=\sum_{t=1}^T\gamma_{t-1}\|\nabla f(\x_t)\|^2$ with $\gamma_0:=1$. 

\noi Let $F_t:=f(\x_t)-f_*$, and $C_h:= \frac{1}{c_4\alpha}(C_e + F_1+c_3 + \tfrac{c_0}{\alpha}) \ln (e+\hG^2)$.

\paragraph{Bounding the term $\sum_{t=1}^T \gamma_t^2 \|\m_t\|^2$.} We derive:
\beq \label{eq:gamma2:m2}
\ts \sum_{t=1}^T \gamma_t^2 \|\m_t\|^2 &\leq & \ts \sum_{t=1}^T \tfrac{\|\m_t\|^2}{1 + \MMM_t} \leq \ln(e + \GGG_T).  
\eeq

 \paragraph{Bounding the term $\EEE[H_T]$.} Summing Lemma \ref{lemma:OptEMA:V:suff:dec} over $t=1,\ldots,T$ and moving the negative terms $D_j$ to the left gives
\beq
\ts 0 &\le& \ts  \EEE [ \big(\sum_{t=1}^T\sum_{j=1}^t(1-\alpha)^{t-j} (E_j-D_j)  \big) + F_1 + c_3 \big(\sum_{t=1}^T\gamma_t^2\|\m_t\|^2\big) 
 + \frac{c_0}{\alpha}] \nn\\
&\xixi{1}{\leq} & \ts   \EEE [ -\big(\sum_{t=1}^T\sum_{j=1}^t(1-\alpha)^{t-j} D_j  \big)   + C_e \ln (e+\GGG_T) + F_1 + c_3 \ln(e+\GGG_T) + \frac{c_0}{\alpha} ] \nn\\
&\xixi{2}{\leq} & \ts -  \EEE [ \big(\sum_{t=1}^T D_t  \big) ] + (C_e + F_1+c_3 + \tfrac{c_0}{\alpha}) \ln (e+\hG^2 T)  \nn\\
&\xixi{3}{\leq} & \ts -  \EEE [ c_4 \alpha H_T ] + (C_e + F_1+c_3 + \tfrac{c_0}{\alpha}) \ln (e+\hG^2) \ln (e+T) , \nn
\eeq
\noi where step \step{1} uses the pathwise bound in Lemma \ref{lemma:OptEMAV:G:bound} that $\sum_{t=1}^T\sum_{j=1}^t(1-\alpha)^{t-j} E_j\leq C_e \ln(e+\GGG_T)$ a.s., and Inequality (\ref{eq:gamma2:m2}); step \step{2} uses $\sum_{t=1}^T\sum_{j=1}^t(1-\alpha)^{t-j}D_j
\ge\sum_{t=1}^TD_t$; step \step{3} uses the definition of $H_T$. This further leads to:
\beq
\EEE[H_T] \le \frac{(C_e + F_1+c_3 + \tfrac{c_0}{\alpha}) \ln (e+\hG^2) \ln (e+T)}{c_4\alpha} = C_h \ln(e+T). \label{eq:HT:ln}
\eeq

\paragraph{Bounding the term $\EEE[\sqrt{A_T}]$.} Since $H_T\ge \gamma_TA_T
=\frac{A_T}{\sqrt{1+\MMM_T}}$, we derive: $\sqrt{A_T}\le \sqrt{H_T}\,(1+\MMM_T)^{1/4}$. By Cauchy--Schwarz, we have:
\beq
\EEE[\sqrt{A_T}] &\le & \ts \left(\EEE[H_T]\right)^{1/2}
\left(\EEE[\sqrt{1+\MMM_T}]\right)^{1/2} \nn\\
&\xixi{1}{\leq} & \sqrt{C_h} \ln^{1/2}(e+T) \cdot \left(\EEE[\sqrt{1+\GGG_T}]\right)^{1/2} ,\nn
\eeq
\noi where step \step{1} uses Inequality (\ref{eq:HT:ln}), and $\MMM_T\le\GGG_T$.
   
\end{proof}

\subsection{Proof of Theorem \ref{the:it:EMA:V}}
\label{app:the:it:EMA:V}

\begin{proof} Let $\ZZ_T:=\EEE[\sqrt{1+\GGG_T}]$.

\paragraph{Bounding the term $\ZZ_T$ and $\EEE[\sqrt{\sum_{t=1}^T\|\nabla f(\x_t)\|^2}]$.} By Lemma \ref{lemma:bound:sum:nabla},
\beq \label{eq:EGGGG:V}
\ts \EEE[\sqrt{\sum_{t=1}^T\|\nabla f(\x_t)\|^2}] \le C_{z} \ZZ_T^{1/2} \cdot \ln^{1/2}(e+T).
\eeq
We further derive the following inequalities:
\beq
\ZZ_T &\xixi{1}{\le} & \ts 1+\sqrt{2} \EEE[\sqrt{\sum_{t=1}^T\|\nabla f(\x_t)\|^2}] +\sqrt{2} \EEE\left[\sqrt{\sum_{t=1}^T\|\nabla f(\x_t) - \nabla f(\x_t;\xi_t)\|^2}\right] \nn \\
&\xixi{2}{\le}& \ts 1 + \sqrt{2} C_z \sqrt{\ZZ_T} \ln^{1/2}(e+T) + \sqrt{2}\sigma\sqrt{T} \nn\\
&\xixi{3}{\le}& \ts 2\max(\sqrt{2} C_{z} \sqrt{\ZZ_T} \ln^{1/2}(e+T) , 1 + \sqrt{2}\sigma\sqrt{T}) \nn\\
&\le& \max\left( 8 C_{z}^2 \ln(e+T), 2 + 2 \sqrt{2}\sigma\sqrt{T}\right) \nn\\
&\le&  (2+8 C_{z}^2) \ln(e+T) + 2 \sqrt{2}\sigma\sqrt{T}, \nn
\eeq 
\noi where step \step{1} uses $\|\g_t\|^2\le 2\|\nabla f(\x_t)\|^2+2\|\g_t-\nabla f(\x_t)\|^2$; step \step{2} uses Lemma \ref{lemma:bound:sum:nabla}; step \step{3} uses $a+b\leq 2\max(a,b)$ for all $a,b\geq 0$. Putting this inequality back to Inequality (\ref{eq:EGGGG:V}) yields:
\beq \label{eq:EGGGG:2}
\ts \EEE[\sqrt{\sum_{t=1}^T\|\nabla f(\x_t)\|^2}] &\le& C_{z} \sqrt{(2+8 C_{z}^2) \ln (e+T) + 2 \sqrt{2}\sigma\sqrt{T}} \cdot \ln^{1/2}(e+T) \nn\\
&\leq & C_{z} \left( 2 + 3 C_z \right) \cdot \ln(e+T) + 2 C_{z} \sigma^{1/2} T^{1/4} \cdot \ln^{1/2}(e+T). \nn
\eeq

\paragraph{Bounding the term $\EEE[\frac1T\sum_{t=1}^T\|\nabla f(\x_t)\|]$.} We derive:
\beq
\ts \EEE[\tfrac{1}{T}\sum_{t=1}^T\|\nabla f(\x_t)\|] &\le& \ts \tfrac{1}{\sqrt{T}} \cdot \EEE[\sqrt{\sum_{t=1}^T\|\nabla f(\x_t)\|^2} ] \nn\\
&\leq & \ts  \tfrac{1}{\sqrt{T}}\cdot C_{z} \left( (2 + 3 C_z)\cdot \ln(e+T) + 2 \sigma^{1/2} T^{1/4} \cdot \ln^{1/2}(e+T)\right)  \nn\\
& = & \ts  \OO(T^{-1/2} \ln(e+T) + \sqrt{\sigma} T^{-1/4} \ln^{1/2}(e+T)). \nn
\eeq

\noi If $R$ is uniform over $\{1,\ldots,T\}$ and independent of the algorithmic randomness, then
\[
\EEE\|\nabla f(\x_R)\|
=\EEE\left[\frac1T\sum_{t=1}^T\|\nabla f(\x_t)\|\right].\nn
\]

\end{proof}

\end{document}